\newcommand{\mytab}{
\centering
\resizebox{0.35\columnwidth}{!}{
  \begin{tabular}{cccccccccccc}
    \toprule
    Community \# & 0 & 1 & 2& 3 & 4  \\
    $\sf{CC_G}(V_{\ell})$ & -0.14 & 0.07 & 0.2 & -0.15 & -0.09 \\
    $\sf{CC_G}(V_{\ell} \cap S_{0.2})$ & 0.1 & 0.05 & 0.35 & 0.05 & 0.19 \\
    \midrule
    Community \# &  5 & 6 & 7 & 8 & 9 \\
    $\sf{CC_G}(V_{\ell})$ & 0.04 & 0.02 & -0.2 & -0.3 & -0.26\\
    $\sf{CC_G}(V_{\ell} \cap S_{0.2})$ &  0.5 & 0.15 & -0.06 & 0.4 & -0.1\\
    \bottomrule
  \end{tabular}
  }
}
\newcommand{\OO}{\mathcal{O}}
\newcommand{\CC}{\ensuremath{\sf CC}}
\newcommand{\avg}{\ensuremath{\sf average}}
\definecolor{corall}{RGB}{200,100,100}
\definecolor{teall}{RGB}{0,212,212}
\newtheorem{theorem}{Theorem}[section]
\newtheorem{fact}[theorem]{Fact}
\newtheorem{lemma}[theorem]{Lemma}
\newtheorem{definition}[theorem]{Definition}
\definecolor{darkolivegreen}{rgb}{0.33, 0.42, 0.18}
\definecolor{limegreen}{rgb}{0.2, 0.8, 0.2}
\definecolor{darkgreen}{rgb}{0.0, 0.2, 0.13}
\title{A multi-core periphery perspective: Ranking via relative centrality}
\author{ Chandra Sekhar Mukherjee \thanks{Research supported by NSF CAREER award 2141536.}\\
	Thomas Lord Department of Computer Science\\
	University of Southern California\\
\texttt{chandrasekhar.mukherjee07@gmail.com} \\
	\And
Jiapeng Zhang  \thanks{Research supported by NSF CAREER award 2141536.}\\
	Thomas Lord Department of Computer Science\\
	University of Southern California\\
\texttt{jiapengz@usc.edu}
}
\date{\today}
\begin{document}

\maketitle

\begin{abstract}
Community and core-periphery are two widely studied graph structures, with their coexistence observed in real-world graphs (Rombach, Porter, Fowler \& Mucha [SIAM J. App. Math. 2014, SIAM Review 2017]). However, the nature of this coexistence is not well understood and has been pointed out as an open problem (Yanchenko \& Sengupta [Statistics Surveys, 2023]). Especially, the impact of inferring the core-periphery structure of a graph on understanding its community structure is not well utilized. In this direction, we introduce a novel quantification for graphs with ground truth communities, where each community has a densely connected part (the core), and the rest is more sparse (the periphery), with inter-community edges more frequent between the peripheries.

Built on this structure, we propose a new algorithmic concept that we call \emph{relative centrality} to detect the cores. We observe that core-detection algorithms based on popular centrality measures such as PageRank and degree centrality can show some \emph{bias} in their outcome by selecting very few vertices from some cores. We show that relative centrality solves this bias issue and provide theoretical and simulation support, as well as experiments on real-world graphs.

Core detection is known to have important applications with respect to core-periphery structures. In our model, we show a new application: 
relative-centrality-based algorithms can select a subset of the vertices such that it contains 
\emph{sufficient vertices} from all communities, and points in this subset are \emph{better separable} into their respective communities. We apply the methods to 11 biological datasets, with our methods resulting in a more \emph{balanced} selection of vertices from all communities such that clustering algorithms have better performance on this set.
\end{abstract}

\section{Introduction}

Understanding the underlying structures of data is of immense importance in unsupervised learning. Improved 
characterization and exploitation of such structures can lead to better inference of properties in data. 
In this paper, we specifically focus on unsupervised learning on graphs. Here, one of the most popular inference tasks is clustering~\cite{clustering-survey-1,clustering-survey-2}, where one aims to partition a set of vertices into different groups such that vertices in the same group are \emph{similar}. Often, one assumes the vertices have a ground truth partition into some $z$ many true/underlying communities, and the goal is to recover a partition as close as possible to the hidden partition~\cite{modularity-1,SBM-survey-abbe,SBM-survey-directed}. 

In this context, one of the most commonly observed and widely expected structures is that vertices from the same underlying community are more likely to be connected by edges than vertices from different communities. This is often called the \emph{community structure}. This structural observation has inspired important clustering objectives such as modularity~\cite{modularity-1,modularity-2,modularity-3} and correlation clustering~\cite{correlation-clustering} and also popular probabilistic models such as the stochastic block model (SBM)~\cite{SBM-survey-abbe}. Modularity-based clustering algorithms such as Louvain~\cite{louvain} and Leiden~\cite{leiden} are also known to be powerful in real-world settings~\cite{seurat}. 


However, there remain many real-world applications where the state-of-the-art clustering algorithms leave scope for improvement. 
For example, consider single-cell data, which is a type of biological \emph{vector dataset} that has gained popularity in the last 10 years due to applications in immunology~\cite{sc-immunology}, cancer biology~\cite{sc-cancer-biology}, neuroscience~\cite{sc-neuroscience}, and others. Here, one of the most important tasks is to cluster the cells (data) into different cell types (communities). The state-of-the-art clustering algorithm for this domain is widely considered to be Seurat~\cite{seurat}, which is a variant of the application of Louvain to the \emph{shared nearest neighbor} embedding of data, which is an extension of the popular \emph{K-NN embedding}. Though Seurat performs well in some datasets, its performance in hard datasets remains sub-optimal, often with an NMI and purity score of less than 0.7.

In this paper, we aim to explore further structures going beyond community structures that can aid us in developing algorithms to improve downstream applications such as clustering and others. 

We now turn our focus on a different kind of structure called core-periphery (CP) structure, which has been extensively studied in the network analysis literature in the last two decades~(see the surveys \cite{core-periphery-survey,core-periphery-survey-revisited,CP-survey-new} and the references therein). Here, the graph is assumed to contain a dense subgraph (the core), and the other vertices (forming the periphery) are sparsely connected to each other, as well as the core. CP structures are observed widely in social networks, academic citation networks, and others~\cite{cp-sociology-e1,cp-international-e1,cp-international-e2,cp-citation-e1}. A primary task in the CP structure is to \emph{detect the core} in the graph, which has diverse applications in understanding cognitive behavior~\cite{CP-cogonitive}, online amplification~\cite{CP-online}, and others. Here, most of the work is focused on \emph{single-core} setting.
Recently, some papers have observed the existence of multiple cores in graphs~\cite{MCPC-wang11-identify,MCPC-directed,multiple-core-periphery}, and also coexistence of CP and community structure~\cite{MCPC-hybrid-1,MCPC-overlap-is-dense},
with the recent survey \cite{CP-survey-new} noting ``a better understanding of the interplay between community and CP structure'' as one of the open problems. Against this background, our contributions are as follows.


\begin{figure}
    \centering
\includegraphics[scale=0.14]{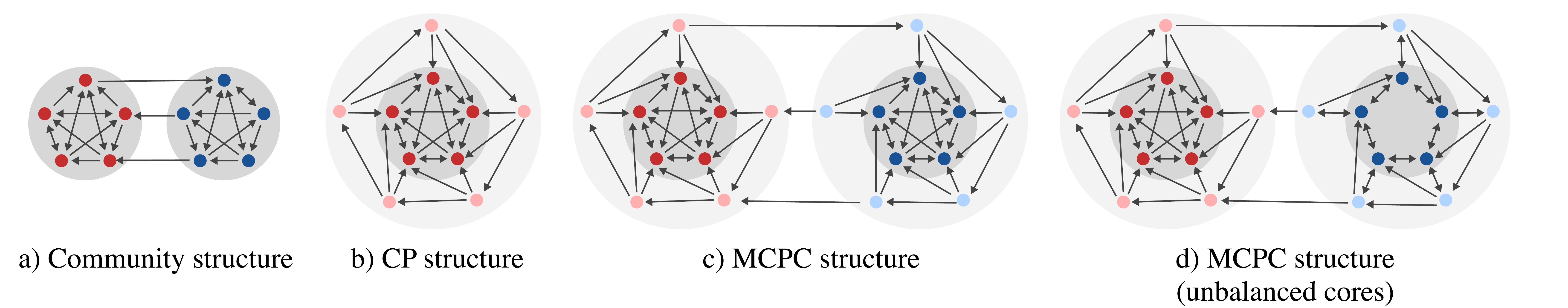}
 \caption{Different structures in $3$-regular directed graphs}
    \label{fig:CP-generic}
\end{figure}

\subsection{Contributions}
\begin{enumerate}
    \item We formalize 
    the coexistence of community and core-periphery structures with
    the \emph{multi-core-periphery with communities} ($\sf{MCPC}$) structure and observe that a large class of real-world data fits this structure.

    \item We design a novel algorithmic concept called \emph{relative centrality} to detect cores in $\sf{MCPC}$ structures and show our algorithms address a previously unexplored unbalancedness issue.

    
    \item We apply our algorithms to a large set of real-world datasets and show that we can select subsets of the data on which clustering algorithms have significantly improved performance.

\end{enumerate}

Now, we elaborate on these points, followed by a comparison with existing works.

\textbf{1. The $\sf{MCPC}$ structure.} Consider a graph $G(V,E)$ with a hidden partition into communities $V_1,\ldots, V_{z}$. We say that the graph satisfies an $\sf{MCPC}$-structure w.r.t the hidden partition if: each set $V_{\ell}$ has a further partition into a \emph{core} $V_{\ell,1}$ and \emph{periphery} $V_{\ell,0}$ such that most edges starting in a core end up in that core itself (the cores are \emph{densely connected}). The peripheries are \emph{loosely connected}, with inter-community edges originating \emph{more frequently} in the peripheries. We give a schematic representation of community structure, CP structure, and MCPC in Figure~\ref{fig:CP-generic} (denoting communities with color and the core with deeper shade). We formalize this intuition in Definition \ref{def: MCPC}. 


\textbf{$\sf{MCPC}$ in the real world:} $\sf{MCPC}$ structure can arise in many interesting real-world scenarios. In a vector dataset with an underlying community structure, if each community has a dense central region and a sparser region around it so that the community overlaps happen in these sparser regions, then we observe that the graph embedding can have $\sf{MCPC}$ structure.  

For example, in single-cell data, each cell is represented by a vector where each dimension corresponds to a particular gene's expression value for this cell. Here, cells belonging to the same cell type will have similar recorded gene expressions. Many cells have a clear cell type, forming a dense region (which will be the core in the $K$-NN embedding). In contrast, the remaining cells have more variance in their expressions, both due to biological and technical factors. For example, CD4-positive and CD8-positive are two important cell types in the immunology literature. However, it has been observed some cells, namely CD4+CD8+ double positive cells, can have similarity of both the cell types~\cite{parel2004cd4+}.
Also, some cells will be more adversely affected by the experimental noise. These cells will not be close to any of the aforementioned dense regions, forming the peripheries in the $K$-NN embedding. 
We give experimental evidence that $\sf{MCPC}$ structure occurs in many single-cell datasets in Section~\ref{sec:single-cell}.


Beyond the applications of this paper, $\sf{MCPC}$ structure may also be relevant to certain social choice theory scenarios. For example, if we consider the social network of US population through the lens of political affiliation, the resultant data should exhibit some $\sf{MCPC}$ structure. This is because there will be people who are firmly committed to one political affiliation, either Democrats or Republicans; these people will form the core of each community. In contrast, more neutral people will align with different affiliations for different topics, forming the peripheries. 
Some recent papers have observed that social networks such as Twitter can have multiple cores~\cite{multiple-core-periphery,multi-cp-structure-2}.

\textbf{Importance of core selection.} 
The first benefit is that if one can detect the cores in a graph satisfying the $\sf{MCPC}$ structure, the induced subgraph becomes more \emph{separable}. This allows downstream applications, such as clustering algorithms, to separate these core vertices better into their respective communities. 
Secondly, we also note that periphery vertices, in many cases, can be meaningful w.r.t to the data. For example, periphery vertices may have similarities to multiple communities (the CD4+CD8+double-positive cells being an example). We believe an improved clustering of the cores into respective communities can also help us understand the implication of periphery vertices better.

\textbf{2. Detecting cores in a balanced manner via relative centrality.} Now our goal is to recover (at least parts of) the cores of each community. In this direction, detecting cores in a single CP structure has a rich literature \cite{CP-survey-new}. Here \emph{centrality measure-based algorithms}
such as degree centrality~\cite{degree-centrality} and PageRank~\cite{pagerank-original} are very popular for ranking vertices according to their coreness, as noted in the comprehensive survey \cite{CP-survey-new} and can be directly applied to graphs with multiple cores. Indeed, we observe that these algorithms perform well by ranking core vertices first in graphs with an $\sf{MCPC}$ structure (we refer to such graphs as $\sf{MCPC}$ graphs).


However, centrality methods can still have a shortcoming when applied to $\sf{MCPC}$ graphs. For example, if we apply the degree centrality to Figure~\ref{fig:CP-generic}(d), the top of the ranking is dominated by red-colored nodes, even though there are two cores (red and blue) of the same size. This implies that degree centrality produces a \emph{biased}/\emph{unbalanced} ranking of the nodes. Note that the downstream algorithm will not know the total number of cores or core vertices. In such a case, if one selects points from the top of the ranking, it may lead to an unbalanced representation of the communities. That is, we may end up selecting very few vertices from a community, and this can make it difficult to interpret/separate that particular community. For example, in single-cell data, one of the goals post-clustering is to observe the variance in the behavior of genes across communities. If we only have a few points of some community in our selected set, such inference may be harder. We observe this bias in traditional centrality measures both in simulation and in a large real-world database.




To mitigate this issue, we propose a new algorithmic framework called \emph{relative centrality} in Section~\ref{sec: relative-centrality}. We show that our algorithm mitigates the biases in the traditional centrality measures both theoretically in an extension of SBM that we describe in Section~\ref{sec:two-block-model} and in various simulations and real-world experiments. We present our concept as a meta-algorithm and observe interesting community structure improvement vs. balancedness tradeoffs between different instantiations. To the best of our knowledge, such a study on balanced core ranking has not been done before.




\textbf{3. Simulations and applications on real-world data.}~For simulation, we apply our algorithms on a natural generalization of the Gaussian mixture model 
in Section~\ref{sec: C-GMM} and demonstrate that \emph{relative centrality} based methods can select a subset of points with better community structure while having \emph{superior balancedness} compared to popular centrality measures such degree centrality and PageRank.


In real-world applications, we use our method to embeddings of a large number of single-cell datasets in Section~\ref{sec:single-cell}. Here, selecting a subset of points using our relative centrality-based algorithms provides a better community structure. This is also reflected in the performance of clustering algorithms, with the NMI and purity of Louvain on the selected points being significantly higher than the original, as recorded in Table~\ref{tab:single-cell-preserve-purity}.
As before, our relative-centrality-based algorithms demonstrate a much higher balancedness compared to traditional centrality measures.



\textbf{Comparison with existing work.} Some papers have made progress in multi-core structures.
However, these works have mainly focused on comparably restrictive settings. The works~\cite{MCPC-hybrid-1} and \cite{multiple-core-periphery} proposed specific equations to capture the interaction between nodes from different cores and designed maximum likelihood-based algorithms for these equations; the paper \cite {MCPC-directed} studied directed graphs with multi-cores, but they do not consider the coexistence of community structure. As such, we applied the algorithms of \cite {MCPC-directed} to our graph simulation model and observed an almost-random outcome and thus placed it in Appendix~\ref{app:comp}, along with a more in-depth comparison with the literature as well as some limitations of our work. Most importantly, to the best of our knowledge, none of the previous papers considered the unbalancedness issue of core selection in the $\sf{MCPC}$ setting.

\section{Core-ranking in \texorpdfstring{$\sf{MCPC}$}~} 
\label{sec:graph-setting}

In this section, we first formalize the notion of multi-core-periphery structure with communities ($\sf{MCPC}$), starting with a definition to capture the ``core-ness'' of a set of vertices. 

\begin{definition}[Core concentration]
\label{def: CC_G}
Given a directed graph $G(V,E)$, for any $V',V'' \subseteq V$, let $E(V',V'')$ denote the number of edges starting in $V'$ and ending in $V''$. Then we define the core concentration (or simply concentration) of $S \subseteq V$  as 
\begin{equation}
\label{eq: CC}
\CC_G(S)= \big( E(\bar{S},S)-E(S,\bar{S}) \big) \big/E(S,V)
\end{equation}
\end{definition}

That is, we expect the core to have two properties. 
First, only a few edges originating in a core should leave the core, which we penalize with the $-E(S,\bar{S})$ term. Secondly, we expect more edges from the peripheries ending at the cores (making each core a more central part of the corresponding community), which we incentivize with the $+E(\bar{S},S)$ term. Note that this value will always be in the range $[-1,E(\bar{S},V)/E(S,V)]$.
Then, we can formally define the $\sf{MCPC}$-structure as follows.





\begin{definition}
\label{def: MCPC}
We say a graph $G(V,E)$ with an underlying hidden partition $\{V_1, \ldots ,V_{z} \}$ has an $(\alpha,\beta)$-$\sf{MCPC}$ structure for $\alpha=\Omega(1)>0$ and $\beta<1$ iff 

i) The hidden partition imposes a community structure on the graph, i.e., $E(V_{\ell},V_{\ell})\ge E(V_{\ell},\bar{V_{\ell}})$.

ii)  Each hidden partition $V_i$ can be further  partitioned into two sets,
namely the core $V_{{\ell},1}$ and the periphery $V_{{\ell},0}$, such that $\sf{CC}_G(V_{{\ell},1})> \sf{CC}_G(V_{{\ell},0})+ \alpha$.

iii) For any two communities $V_{\ell},V_{\ell'}$, 
    $\frac{E(V_{{\ell},1},V_{{\ell}',1})}{E(V_{{\ell},1},V)} \le \beta \frac{E(V_{\ell},V_{\ell'})}{E(V_{{\ell}},V)}$. (There are fewer inter-community edges between cores than the peripheries).
\end{definition}

\paragraph{Validity of our definitions.}
In Section~\ref{sec: C-GMM}, we shall see that $ K$-NN embedding of natural mixture models satisfies our $\sf{MCPC}$ structure definition. In Section~\ref{sec:single-cell}, we observe that the communities in graph embeddings of real-world biological data indeed contain cores with higher concentration values, and these cores have a higher fraction of 
\emph{intra-community edges} compared to the whole data.


Then, our goal is to design core-ranking algorithms with the following properties. 

\begin{definition}[Performance metrics of CR algorithms]
\label{CP:metric} 

A CR algorithm should have a \emph{high value} for the following two metrics.

i) {\it Core-prioritization:} A CR algorithm should rank core vertices above periphery vertices. We quantify this as the AUROC value ~\cite{AUROC-2023} of the w.r.t core/periphery label of each vertex, which is close to $1$ when almost all core vertices are placed above almost all periphery vertices.


ii) {\it Balancedness:} Let $F$ be a ranking of the vertices, and let $F_{c}$ denote the top $c \cdot n$ vertices in the ranking. Then, we quantify the balancedness in the top $c \cdot n$ vertices in the ranking is defined as 
$\mathcal{B}_{c}(F,G):= \left( \min_{\ell} \frac{|F_{c} \cap V_{{\ell},1}|}{|V_{{\ell},1}|} \right) \big / \left( \max_{\ell} \frac{|F_{c} \cap V_{{\ell},1}|}{|V_{{\ell},1}|} \right)$. 
That is, the top points in the ranking should contain a roughly equal proportion of points of each core. The ranking reaches a perfect balancedness if $\mathcal{B}_{c}(F,G)=1$.  The {\bf total balancedness} is defined as 
$\mathcal{B}(F,G):= \frac{1}{n}\sum_{i=1}^{n}\mathcal{B}_{i/n}(F,G)$.
\end{definition}

{\bf Baseline core-ranking algorithms:}
In this direction, we have observed in the introduction with Figure~\ref{fig:CP-generic}(d) that when one core has higher level of connectedness (concentration), degree centrality can give an unbalanced ranking. In this direction, we consider degree centrality, PageRank(with 3 different damping factors 0.5,0.85 and 0.99), Katz Centrality~\cite{katz-and-more}, and a popular core-decomposition-based algorithm~\cite{onion-centrality} (onion decomposition) as initial baseline CR algorithms for $\sf{MCPC}$ structures. Next, we define a random graph model to better study the properties of traditional centrality measures and propose our algorithmic framework.

\subsection{High-level explanations via a random graph block model}
\label{sec:two-block-model}

To better understand the unbalancedness in the centrality measures, we design a simple random graph block model. In a block model, the set of vertices $V$ are divided into some blocks, and then the nature of interaction between two vertices is a function of their block identity. For example, the popular stochastic block model (SBM) is such a model. Here the vertices are divided into some communities, and each pair of vertices are joined with an edge independently with some probability, with intra-community edge probability being higher. This allows one to generate a graph with community structure. Block models are also popular in CP literature, with the performance of many algorithms made under different CP assumptions quantified on these block model-generated graphs~\cite{core-periphery-identify-portar,MCPC-directed,multiple-CP-identify,multi-cp-structure-2}.

We take a similar approach in this paper and define a model that we call $\sf{MCPC}$-block model that generates a directed graph where the out-degree of each vertex is $k(1 \pm o(1))$ for some $k$. The motivation for studying an almost-regular setting is that the real-world examples in this paper, namely the $k$-NN embedding of vector datasets, are regular directed graphs. For simplicity, we define it w.r.t $2$ underlying community, which can be easily extended.

\begin{definition}[$\sf{MCPC}$-block model]
We have $V=\{v_1, \ldots,v_n\}$ that are partitioned into $2$ communities $V_0, V_1$, with each $V_{\ell}$ further partitioned into a core $V_{{\ell},1}$ and a periphery $V_{{\ell},0}$. There is a $4 \times 4$ block-probability matrix $\mathbb{P}$ such that each row sums to $1$. 
Then, for any $v_i \in V_{\ell,c}$ and $v_j \in V_{\ell',c'}$, we add an 
$v_i \rightarrow v_j$ edge 
iid with probability 
$k/|V_{\ell',c'}| \cdot \mathbb{P}[(\ell,c),(\ell',c')]$.
\end{definition}

{\bf Choice of parameters.}
We set $|V_{\ell,c}|=0.25n$ for all the blocks and then set the block parameters to lead to an $(\alpha,\beta)-\sf{MCPC}$ structure. Here, it is important to note that while we demonstrate the phenomenon in theory and in simulation when the size of all the cores and peripheries are the same, our algorithms are applicable even when they are of different sizes. In fact, in our real-world experiments, the size of the different communities in a dataset varies widely. Then, the behavior of degree centrality can be captured as follows. Here we note that the proofs of the theorems in this section can be found in Appendix~\ref{app: proof}.

\begin{theorem}[Behavior of degree centrality]
\label{thm: deg-limitaion}
Let $G(V,E)$ be a graph sampled from the $\sf{MCPC}$-block model w.r.t partition of $V$ into $V_{\ell,c}, (\ell,c) \in \{0,1\}^2$ where $k=\omega(\log n)$. Let $F(v)$
be the degree of the vertex. Then for 
any $v_i \in V_{\ell,1}$ we have 
$
F(v_i)= 2k + k\cdot (1\pm o(1)) (\sf{CC}_G(V_{\ell,1}))
$.
\end{theorem}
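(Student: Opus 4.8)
The plan is to write the total degree $F(v_i)$ as the sum of the out-degree and in-degree of $v_i$, compute each in expectation directly from the block-probability matrix $\mathbb{P}$, establish concentration via Chernoff, and then match the result against a separate computation of $\CC_G(V_{\ell,1})$ in the same parameters.

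First I would handle the two degree contributions. For any $v_i\in V_{\ell,c}$ the expected out-degree is $\sum_{(\ell',c')}|V_{\ell',c'}|\cdot\frac{k}{|V_{\ell',c'}|}\mathbb{P}[(\ell,c),(\ell',c')] = k\sum_{(\ell',c')}\mathbb{P}[(\ell,c),(\ell',c')] = k$, since every row of $\mathbb{P}$ sums to $1$. For the in-degree of $v_i\in V_{\ell,1}$, using that all four blocks have equal size $0.25n$ so the $|V_{\ell',c'}|/|V_{\ell,1}|$ factors cancel, the expectation is $\sum_{(\ell',c')}|V_{\ell',c'}|\cdot\frac{k}{|V_{\ell,1}|}\mathbb{P}[(\ell',c'),(\ell,1)] = k\,\sigma$, where $\sigma:=\sum_{(\ell',c')}\mathbb{P}[(\ell',c'),(\ell,1)]$ is the column sum of $\mathbb{P}$ at block $(\ell,1)$. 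Each degree is a sum of independent indicators with mean $\omega(\log n)$, so a Chernoff bound gives out-degree $=k(1\pm o(1))$ and in-degree $=k\sigma(1\pm o(1))$ with probability $1-n^{-\omega(1)}$; a union bound over the $n$ vertices keeps all degrees concentrated at once. Hence $F(v_i)=k(1+\sigma)(1\pm o(1))$.

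Next I would express $\CC_G(V_{\ell,1})$ in the same quantities. Writing $S=V_{\ell,1}$, the same block-model computation (with concentration of the relevant edge counts, each a sum of $\Theta(nk)$ independent indicators) gives $E(S,V)=|S|k(1\pm o(1))$, the leaving edges $E(S,\bar{S})=|S|k\bigl(1-\mathbb{P}[(\ell,1),(\ell,1)]\bigr)(1\pm o(1))$, and the entering edges $E(\bar{S},S)=|S|k\bigl(\sigma-\mathbb{P}[(\ell,1),(\ell,1)]\bigr)(1\pm o(1))$, the latter being the total in-edges to $S$ minus the internal edges $E(S,S)$. Substituting into $\CC_G(S)=\bigl(E(\bar{S},S)-E(S,\bar{S})\bigr)/E(S,V)$, the $\mathbb{P}[(\ell,1),(\ell,1)]$ terms cancel and I get $\CC_G(V_{\ell,1})=(\sigma-1)(1\pm o(1))$.

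Finally I would combine the estimates: $F(v_i)=\text{out-deg}+\text{in-deg}=k+k\sigma+o(k\sigma)$, while $2k+k\,\CC_G(V_{\ell,1})=2k+k(\sigma-1)=k+k\sigma$, so the two agree up to the stated error. The step I expect to be the main obstacle is exactly this last matching: $F(v_i)$ is a single-vertex degree whereas $\CC_G(V_{\ell,1})$ is a graph-wide ratio, so both sides are random, and I must verify that the additive fluctuations (of order $\sqrt{k\log n}=o(k)$ for the single in-degree, and comparable for the edge-count ratio) can be absorbed into a factor $(1\pm o(1))$ multiplying $\CC_G(V_{\ell,1})\approx\sigma-1$. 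This is clean whenever $\sigma$ is bounded away from $1$, i.e.\ in the genuine core regime where $\CC_G(V_{\ell,1})=\Omega(1)$, since then $o(k\sigma)=o(k(\sigma-1))=o\bigl(k\,\CC_G(V_{\ell,1})\bigr)$; in the degenerate regime $\sigma\to 1$ the statement should be read additively. The remainder is routine Chernoff-and-union-bound bookkeeping.
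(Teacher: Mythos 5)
Your proposal is correct, and it rests on the same ingredients as the paper's proof: split $F(v_i)$ into out-degree and in-degree, show the graph is almost $k$-regular in out-degree, get Chernoff concentration of the in-degree with $O(\sqrt{k\log n})=o(k)$ fluctuation, and absorb the additive $o(k)$ error into the multiplicative form only in the regime $\CC_G(V_{\ell,1})=\Omega(1)$. The one genuine difference is how the connection to $\CC_G$ is made. You compute both sides in terms of the column sum $\sigma=\sum_{(\ell',c')}\mathbb{P}[(\ell',c'),(\ell,1)]$, showing separately that $\deg_{+}(v_i)=k\sigma(1\pm o(1))$ and $\CC_G(V_{\ell,1})=(\sigma-1)\pm o(1)$, and then match. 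The paper never parameterizes through $\sigma$: its key lemma (Lemma~\ref{lem:deg-concentration-relation}) uses the exact identity
\begin{equation*}
E(\bar S,S)-E(S,\bar S)=\sum_{v\in S}\deg_{+}(v)-E(S,V),
\qquad\text{so}\qquad
\CC_G(S)=\frac{\sum_{v\in S}\deg_{+}(v)}{E(S,V)}-1 ,
\end{equation*}
and then replaces the block-averaged in-degree by the individual $\deg_{+}(v_i)$ using the fact that all in-degrees within one block agree up to $O(\sqrt{k\log n})$. That route is agnostic to the block parameters and applies verbatim to any block (core or periphery), which is how the paper reuses it later; your route is more explicitly computational and leans on the equal-block-size cancellation, but it makes the structural content ($F\approx k(1+\sigma)$ versus $2k+k(\sigma-1)$) transparent. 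Finally, your closing caveat — that the multiplicative error form requires $\sigma$ bounded away from $1$, i.e.\ $\CC_G(V_{\ell,1})=\Omega(1)$ — is exactly the issue the paper handles (somewhat more tersely) via Fact~\ref{fact: deg-core}, so flagging it explicitly is a point in your write-up's favor rather than a gap.
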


That is , the degree of a vertex is almost linearly related to the $\sf{CC}_G$ of the core it belongs in. If 
$\CC_G(V_{\ell,1})>\CC_G(V_{\ell',1})+C$ for any constant $C$, the degree of all vertices in $V_{\ell,1}$ will be higher than the ones in $V_{\ell',1}$. This will result in an \emph{unbalanced} ranking due to degree centrality. Next, we observe this in simulation.

{\bf Initial simulation.}
For simulation purposes, we instantiate $\mathbb{P}$ as in Table~\ref{tab:beta}, parameterized with $\gamma$. We set $k=20$ as it is a common choice for $K$-NN embedding, however the simulation results are similar for larger values of $k$. When $\gamma=0$, the generated graph will have approximately a $(0.3,0.25)$-$\sf{MCPC}$ structure, and $\sf{CC}_G(V_{0,1}) \approx \sf{CC}_G(V_{1,1})$ (i.e., the two cores have similar concentration values). 
In such a case, degree centrality, as well as the other centrality measures, have high balancedness throughout the ranking, as observed in Figure~\ref{fig: balancedness-beta0}. 
Moreover, these rankings have a high core prioritization (as the concentration of peripheries is lower). Thus, the induced graph of the top $c$-fraction of the points has a \emph{higher intra-community edge fraction (ICEF)} (which is simply the fraction of edges with endpoints being in the same community) as $c$ decreases, which we note in Figure~\ref{fig: accuracy-beta0}.


\begin{figure}
      \subfigure[\label{fig: balancedness-beta0} $\gamma=0$]{\includegraphics[scale=0.24]{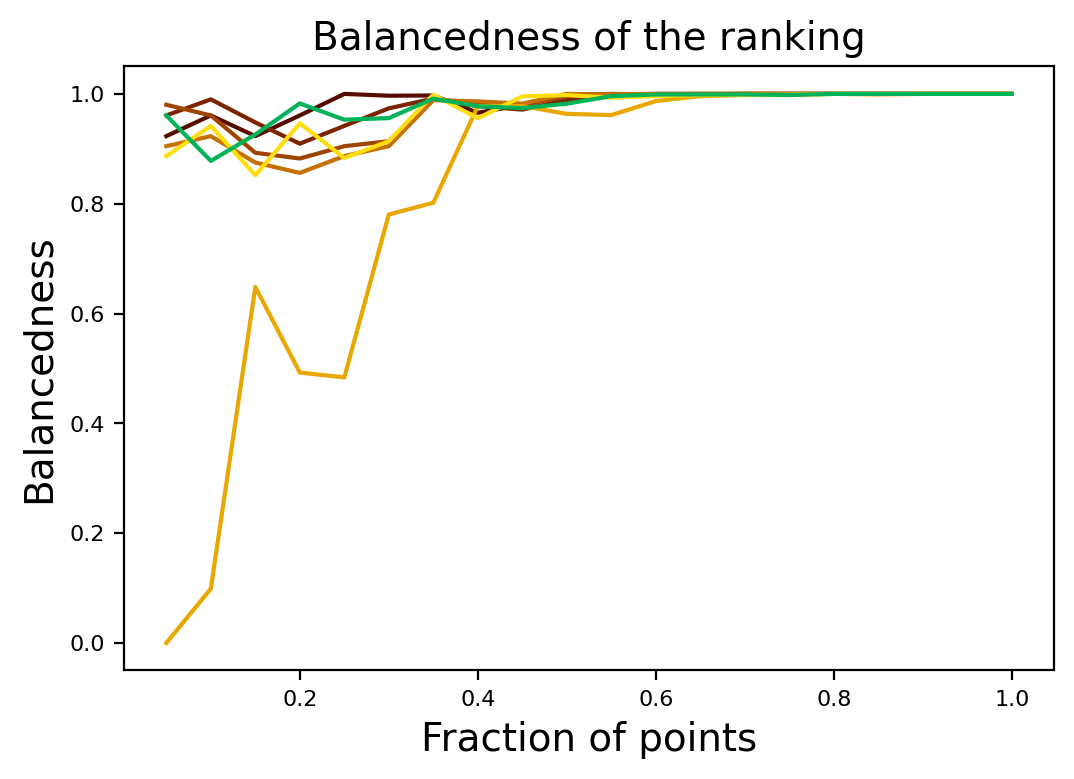}}
       \subfigure[\label{fig: accuracy-beta0}  $\gamma=0$]{\includegraphics[scale=0.24]{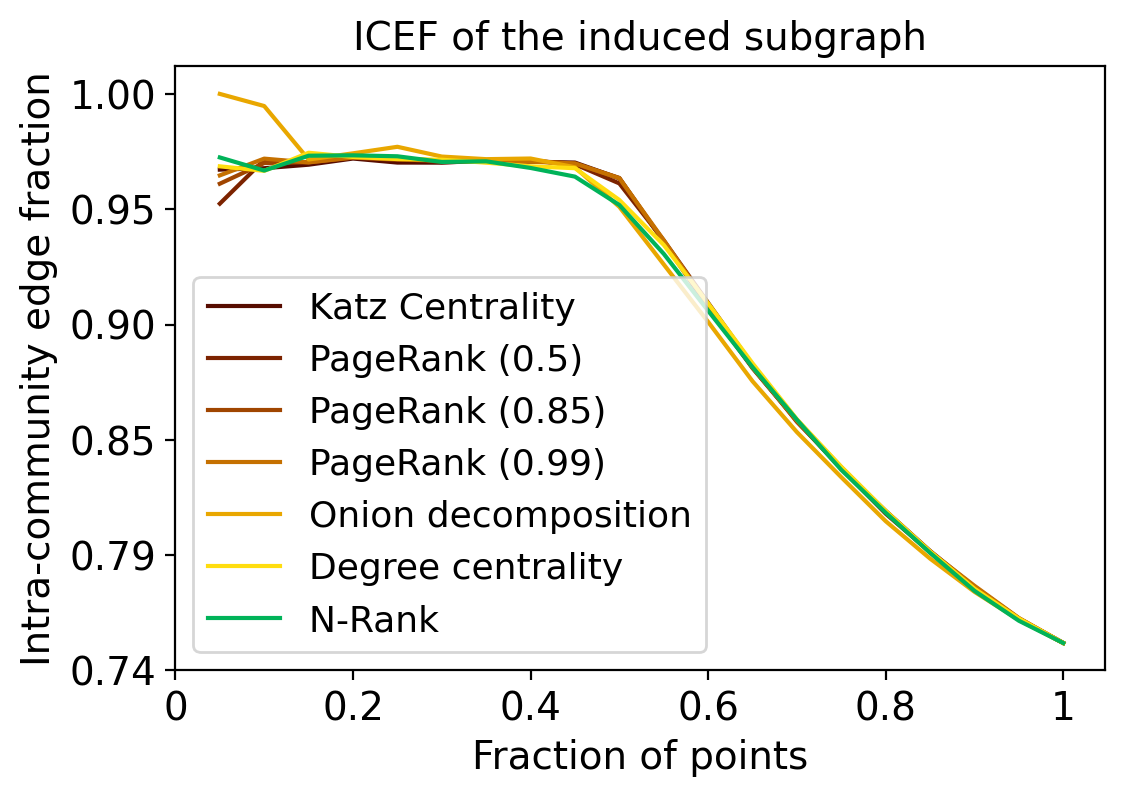}}
        \subfigure[\label{fig: balancedness-beta1}  $\gamma=0.05$]{\includegraphics[scale=0.24]{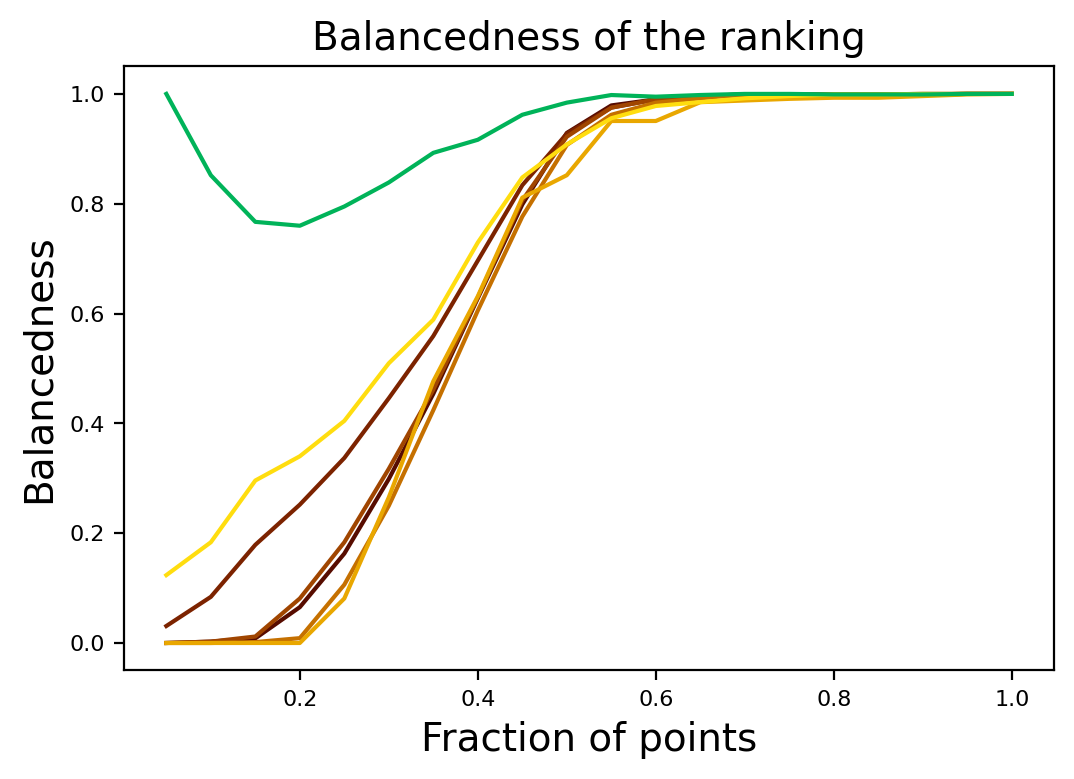}}
      \subfigure[\label{fig: accuracy-beta1}  $\gamma=0.05$]{\includegraphics[scale=0.24]{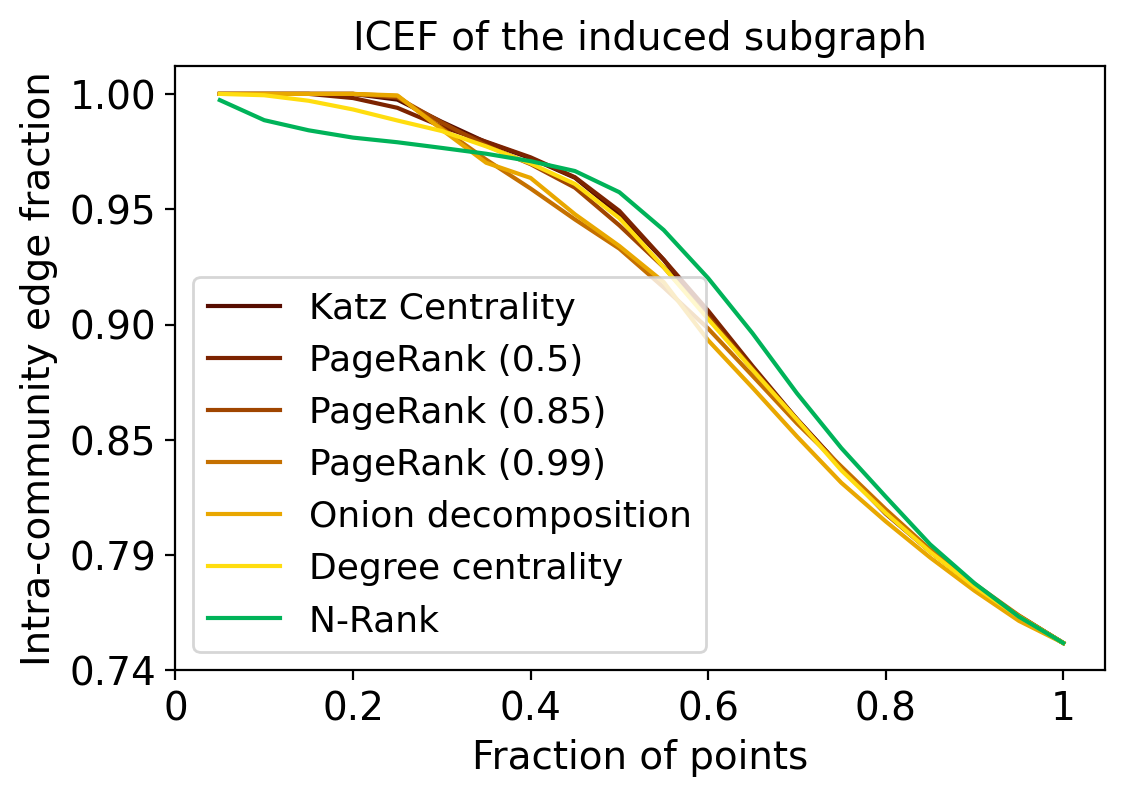}}
       \caption{Intra community edge fraction (ICEF) improvement and balancedness due to core-ranking}
\end{figure}

{\bf Unbalancedness when cores have varying concentration :} However, the scenario changes if $\gamma>0$. Consider
$\gamma=0.05$. Then we have $\sf{CC}_G(V_{1,1}) \approx \sf{CC}_G(V_{0,1})+0.07$. While the core prioritization will still be high for the centrality measures, the balancedness becomes very low, as shown in Figure~\ref{fig: balancedness-beta1}.

\subsection{Mitigating unbalancedness with relative centrality framework}
\label{sec: relative-centrality}

The primary reason behind the unbalancedness of the traditional centrality measures is that they capture the global centrality of vertices. To mitigate this issue, we aim to develop a method that generates a scoring with the following properties.

i) The core vertices should be assigned a higher score than the periphery vertices.

ii) The score of two vertices belonging to different cores should be similar, \emph{irrespective} of the cores they belong to. 

If a score on the vertices satisfies the two properties, we call it a \emph{relative centrality} measure. Then ranking vertices in descending order according to some score should lead to high balancedness as well as core prioritization. We first propose an initial Algorithm~\ref{alg:NgRank}, that we call N-Rank.

\adjustbox{varwidth=\linewidth,scale=0.8}{
\begin{algorithm}[H]
   \caption{NeighborRank (N-Rank) with $t$-step initialization}
   \label{alg:NgRank}
\begin{algorithmic}
   \STATE {\bfseries Input:} Graph $G(V,E),t$. Let the Adjacency matrix be $A$.
    \FOR{i in 1:n}
    \STATE $F^{(t)}(v_i) \gets \sum_{j}A^t_{j,i}$
     \hfill \COMMENT{\# Obtaining initial centrality score as sum of $i$-th column of $A^t$}
    \ENDFOR
     \FOR{$v_i \in V$}

    \STATE $S_{v_i} \gets \{v_j: v_j \in N_G(v_i), F^{(t)}(v_j)>F^{(t)}(v_i)\} \cup \{v_i\}$. \quad \quad \hfill \COMMENT{ $N_G(v_i)$ is the neighborhood of $v_i$}

    \STATE 
    
    \STATE $\hat{F}^{(t)}(v_i) \gets 
    \frac{F^{(t)}(v_i)}{\underset{v_j \in S_{v_i}}{\avg}       [F^{(t)}(v_j)]}$
    \quad \quad  \hfill  \COMMENT{ The relative-centrality step}
    \ENDFOR
    
    \RETURN $\hat{F}^{(t)}$

\end{algorithmic}
\end{algorithm}
}

{\bf Description of N-Rank.} 
Let the graph have an $(\alpha,\beta)$-$\sf{MCPC}$ structure.
First, we obtain an ``initial centrality measure'' for $v_i \in V$ as $F^{(t)}(v_i)$ as the sum of the $i$-th column of $A^t$, where $A$ is the adjacency matrix of the graph. For explanation, we consider $t=1$.

Then, for any $v_i \in V$ $F(v_i):=F^{(1)}(v_i)$ converges to the in-degree of $v_i$. Theorem~\ref{thm: deg-limitaion} states vertices from a core with higher $\sf{CC}_G$ will get a higher score. To mitigate this unbalancedness, for each $v_i \in V$, we select vertices $v_j \in N_G(v_i)$ ($v_i$'s neighborhood)
with \emph{higher $F$ score} and obtain the final score $\hat{F}(v_i)$ as the ratio of $F$ value of $v_i$ with average of these neighbors (including $v_i$ itself). Note that this value can be at most $1$.

Consider any vertex $v_i \in V_{\ell,1}$ where $G$ has an $(\alpha,\beta)$-$\sf{MCPC}$ structure. Furthermore, let $\beta$ be small ($o(1/k)$), i.e., inter-community edges between different cores are few. Then, for many such $v_i$, all of its neighbors will belong to either the same core, which will have a very similar score, or peripheries, which will have a lower score as they belong to a set with a lower concentration, following Theorem~\ref{thm: deg-limitaion}. Then, we have $\hat{F}(v_i) \approx 1$ for any $v_i \in V_{\ell,1}$, \emph{irrespective} of which core it is. We capture this behavior in Theorem~\ref{thm: main}.

\begin{theorem}[$1$-step N-Rank is good for the two-block model]
\label{thm: main}
    Let $G$ be a graph obtained from the $\sf{MCPC}$ block model with $k=\omega(\log n)$ resulting in an $(\Omega(1),o(1/k))-\sf{MCPC}$-structure w.r.t to the core-periphery blocks.
    Let $\hat{F}(v_i)$ be the score of the vertices $v_i, 1 \le i \le n$ as per Algorithm~\ref{alg:NgRank} for $t=1$. 
    Then, for $1-o(1)$ fraction of core vertices $v_i \in V_{\ell,1}$ for any $\ell$,  we have
 $1-o(1) \le \hat{F}_G(v_i) \le 1$.
\end{theorem}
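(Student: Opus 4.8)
The plan is to prove the two inequalities separately, the upper bound being essentially free and the lower bound carrying the real content. For the upper bound, I would read off from Algorithm~\ref{alg:NgRank} that $S_{v_i}$ contains $v_i$ together with only those neighbors whose score strictly exceeds $F^{(1)}(v_i)$; hence every element of $S_{v_i}$ has $F$-value at least $F^{(1)}(v_i)$, so $\avg_{v_j \in S_{v_i}} F^{(1)}(v_j) \ge F^{(1)}(v_i)$ and therefore $\hat{F}(v_i) \le 1$ for \emph{every} vertex, deterministically. The work is then to establish the matching lower bound $\hat{F}(v_i) \ge 1 - o(1)$ for a $1-o(1)$ fraction of core vertices.

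For the lower bound I would first pin down the $F$-scores. By Theorem~\ref{thm: deg-limitaion} the score of a core vertex $v_i \in V_{\ell,1}$ concentrates around $2k + k\,\CC_G(V_{\ell,1})$, and rerunning the same computation (the mean of a sum of independent Bernoulli indicators, one per potential in-edge) for a periphery vertex gives a value governed by $\CC_G(V_{\ell,0})$. Since $k=\omega(\log n)$, a Chernoff bound shows each individual score deviates from its block-mean by only $o(k)$, and a union bound over the $n$ vertices makes this simultaneous with high probability; note that $\CC_G \ge -1$ forces $F^{(1)}(v_i) = \Theta(k)$, which I will use at the end. Two consequences matter: (a) any two vertices of the same core $V_{\ell,1}$ have scores within $o(k)$ of one another, and (b) in the block-model parametrization every periphery block has concentration strictly below every core block, so periphery vertices have scores smaller than $F^{(1)}(v_i)$ by $\Omega(k)$ and hence never enter $S_{v_i}$.

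Next I would control cross-core neighbors using condition (iii) of Definition~\ref{def: MCPC} with $\beta = o(1/k)$. Bounding $E(V_{\ell,1},V_{\ell',1}) \le \beta\,E(V_{\ell,1},V) = \beta\,|V_{\ell,1}|k$ (and symmetrically for edges into $V_{\ell,1}$) shows the expected number of neighbors of a core vertex lying in the other core is $O(\beta k) = o(1)$. Summing the incident cross-core edges over all core vertices, the expected number of core vertices possessing even one such neighbor is $O(\beta n k) = o(n)$, so for a $1-o(1)$ fraction of core vertices $v_i$ every neighbor lies in the same core $V_{\ell,1}$ or in some periphery.

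Combining these, I would fix a ``good'' core vertex $v_i \in V_{\ell,1}$ with concentrated scores and no cross-core neighbor. By (b) its periphery neighbors are excluded from $S_{v_i}$ and it has no other-core neighbors, so $S_{v_i}$ equals $\{v_i\}$ together with its higher-scoring same-core neighbors. By (a) each such neighbor has score at most $F^{(1)}(v_i) + o(k)$, whence $\avg_{v_j \in S_{v_i}} F^{(1)}(v_j) \le F^{(1)}(v_i) + o(k)$ and $\hat{F}(v_i) \ge F^{(1)}(v_i)/(F^{(1)}(v_i)+o(k)) = 1 - o(k)/\Theta(k) = 1-o(1)$. The main obstacle I anticipate is point (b): the bare $\sf{MCPC}$ definition only guarantees core $>$ periphery \emph{within} a community, so to ensure that inter-community periphery neighbors (of which a core vertex may have $\Theta(k)$) do not sit above $v_i$ in score, I must lean on the symmetric block-model parametrization to order all periphery concentrations below all core concentrations. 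Making this cross-community comparison rigorous, together with the uniform $o(k)$ concentration of in-degrees that $k=\omega(\log n)$ affords, is where the care is needed.
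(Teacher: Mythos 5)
Your proposal is correct and follows essentially the same route as the paper's own proof: Chernoff concentration of the initial scores, exclusion of periphery vertices from $S_{v_i}$ via the core--periphery score gap (which the paper also justifies by ordering all periphery concentrations below all core concentrations, exactly the block-model step you flag as the delicate point), a first-moment count using $\beta = o(1/k)$ showing a $1-o(1)$ fraction of core vertices have no cross-core neighbors, and the resulting bound $\hat{F}(v_i) \ge F^{(1)}(v_i)/\bigl(F^{(1)}(v_i)+o(k)\bigr) = 1-o(1)$. The only slip is cosmetic: the initial score $F^{(1)}(v_i)$ in Algorithm~\ref{alg:NgRank} is the in-degree, concentrating near $k + k\,\CC_G(V_{\ell,1})$, not the total degree $2k + k\,\CC_G(V_{\ell,1})$ of Theorem~\ref{thm: deg-limitaion}, but since out-degrees are uniformly $k\pm o(k)$ this shift changes nothing in the argument.
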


In Lemma~\ref{lem: core-periphery-separation} we also show that the peripheries will have a lower score than the core vertices. Then, ranking the vertices in the descending order of $\hat{F}$ gives a \emph{balanced ranking with high core prioritization},
with similar ICEF improvement as the baselines (Figure~\ref{fig: accuracy-beta1}). Figure~\ref{fig: balancedness-beta1} shows that N-Rank has high balancedness for $\gamma=0.05$, even though the two cores have different concentrations.

Thus, N-Rank with $1$-step can be thought of as a way to create a relatively central version of degree centrality. 
Similarly, one could use a different centrality measure in the first step. Understanding the scope of using different methods as an initial centrality measure is an interesting direction.

\paragraph{Generalization into a meta-algorithm.}

Next, we generalize our algorithm in two natural ways.

1) There may be periphery vertices in the graph that have a high $F$ value compared to its $1$-hop neighborhood (E.g., a periphery vertex that has no edges going to a core). To mitigate this issue, we can look at look at some $p$-hop neighborhood $N_{G,p}(v_i)$ of $v_i$ when selecting the reference set.

2)    
We have observed that our N-Rank approach
increases the balancedness in the initial centrality measure $F$. In this direction, we can recursively apply this process by first calculating the $p$-hop N-Rank value and then feeding it back to the algorithm as the initial centrality measure to further increase balancedness. We can apply this recursive process any $q \ge 1$ many times.


\begin{figure}[t]
\begin{minipage}{0.32\linewidth}
    \begin{table}[H]
    \centering
    \scalebox{0.6}{
    \begin{tabular}{ccccc}
    \toprule 
        &$V_{0,1}$ & $V_{0,0}$ & $V_{1,0}$ & $V_{1,1}$ \\
                  \midrule
       $V_{0,1}$  & {\footnotesize $0.8+\gamma$} & $\frac{3(0.2-\gamma)}{8}$ & $\frac{3(0.2-\gamma)}{8}$ & $\frac{0.2-\gamma}{4}$ \\
       $V_{0,0}$  & {\footnotesize $0.4 +\gamma$} & $\frac{(0.6-\gamma)}{3}$ & $\frac{0.6-\gamma}{3}$ & $\frac{0.6-\gamma}{3}$\\
       $V_{1,0}$  & {\footnotesize $0.2$} & {\scriptsize $0.2$} &{\footnotesize $0.2$}  &{\footnotesize $0.6$} \\
       $V_{1,1}$  & $\frac{0.2+\gamma}{4}$ & $\frac{0.2+\gamma}{4}$ & $\frac{3(0.2+\gamma)}{8}$ & {\footnotesize $0.8-\gamma$} \\
       \bottomrule
    \end{tabular}
    }
    \caption{Parameterized block probabilities}
    \label{tab:beta}
    \end{table}
    \end{minipage}
    \begin{minipage}{0.67\linewidth}
        \begin{figure}[H]
          \subfigure[\label{fig-beta-balancedness} Core prioritization]{\includegraphics[scale=0.3]{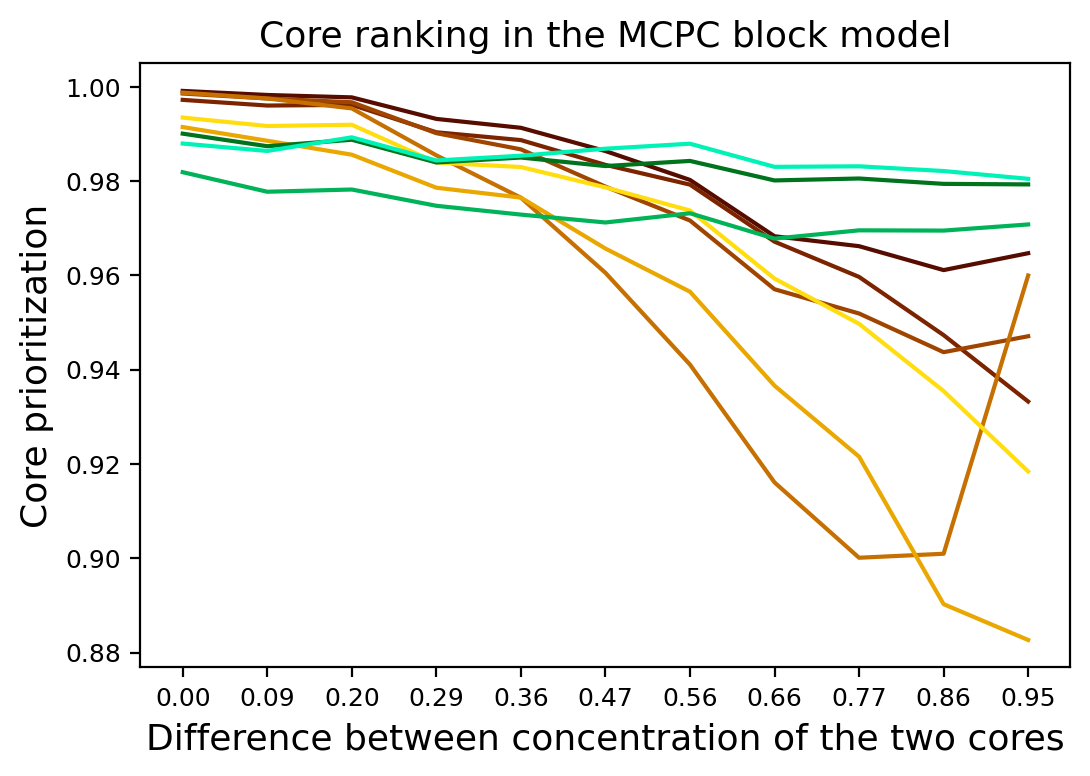}}
          \subfigure[\label{fig-beta-priori} Balancedness]{\includegraphics[scale=0.3]{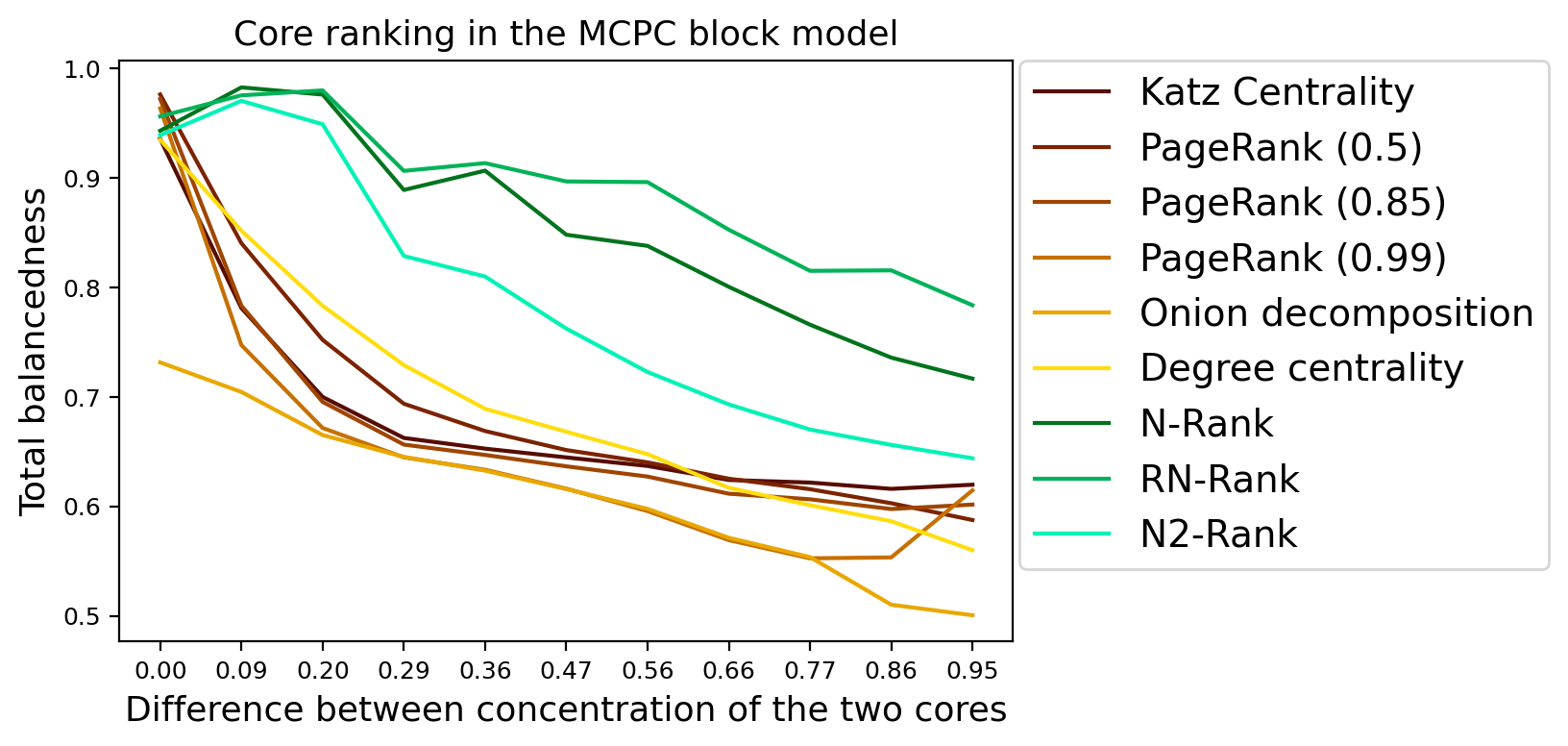}}
        \end{figure}
    \end{minipage}
 \caption{Core-ranking for graphs generated by Table~\ref{tab:beta} for different $\gamma$}
\end{figure}

We write the general method as a meta-algorithm M-Rank(t,p,q) (Meta-Rank)~\ref{alg:metaRank} in Appendix~\ref{app:meta-algorithm} along with an in-depth discussion. Then, N-Rank can be written as MR-Rank(1,1,0). We further define MR-Rank(t,2,0) as N2-Rank (t steps) and MR-Rank(t,1,1) as RN-Rank(t steps). Furthermore, we note that our method has a fast run time. 

For the initial centrality step, it is easy to see that the run time is $t \cdot |E|$. Then, the time to scan $N_{G,p}(v_i)$ for all vertices is
$\sum_{i=1}^n(N_{G,p}(v_i))$. 
Finally, applying the centralizing step $q$-times would take 
time $q \cdot (\sum_{i=1}^n(N_{G,p}(v_i)))$. For, $k$-regular graphs, 
we have $\sum_{i=1}^n(N_{G,p}(v_i)) \le k^p|V|$.

Thus for $k$-regular graphs, the time complexity of M-Rank(t,p,q)
is $\OO(t \cdot |E|+ q \cdot k^p \cdot |V|$). That is, it is 
$\OO(t|E|+k|V|)$ for N-Rank, $\OO(t|E|+2k|V|)$ for RN-Rank and $\OO(t|E|+k^2|V|)$ for N2-Rank which is considerably fast for small values of $t$.


\paragraph{Large scale simulation and core-prioritization vs. balancedness tradeoffs}
We use the $\gamma$-parameterized block probability matrix in Table~\ref{tab:beta} and vary $\gamma$ from $0$ to $0.2$ with an increment of $0.02$, generating a graph for each value for a large-scale simulation. We make the following observations. 

i) As $\gamma$ increases, three structural changes occur in the graph. $\alpha$ (separation of $\sf{CC}$ values between cores and peripheries) decreases, 
$\beta$ (inter-core edge fraction) increases, and the difference between the core concentration values of the two cores becomes larger.

ii) We apply our core-ranking algorithms N-Rank, N2-Rank, and RN-Rank, all with $t=1$, and compare them with the baseline algorithms. We plot the core prioritization and balancedness of the ranking in 
Figures~\ref{fig-beta-balancedness} and \ref{fig-beta-priori} respectively. We observe that as $\gamma$ increases, our methods have higher balancedness than traditional centrality measures (with RN-Rank, particularly, having very high balancedness) while having comparable core prioritization with the other methods. In contrast, N2-Rank has a slightly higher core prioritization than our other methods. This points to an interesting tradeoff between different instantiations of our meta-algorithm.



\section{\texorpdfstring{$\sf{MCPC}$}~  structures in the \texorpdfstring{$K$}~-NN embeddings of vector datasets}
\label{sec:K-NN-embedding}

In this Section, we explore the presence of $\sf{MCPC}$ structure in $k$-NN embedding of vector datasets with underlying community structure and the impact of core-ranking algorithms. 
Converting a vector dataset into a graph via nearest neighbor like embedding and then inference on that graph is a common pipeline in unsupervised learning. For example, this is the first step in the popular dimensionality reduction and visualization algorithm UMAP~\cite{UMAP}. Similarly, one of the state of the art unsupervised algorithm for single-cell datasets, known as Seurat~\cite{seurat}, first obtains the $K$-NN embedding, and then converts it into a shared-nearest neighbor embedding, on which it applies graph clustering algorithms like Louvain. In this paper, we focus on the $k$-NN embedding itself of vector datasets with underlying community structures, and then observe that in many natural and real-world cases
such graphs can have an $\sf{MCPC}$ structure. 

For the sake of uniformity, we set $k=20$ for all of the experiments from here on. We note that this is a widely chosen value for nearest neighbor embedding in many literature. Small changes in the value of $k$ does not have a significant impact on our observations.

\subsection{Concentric GMM}
\label{sec: C-GMM}

We simulate this phenomenon using mixture models. 
Here, each underlying community has a center, and then points for that community are generated with respect to distributions with mean as the respective center. When the distributions are Gaussian, the model is called the Gaussian mixture model (GMM)~\cite{mixture-model-1-GMM}, which is widely studied in the clustering literature~\cite{GMM-clustering-3,mixture-clustering-4,GMM-clustering-1,GMM-clustering-2}. In this section, we extend the GMM in a natural way to incorporate an $\sf{MCPC}$ structure.

\begin{definition}[Concentric GMM with two communities]
There are two centers $\mathbf{c_{\ell}}\in \mathbb{R}^d, \ell \in \{0,1\}$. \emph{Each center} is associated with \emph{two} $d$-dimensional isotropic Gaussian distributions with the center as its mean and variances $\sigma_{\ell,1}$ and $\sigma_{\ell,0}$ respectively, with 
$\sigma_{\ell,0} \ge 1.1\sigma_{\ell,1}$.
We denote the distributions as $\mathcal{D}^{(\ell)}_1$ and $\mathcal{D}^{(\ell)}_0$ respectively.
We sample $n_{\ell,j}$ points from each distribution $\mathcal{D}^{(\ell)}_j$, and collectively note them as
$V_{\ell,j}$.
Then, the two underlying communities are $V_{\ell}= V_{\ell,1} \cup V_{\ell,0}$. 
\end{definition}

For simulation, we set $d=20$, $|V_{\ell,c}|=2000$ for all $(\ell,c)$ and the variances of the distributions are set as~
$\begin{bmatrix}
\sigma_{0,1} & \sigma_{0,0} & \sigma_{1,1} & \sigma_{1,0}\\
0.1 & 0.3 & \gamma\cdot 0.1 & \gamma \cdot 0.3
\end{bmatrix}$ where $1 \le \gamma <3$. Furthermore, we choose the centers as two so that there is an overlap between the two communities. Then, we are interested in the $20$-NN embedding of the resultant points.

i) The 20-NN embedding has an $(\alpha,\beta)-\sf{MCPC}$ periphery structure, with the points sampled from the lower variance distribution $\sigma_{0,1}$ and $\sigma_{1,1}$ forming the cores and the rest being peripheries. 

ii) When $\gamma=1$, the two communities are symmetrical and we have a $(0.7,0.3)-\sf{MCPC}$ structure with both cores having similar concentration. Then all core-ranking methods have high balancedness, as noted in Figure~\ref{fig:GMM-bal-bval}, and when we select the top points from the ranking, the induced subgraph has a higher ICEF, as observed in Figure~\ref{fig:GMM-bal-acc}. From hereon, we set $t:=\log n$ step in the first step for all our algorithms to avoid local maxima when obtaining the initial centrality scores.

iii) \textbf{Cores with different concentration:}
Next, we observe that if we set $\gamma=1.5$, then one core has a higher variance than the other, and the corresponding vertices in the graph have a lower concentration than the other core. Then, traditional centrality measures indeed have a lower balancedness, and our methods perform relatively better, with RN-Rank having the highest balancedness value, as noted in Figure~\ref{fig:GMM-unbal-bval}. Furthermore, we note that all methods provide a similar increase in ICEF upon core selection in Figure~\ref{fig:GMM-unbal-acc}, with N2-Rank being slightly better than our other methods, the same as in the random graph model. We provide a more detailed simulation result for different values of $\gamma$ in Appendix~\ref{app: conc-GMM} to further support the balancedness of relative centrality.


\begin{figure}[t]
\hspace{-4em}
\resizebox{1.2\linewidth}{!}{
     \subfigure[\label{fig:GMM-bal-acc} Cores with same $\sf{CC}_G$]{\includegraphics[scale=0.235]{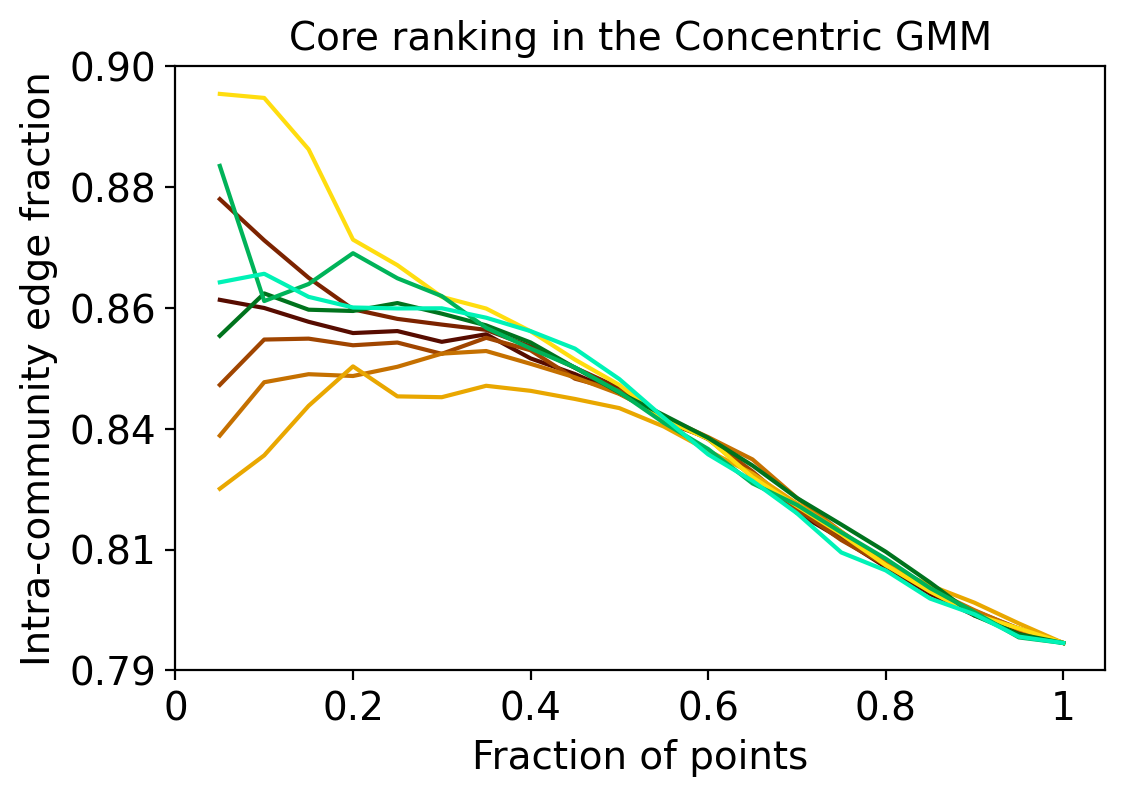}}
  \subfigure[\label{fig:GMM-bal-bval} Cores with same $\sf{CC}_G$]{\includegraphics[scale=0.235]{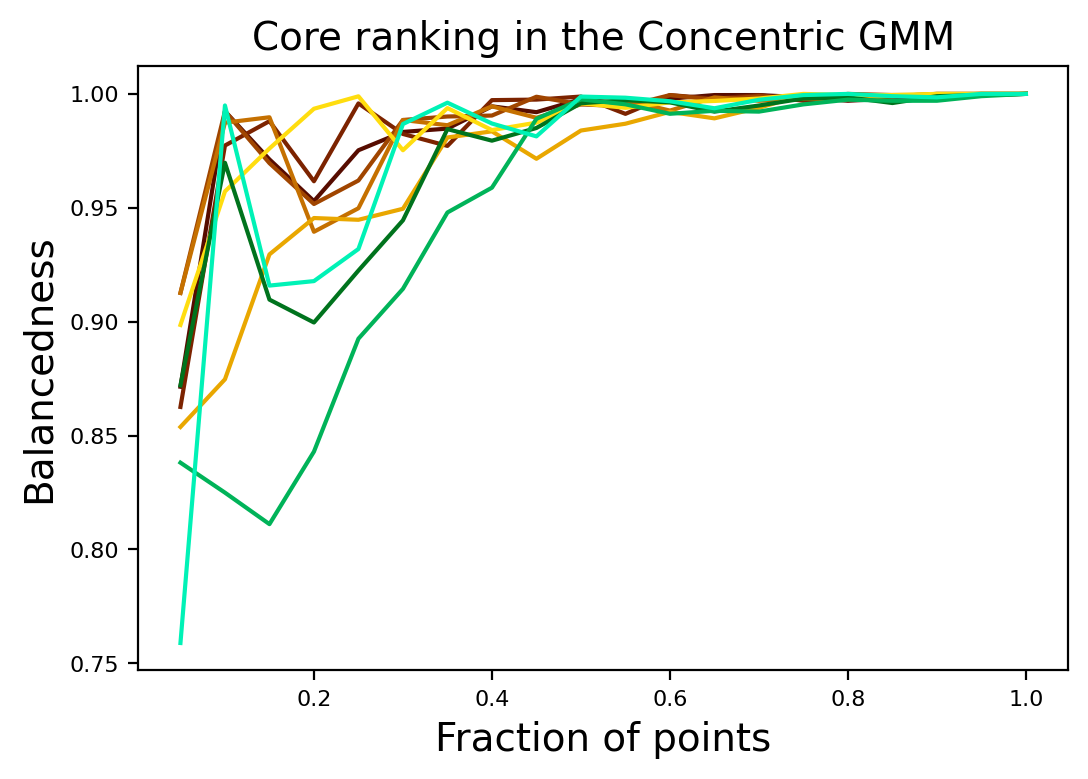}}
    \subfigure[\label{fig:GMM-unbal-acc} Cores with different $\sf{CC}_G$
    ]{\includegraphics[scale=0.235]{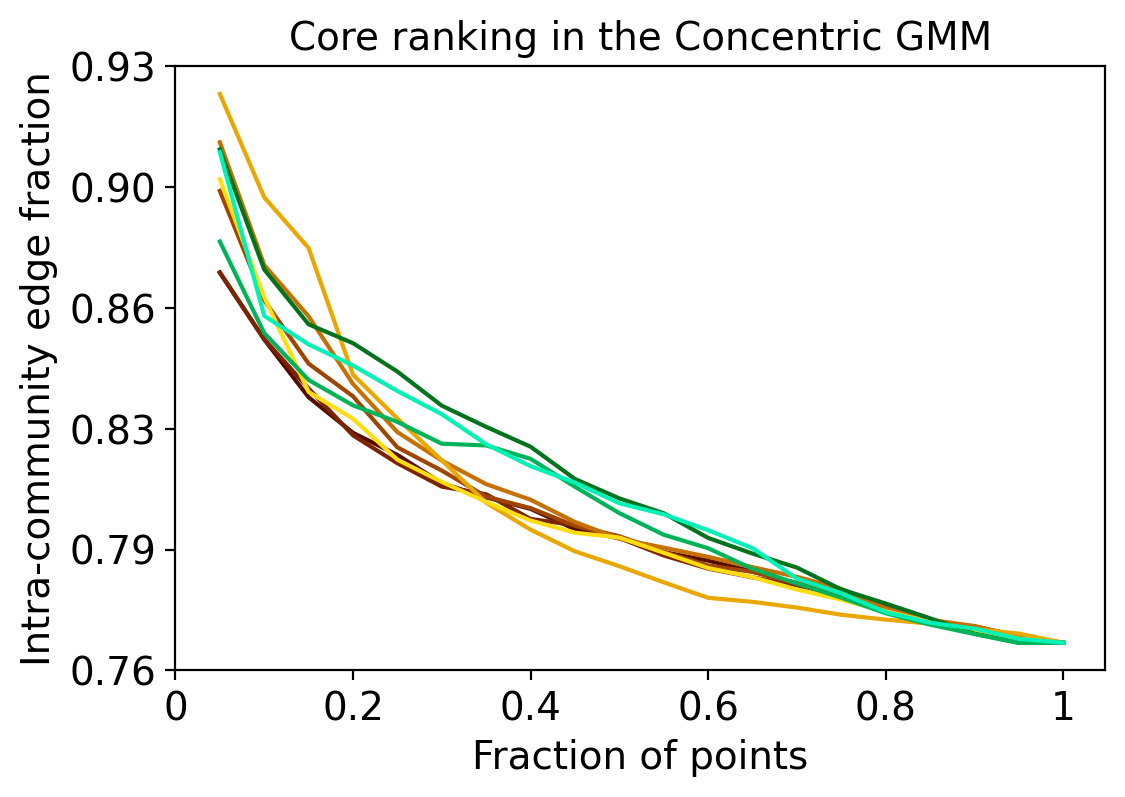}}
  \subfigure[\label{fig:GMM-unbal-bval} Cores with different $\sf{CC}_G$]{\includegraphics[scale=0.235]{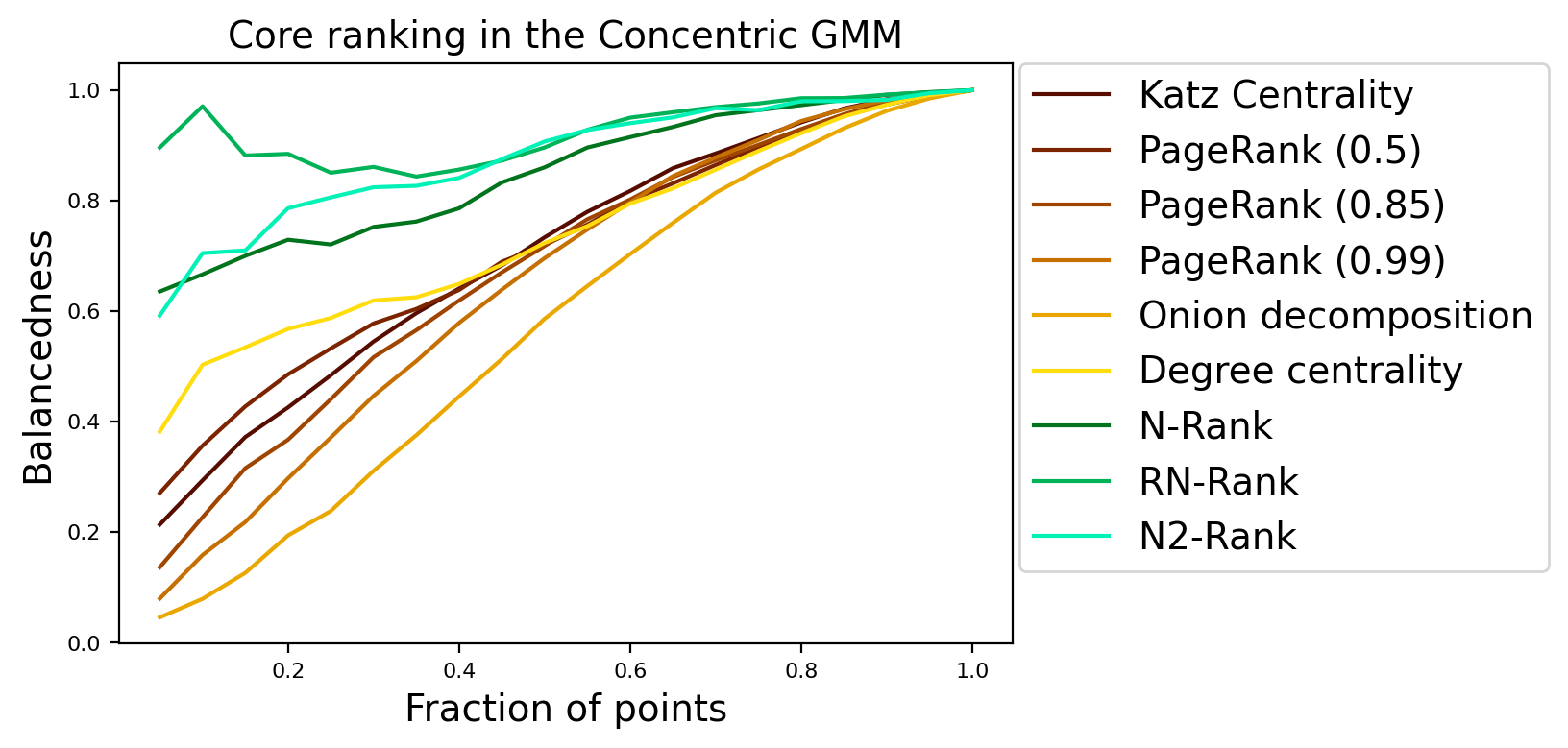}}
  }
  \caption{Improvement in ICEF, and balanced of core-ranking in concentric GMM}
\end{figure}

\subsection{Single-cell data}
\label{sec:single-cell}
Finally, in this section, we apply core ranking algorithms to single-cell RNAseq datasets, which we simply refer to as single-cell datasets. We work in the following setup.

{\bf Datasets:} 
We use a total of $11$ datasets.
We use the $7$ datasets from a recent database~\cite{single-cell-7data},
the popular Zheng8eq dataset~\cite{single-cell-duo},
and two more large 
datasets~\cite{single-cell-ALM-VISP}, and a T-cell dataset~\cite{nature-medicine-t-cell} of cancer patients (which are considered harder datasets). All of these datasets have annotated labels indicating the cell types of the data points, which form  the underlying (ground truth) communities. The size of the datasets varies from $1,400$ to $54,000$. For each dataset, we first pre-process it with a standard pipeline (details in Appendix~\ref{app:single-cell}) and then obtain its $20$-NN graph embedding, which we denote as $G_0$. Then, we apply the baseline and our core-ranking algorithms to these graphs. We make the following observations. 

{\bf Inference:}
For each graph $G_0(V,E)$, we select some top $c$-fraction of the points in the descending order of their ranking as per the core-ranking algorithm $F$ and denote it as $F_c(V)$. We first explain the observations w.r.t to the T-cell dataset.

i) {\bf Core-ranking improves community structure:} 
We observe that as $c$-decreases, the corresponding induced subgraph has a \emph{higher intra-community edge fraction} (ICEF), as observed in Figure~\ref{fig:tcell-acc}. 

ii) {\bf The selected cores have higher concentration:} Similarly, Let $S_c$ be the top $c$-fraction of the points. Then, we have the very interesting observation that 
$\sf{CC}_{G_0}(V_{\ell}\cap S_c)> CC_{G_0}(V_{\ell})$. That is, the higher-ranked points indeed form the cores of their communities as per our Definition~\ref{def: MCPC}. As support,we present the values for $c=0.2$ for RN-Rank in Table~\ref{fig:tcell-corev}. This, together with the previous point, implies that the single cell data indeed has $\sf{MCPC}$ structure w.r.t the ground truth communities.

iii) {\bf Relative centrality has a higher balancedness:}
 The single-cell datasets we look at have many ground truth communities (sub-populations), and we aim to keep points from all communities at the top of the ranking. Thus, balancedness, as defined in Definition~\ref{CP:metric}, is unsuitable, as it only captures the behavior of the worst-preserved community. In this direction, we define the following metric.

\begin{definition}[Preservation ratio]
Given a set of points $V$ with an underlying partition $V_1, \ldots ,V_{z}$ and a subset $S \subset V$, the preservation ratio of $S$ w.r.t the underlying partition is defined as 
$
PR(S,V) = \frac{|V|}{z|S|} \cdot \sum_{\ell=1}^{z} \min \left\{ \frac{|S \cap V_{\ell}|}{|V_{\ell}|}, \frac{|S|}{|V|} \right \}
$.
\end{definition}
That is, each ground truth cluster contributes the minimum of $\frac{|S \cap V_{\ell}|}{|V \cap V_{\ell}|}$,
$\frac{|S|}{|V|}$ to the term. We want to observe what fraction of points selected in the core are necessary to achieve proportionality. The higher this value, the more clusters have a $|S|/|V|$ fraction of the points in the filtered set. Note that this definition penalizes the fraction by which the points selected from a community is lower than the $|S|/|V|$ fraction. Note that the value of $PR(V,S)$ lies between $1/z$ and $1$. Furthermore, when there are only two communities, any set's preservation and balancedness values are related by a fixed linear equation. 

In this direction, we observe that our methods have a \emph{superior preservation ratio} throughout for the T-cell dataset in Figure~\ref{fig:tcell-preserve}, with RN-Rank having the best performance. In fact, at the $c=0.2$, the baseline methods \emph{completely miss} to include vertices from some communities, which we succeed at. The plots of all of the other datasets, along with a summarization, can be found in Appendix~\ref{app:single-cell}.




\paragraph{Improvement in ICEF and clustering outcomes.} 
Finally, we observe that not only does the induced graph by $F_c(V)$ have higher ICEF, but the subgraph is also better separable into its ground truth communities. To this end, we set $c=0.2$ (this choice is arbitrary). Then, we apply the well-known Louvain algorithm~~\cite{louvain} on the original 20-NN embedding as well as the induced subgraph for each CR algorithm. We compare the purity of the outcome on these points compared to the whole graph. To present a complete picture, we also show the original ICEF, the intra-community edge fraction of the top $c$-fraction of points, as well as the preservation ratio of these points. We place the results in Table~\ref{tab:single-cell-preserve-purity}.
We show the results of 8/11 datasets here.
We observe a slight improvement in the ICEF for the first $2$  datasets (datasets with high initial ICEF). The improvement is more significant for the $6$ harder datasets. As before, all methods have similar improvements in ICEF, and also purity. Our relative centrality based methods demonstrate a superior preservation ratio, with \emph{RN-Rank being the best}. Furthermore, as we have previously observed, N2-Rank has slightly higher ICEF improvement but with a lower preservation ratio, further underlying the tradeoff between different instantiations of our Meta-Rank algorithm.
We present several other experiments in Appendix~\ref{app:single-cell}, which includes the purity comparison of all datasets, as well as the NMI improvement comparisons of all of the datasets for $c=0.2$.

\begin{figure}[t]
  \subfigure[\label{fig:tcell-acc}  ICEF of induced subgraph]{\includegraphics[scale=0.38]{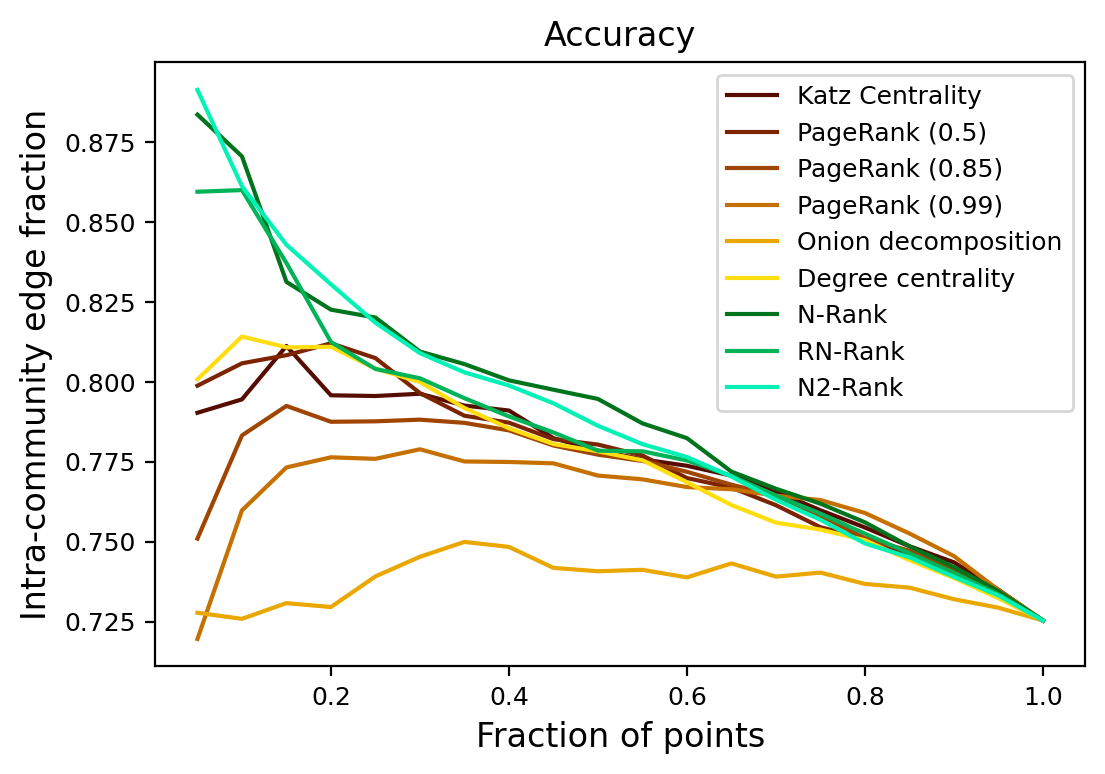}}
  \subfigure[\label{fig:tcell-preserve} Preservation ratio of the subset]{\includegraphics[scale=0.38]{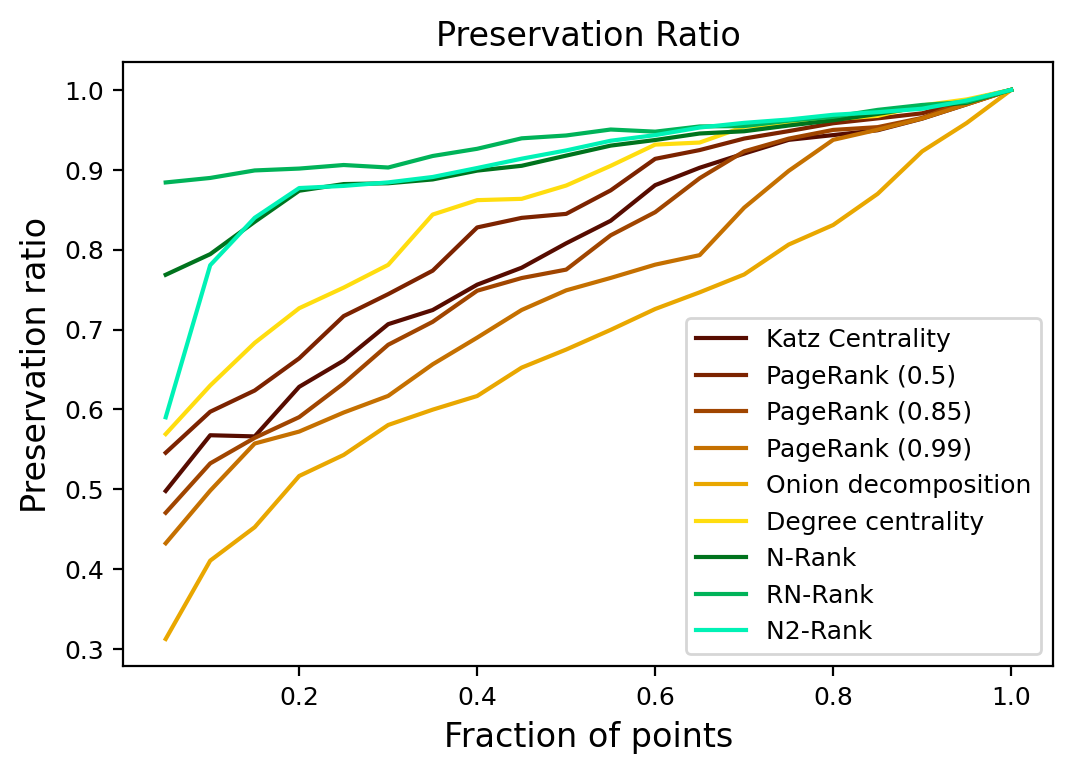}}
  \subfigure[\label{fig:tcell-corev}$\sf{CC}$ of whole community vs. selected core]{\raisebox{1.5\height }{\mytab}}
\caption{Improvement in intra-community accuracy and balancedness by different ranking algorithms}
\label{fig:tcell-main}
\end{figure}

\begin{table}[ht]
    \centering
    \hspace*{-.7in}
    \resizebox{1.2\linewidth}{!}{
    \begin{tabular}{cccccccccccc}
      Datasets   &  BH & Se &  Zh & Tcell  & ALM  & AMB &TM &VISP \\
      \# of points &  1886 & 2133 & 3994 & 5759  & 10068  & 12832 &54865 &15413 \\ 
    Metrics &  PR ~ICEF ~ Purity& PR ~ICEF ~ Purity& PR ~ICEF ~ Purity&PR ~ICEF ~ Purity&PR ~ICEF ~ Purity&PR ~ICEF ~ Purity&PR ~ICEF ~ Purity&PR ~ICEF ~ Purity\\
    \toprule
     \makecell{Original \\ values}   &  1.00 0.94 0.93  & 1.00 0.92 0.89 &  1.00 0.78 0.8  & 1.00 0.72 0.72  &  1.00 0.68 0.44  &  1.0. 0.74 0.46  &  1.00 0.94 0.86 & 1.00 0.69 0.48 \\
    \midrule

    \textcolor{darkolivegreen}{RN-Rank}  &     {\bf 0.68} 1.00 0.99  &  {\bf 0.56} 0.99 0.99  &   0.85 0.88 {\bf 0.89}  &  {\bf 0.89} 0.81 0.79  &  {\bf 0.61} 0.81 0.71  &  {\bf 0.69} 0.83 0.69  &  {\bf 0.87} 0.98 0.95  &  {\bf 0.61} 0.79 0.68  \\  

    \textcolor{darkgreen}{N2-Rank} &   0.50 1.00 1.00  &    0.52 0.99 1.00  &   0.82 0.89 {\bf 0.89}  &  0.87 0.82 {\bf 0.85}  &  0.51 0.81 0.71  &  0.63 0.84 0.73  &  0.85 0.98 0.96  &  0.56 0.80 0.69  \\
    \midrule
    Katz &     0.48 1.00 1.00  &   0.50 1.00 1.00  &   0.90 0.79 0.83  &  0.60 0.80 {\bf 0.85}  &  0.43 0.83 0.73  &  0.51 0.87 0.78  &  0.74 0.99 0.98  &  0.49 0.82 0.73  \\  

PageRank (0.5) &   0.51 1.00 1.00  &  0.50 0.99 1.00  &  0.93 0.78 0.82  &  0.69 0.80 0.84  &  0.47 0.82 0.72  &  0.57 0.86 0.76  &  0.76 0.99 0.98  &  0.53 0.82 0.71  \\  

PageRank (0.85) &  0.49 1.00 1.00  &   0.51 0.99 1.00  &  0.88 0.79 0.79  &  0.60 0.78 0.84  &  0.42 0.84 0.74  &  0.49 0.87 0.79  &  0.73 0.99 0.98  &  0.47 0.83 0.72  \\  

PageRank (0.99) &    0.45 1.00 1.00  &   0.50 0.99 1.00  &  0.83 0.79 0.81  &  0.57 0.78 0.85  &  0.40 0.84 {\bf 0.76}  &  0.46 0.87 { 0.80}  &  0.72 0.99 0.99  &  0.45 0.83 0.73  \\  

Onion &   0.34 1.00 1.00  &   0.22 0.99 0.98  &   0.92 0.75 0.75  &  0.50 0.71 0.77  &  0.24 0.83 0.73  &  0.35 0.88 {\bf 0.82}  &  0.45 0.98 {\bf 0.97}  &  0.35 0.82 {\bf 0.75} \\  

Degree &   0.52 1.00 1.00  &  0.50 0.99 1.00  &   {\bf 0.96} 0.79 0.81  &  0.73 0.80 0.81  &  0.52 0.80 0.69  &  0.61 0.85 0.71  &  0.78 0.98 0.96  &  0.57 0.81 0.67  \\
    \bottomrule
    \end{tabular}
    }
    \caption{Preservation ratio, ICEF, and purity score of Louvain of top $20\%$ ranked points}
    \label{tab:single-cell-preserve-purity}
\end{table}

\paragraph{Summarization.}
Thus, our relative centrality measures obtain a more balanced ranking (which we capture with the preservation ratio) in a large set of real-world datasets compared to traditional centrality measures while having a comparable improvement in the structural accuracy (captured by ICEF) as well as improvement in the performance of downstream clustering algorithms (captured by the purity of Louvain outcome of the selected cores). Now, we conclude our paper with some more discussions.

\section{Conclusion}
\label{sec: conclusion}

In this paper, we have explored the coexistence of community structures and core-periphery structures in graphs with the formalization of $\sf{MCPC}$, and devised balanced core-ranking algorithms for graphs with this structure via novel concept that we term as relative centrality. 

We believe that this is only an initial step towards understanding such structure in real world datasets, and posit several open questions that we think are important, and of interest. 
\begin{enumerate}

    \item Our proofs are in the MCPC-block model, which we use to prove the unbalancedness of traditional centrality measures and the performance of our algorithm. It will be interesting to obtain similar proofs in the other simulation model we study, the concentric GMM model. Furthermore, our proofs can be extended to the case when the size of the cores and peripheries are different. Finally, we note that while we showed the results for $k=\omega(\log n)$, but it can be modified to the case of $k \ge 2 \log n$.

    \item We proposed the algorithm concept of relative centrality and proposed the meta-algorithm Meta-Rank in Algorithm~\ref{alg:metaRank}. We used two instantiations of our algorithm, namely N2-Rank and RN-Rank and showed interesting ICEF improvement vs. balancedness tradeoffs. However, obtaining the best possible instantiations for different problems remains an interesting open direction. 

    \item Even though our methods provide a superior preservation ratio compared to the traditional centrality measures in the single-cell data, we also have a lower-than-ideal preservation ratio in some datasets, such as the Segerstolpe dataset. Designing algorithms with even higher balancedness is an important goal.

    \item In this paper, our real-world experiments focused on a large set of single-cell datasets. We believe \sf{MCPC} structures may be present in other kinds of datasets as well, and we are in the process of capturing more examples.

    \item It is well known that complex real-world vector datasets are known to have non-linear geometry, which is often referred to as the manifold hypothesis (or more recently, the union of manifold hypothesis). In this direction, a characterization of non-linear vector datasets such that their graph embeddings have $\sf{MCPC}$ structures will further enhance our understanding of the underlying geometries.

    \item In this paper, we focused on \emph{unweighted} graphs. Exploration of weighted NN embedding (with different weight functions) and other kinds of embedding, such as heat-kernel embedding, is an important future work.

    \item Finally, we note that in this paper, we used our methods to obtain a subset of the dataset that is better separable into the ground truth communities. An important next step, and thus a limitation of this work, is that we do not use the clustering on this set to obtain a better clustering of the whole dataset. While we have some progress in this direction, it is beyond the scope of the paper and requires a systematic study.

\end{enumerate}

\bibliography{reference-MCPC}
\bibliographystyle{alpha}

\appendix
\newpage
\section{Structures and results in the \texorpdfstring{$\sf{MCPC}$}~-block model}
\label{app: proof}

In this section, we provide theoretical support to the observed unbalancedness of centrality measures as well as the balancedness and efficacy of relative centrality as discussed in Section~\ref{sec:graph-setting} in the random graph model. We first  reintroduce the model for ease of following.

\paragraph{The generative block model}
We are interested in the random graph generated by a $4$-block model. In this model, we are given a set of $n$ vertices $V=\{v_1, \ldots , v_n\}$ that has a partition into communities $V_0, V_1$, and each community $V_{\ell}$ has a further partition into a core $V_{\ell,1}$ and the periphery $V_{\ell,0}$. We focus on the balanced case, where $|V_{\ell,c}|=n/4$ for any $(\ell,c)$ pair.  
This is associated with a $4 \times 4$ block-probability matrix $\mathbb{P}$, that is indexed with $(i,j) \in \{0,1\}^2$. Furthermore, $\mathbb{P}$ is row stochastic, i.e,
$\sum_{(\ell',c')} \mathbb{P}[(\ell,c),(\ell',c')]=1$ for all $(i,j)$ pair. 

Then, for each vertex pair $v_i \in V_{\ell,c}$ and $v_j \in V_{\ell,c'}$, 
we add an $v_i \rightarrow v_j$ edge with probability $\mathbb{P}[(\ell,c),(\ell',c')]\cdot \frac{k}{|V_{\ell',c'}|}$.

Then, we work under the following setting.

\begin{enumerate}
    \item $\mathbb{P}[(\ell,c'),(\ell,1
)]$ is $\Omega(1)$ for all $\ell$ and any $c'$. That is, within a community, a constant fraction of edges going out from a periphery vertex ends up in the corresponding core, and a constant fraction of edges originating in a core vertex remains in the core. 

    \item $k=\omega(\log n)$.

    \item We denote the total degree, in-degree, and out-degree of a vertex $v_i$ with $\deg(v_i)$, $\deg_{+}(v_i)$ and $\deg_{-}(v)$ respectively.

\end{enumerate}
We denote the resultant graph-generating process as 
$\sf{BM}(\{V_{\ell,c}\}_{(\ell,c) \in \{0,1\}^2},\mathbb{P},k)$

Then, we are interested in the case when $\sf{BM}$ generates an $(\alpha,\beta)-\sf{MCPC}$ structure with $\alpha=\Omega(1)$ and $\beta \ll 1$. Recall that this implies that $\sf{CC}_G(V_{i,1})>\sf{CC}_G(V_{i,0}) +\alpha$ for $i \in \{0,1\}$ and the fraction of the inter-community edges originating in the core is $\beta$ fraction of that of the ones starting in the periphery.

\paragraph{Preliminary bounds}
We are primarily interested in the degree of the vertices generated by such a graph to quantify the unbalancedness in degree centrality as well as the power of our \emph{relative centrality} approach. We define the map $X:[n] \rightarrow \{0,1\}^2$ s.t  $X[i]=(\ell,c) \iff v_i \in V_{\ell,c}$. First we note down the Chernoff concentration bound, which will be useful going forward.

\begin{theorem}[Chernoff Hoeffding bound~\cite{Chernoff}]
\label{thm: chernoff}
Let $z_1, \ldots , z_n$ be i.i.d random variables that can take values in $\{0,1\}$, with $\mathbb{E}[z_i]=p$ for $1 \le i \le n$. 
Then we have 
\begin{enumerate}
   \item $\Pr \left( \frac{1}{n}\sum_{i=1}^n z_i \ge p+ \epsilon \right) \le e^{-D(p+\epsilon||p)n} $    

   \item $\Pr \left( \frac{1}{n}\sum_{i=1}^n z_i \le p - \epsilon \right) \le e^{-D(p-\epsilon||p)n} $
   
\end{enumerate}

where $D(x||y)$ is the KL divergence of $x$ and $y$.
\end{theorem}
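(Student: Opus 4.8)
The plan is to use the classical exponential-moment (Chernoff) method, proving the upper tail first and obtaining the lower tail by a symmetric argument. Write $S = \sum_{i=1}^n z_i$ and set $q = p + \epsilon$. For any $\lambda > 0$, applying Markov's inequality to the monotone transform $x \mapsto e^{\lambda x}$ gives
\begin{equation}
\Pr\left( S \ge qn \right) = \Pr\left( e^{\lambda S} \ge e^{\lambda q n} \right) \le e^{-\lambda q n}\, \mathbb{E}\!\left[ e^{\lambda S} \right].
\end{equation}
Since the $z_i$ are i.i.d., the moment generating function factorizes as $\mathbb{E}[e^{\lambda S}] = \left( \mathbb{E}[e^{\lambda z_1}] \right)^n$, and for a Bernoulli$(p)$ variable $\mathbb{E}[e^{\lambda z_1}] = 1 - p + p e^{\lambda}$.

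Combining these, the tail is bounded by the $n$-th power
\begin{equation}
\Pr\left( S \ge qn \right) \le \left( e^{-\lambda q}\left(1 - p + p e^{\lambda}\right) \right)^{n},
\end{equation}
valid for every $\lambda > 0$. The next step is to minimize the base over $\lambda$. Differentiating its logarithm and setting the result to zero yields the optimizer $e^{\lambda^\star} = \frac{q(1-p)}{p(1-q)}$, which is genuinely positive precisely because $q = p + \epsilon > p$. Substituting $\lambda^\star$ back collapses the base to $\left(\frac{p}{q}\right)^{q}\left(\frac{1-p}{1-q}\right)^{1-q}$.

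The main work is the final algebraic identification: I would verify that
\begin{equation}
-\log\!\left[ \left(\tfrac{p}{q}\right)^{q}\left(\tfrac{1-p}{1-q}\right)^{1-q} \right] = q \log\frac{q}{p} + (1-q)\log\frac{1-q}{1-p} = D(q \,\|\, p),
\end{equation}
so that raising to the $n$-th power produces exactly $e^{-D(p+\epsilon \,\|\, p)\, n}$, which is the first inequality. This optimization-plus-simplification step is the only mildly delicate part; everything preceding it is routine, and nothing beyond Markov's inequality and independence is needed.

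For the lower tail I would run the identical argument on the complementary variables $z_i' = 1 - z_i$, which are i.i.d. Bernoulli$(1-p)$. The event $\frac{1}{n}\sum z_i \le p - \epsilon$ is exactly $\frac{1}{n}\sum z_i' \ge (1-p) + \epsilon$, so the upper-tail bound already proved applies with $p$ replaced by $1-p$. Finally I would use the symmetry $D\big((1-p)+\epsilon \,\|\, 1-p\big) = D(p-\epsilon \,\|\, p)$ of the binary KL divergence, which holds by inspecting the two summands, to rewrite the exponent and recover the stated second inequality.
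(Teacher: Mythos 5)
Your proposal is correct: the exponential-moment argument (Markov's inequality applied to $e^{\lambda S}$, factorization of the MGF by independence, optimization at $e^{\lambda^\star} = \frac{q(1-p)}{p(1-q)}$, and the algebraic identification of the optimized exponent with $D(p+\epsilon\,\|\,p)$), together with the reduction of the lower tail to the upper tail via $z_i' = 1-z_i$ and the symmetry $D(1-x\,\|\,1-y) = D(x\,\|\,y)$, is exactly the standard and complete proof of this inequality. The paper itself does not prove this theorem — it is quoted as a classical result with a citation — so your argument supplies the standard derivation and there is no discrepancy to report.
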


Next, we obtain some bounds on out-degree and in-degree of vertices.

\begin{lemma}[The graph is almost-regular w.r.t out-degree]
\label{lemma:out-degree}
Let $G$ be a graph generated from $\sf{BM}(\{V_{\ell,c}\}_{(\ell,c) \in \{0,1\}^2},\mathbb{P},k)$. Then, with probability $1-n^{-4}$, the out degree of any vertex $v_i \in V$ is bounded as
\[
k-o(k) \le \deg_{-}(v_i) \le k+o(k)
\]
\end{lemma}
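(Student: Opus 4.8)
The plan is to compute the expected out-degree exactly, establish concentration around it, and finish with a union bound over the $n$ vertices. For the expectation, fix $v_i \in V_{\ell,c}$ and write $\deg_{-}(v_i) = \sum_{j \ne i} \mathbf{1}[v_i \to v_j]$, a sum of independent indicators where the edge into $v_j \in V_{\ell',c'}$ is present with probability $\mathbb{P}[(\ell,c),(\ell',c')]\cdot k/|V_{\ell',c'}|$. Summing over the $|V_{\ell',c'}|$ vertices inside block $(\ell',c')$ contributes exactly $k\,\mathbb{P}[(\ell,c),(\ell',c')]$, so summing over the four blocks and using that $\mathbb{P}$ is row-stochastic gives $\mathbb{E}[\deg_{-}(v_i)] = k\sum_{(\ell',c')}\mathbb{P}[(\ell,c),(\ell',c')] = k$, up to an $o(1)$ correction from dropping the single $j=i$ term, which is negligible.

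Next I would establish concentration. The cleanest route that matches the stated i.i.d.\ bound is to split $\deg_{-}(v_i)$ into its four per-block contributions $S_{(\ell',c')}$, each a sum of $m=n/4$ i.i.d.\ Bernoulli variables with success probability $p_{(\ell',c')} = \mathbb{P}[(\ell,c),(\ell',c')]\cdot k/m$ and mean $\mu_{(\ell',c')} = k\,\mathbb{P}[(\ell,c),(\ell',c')] \le k$. Each block is now exactly in the form required by Theorem~\ref{thm: chernoff}. Applying it with per-variable deviation $\epsilon = t/(4m)$ and using the standard lower estimate $m\,D(p+\epsilon\,\|\,p) \ge (m\epsilon)^2/\bigl(2(\mu+m\epsilon)\bigr) = (t/4)^2/\bigl(2(\mu+t/4)\bigr) \ge t^2/\Theta(k)$ (valid since $\mu\le k$), the failure probability for one block is at most $\exp\!\bigl(-t^2/\Theta(k)\bigr)$. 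Choosing $t = \Theta(\sqrt{k\log n})$ makes this at most $n^{-6}$, and since $k = \omega(\log n)$ we get $t = o(k)$; combining the (constantly many) block and tail events keeps $\deg_{-}(v_i)$ within $k \pm o(k)$ except with probability $O(n^{-6})$.

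A union bound over the $n$ vertices then gives total failure probability $O(n^{-5}) \le n^{-4}$ for large $n$, which is the claimed statement. Alternatively, one may bypass the block decomposition and apply the multiplicative Chernoff bound directly to $\deg_{-}(v_i)$, which remains valid for sums of independent but non-identical Bernoullis, yielding $\Pr(|\deg_{-}(v_i)-k|\ge \delta k) \le 2e^{-\delta^2 k/3}$ with $\delta = \Theta(\sqrt{\log n/k}) = o(1)$; this reproduces the same conclusion in one shot.

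The step I expect to require the most care is the concentration estimate in the small-probability regime: the per-edge probabilities $p_{(\ell',c')} = \Theta(k/n)$ are tiny, so the deviation $\epsilon$ is of the same order as $p$ rather than much smaller, and the naive Gaussian approximation $D(p+\epsilon\,\|\,p)\approx \epsilon^2/(2p)$ cannot be used directly. Handling this correctly — either via the KL lower bound above or by invoking the non-identical multiplicative Chernoff bound — is the only non-routine point; the exactness of the mean via row-stochasticity and the final union bound are both immediate.
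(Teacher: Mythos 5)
Your proposal is correct and follows essentially the same route as the paper's proof: decompose the out-degree into its four per-block Bernoulli sums, apply the KL-form Chernoff bound (Theorem~\ref{thm: chernoff}) with deviation $\Theta(\sqrt{k\log n}) = o(k)$ per block, and finish with a union bound. If anything, your handling of the KL estimate is cleaner than the paper's, which (as a typo) calls $D(x\|y)\ge (x-y)^2/(2x)$ an ``upper bound'' while using it as the lower bound you state explicitly; your closing alternative via the multiplicative Chernoff bound for non-identical Bernoullis is also valid but not what the paper does.
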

\begin{proof}
Let $e_{i,j}$ denote the indicator random variable indicating which is 1 if there is an edge from $v_i$ to $v_j$, and $0$ otherwise. 

Then, $\deg_{-}(v_i)=\sum_{j \in [n]} e_{i,j}$. 
Furthermore, let $\deg_{(-,S)}(v_i)$ denote
$\sum_{j: v_j \in S} e_{i,j}$. 
Then, from a simple counting argument, we have  $\mathbb{E}[\deg_{(-,V_{\ell,c})}(v_i)]= \frac{k}{|V_{\ell,c}|} \cdot |V_{\ell,c}| \cdot \mathbb{P}[X[i],(\ell,c)]= \mathbb{P}[X[i],(\ell,c)] \cdot k$.

Next, by Chernoff bound we have
\begin{align*}
\Pr(\deg_{-}^{V_{\ell,c}}(v_i)> \mathbb{E}[\deg_{-}^{V_{\ell,c}}(v_i)]+ 8\cdot \sqrt{k}\sqrt{\log n}) 
\le e^{-D\big(\mathbb{E}[\deg_{-}^{V_{\ell,c}}(v_i)]/|V_{\ell,c}|+ 8\cdot \sqrt{k}\sqrt{\log n}/|V_{\ell,c}| \big|\big| \mathbb{E}[\deg_{-}^{V_{\ell,c}}(v_i)]/|V_{\ell,c}| \big) \cdot |V_{\ell,c}|}
\end{align*}

We note that the KL divergence $x,y$ is
$D(x||y)=x \ln(x/y) + (1-x)\ln((1-x)/(1-y))$, which is upper bounded by $(x-y)^2/2x$ when $x>y$. Then, 
the KL divergence can be upper bounded as 
\begin{align*}
&
D \bigg(\mathbb{E}[\deg_{-}^{V_{\ell,c}}(v_i)]/|V_{\ell,c}|+8\cdot \sqrt{k}\sqrt{\log n}/|V_{\ell,c}| \bigg|\bigg|\mathbb{E}[\deg_{-}^{V_{\ell,c}}(v_i)/|V_{\ell,c}| \bigg)|V_{\ell,c}|
\\ 
& 
=
D \bigg(\mathbb{E}[\deg_{-}^{V_{\ell,c}}(v_i)]/0.25n+8\cdot \sqrt{k}\sqrt{\log n}/0.25n \bigg|\bigg|\mathbb{E}[\deg_{-}^{V_{\ell,c}}(v_i)/0.25n \bigg)0.25n
\le 
\frac{64k\log n}{2\mathbb{E}[\deg_{-}^{V_{\ell,c}}(v_i)]}=
\\
&
\le 
\frac{64k\log n}{k \mathbb{P}[X[v_i],(\ell,c)]}
\le 16\log n \quad \quad \text{$\left[\sum_{\ell,c}\mathbb{P}[X[v_i],(\ell,c)] \le 1 \right]$}
\end{align*}

That is, 
\[
\Pr(\deg_{-}^{V_{\ell,c}}(v_i)> k \mathbb{P}[X[v_i],(\ell,c)]+ 8\sqrt{k}\sqrt{\log n}) \le n^{-16}
\]

Applying it for all $(\ell,c)$ and summing up we get
\[
\Pr(\deg_{-}(v_i) \ge k \cdot \sum_{(\ell,c)} \mathbb{P}[X[i],(\ell,c)] + 32\sqrt{k}\log n) \le n^{-15}
\]

Here we note that $\sum_{(\ell,c)} \mathbb{P}[X[i],(\ell,c)]=1$ and that $\sqrt{k}\sqrt{\log n}=o(k)$ as $k=\omega(\log n)$. This completes the upper bound. The lower bound follows similarly.
\end{proof}

Then, the distribution of the in-degree of the vertices can be obtained as follows.

\begin{lemma}
\label{lemma:expected-degree}
Let $G$ be a graph generated from $\sf{BM}(\{V_{\ell,c}\}_{(\ell,c) \in \{0,1\}^2},\mathbb{P},k)$. Then, 
for any vertex $v_i \in  V$, the expected in-degree of $v_i$ is
\[
\mathbb{E} \left[ \deg_{+}(v_i) \right]=k \cdot \sum_{(\ell',c')} \mathbb{P}[(\ell',c'),X[i]] 
\]
\end{lemma}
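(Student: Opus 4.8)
The plan is to compute the expectation directly by linearity of expectation, mirroring the out-degree calculation in Lemma~\ref{lemma:out-degree} but summing edge-indicator probabilities over all \emph{sources} pointing into $v_i$ rather than over the targets leaving it. The one point requiring care is the model's normalization convention: the probability assigned to an edge is scaled by the reciprocal of the size of its \emph{target} block. Since $v_i$ is the common target of every in-edge counted by $\deg_{+}(v_i)$, all of these probabilities share the same normalizing factor $k/|V_{X[i]}|$, which is exactly what lets the block sizes cancel in the balanced case.

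First I would write $\deg_{+}(v_i)=\sum_{j \neq i} e_{j,i}$, where $e_{j,i}$ is the indicator that the edge $v_j \rightarrow v_i$ is present (matching the notation $e_{i,j}$ of Lemma~\ref{lemma:out-degree}). Linearity gives $\mathbb{E}[\deg_{+}(v_i)]=\sum_{j\neq i}\Pr(v_j \rightarrow v_i)$, so no independence is needed and it suffices to sum these edge probabilities.

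Next I would group the sources by block membership. Fixing a block $V_{\ell',c'}$, the model rule states that each $v_j \in V_{\ell',c'}$ sends an edge to $v_i$ with probability $\mathbb{P}[(\ell',c'),X[i]]\cdot k/|V_{X[i]}|$. Since there are $|V_{\ell',c'}|$ such sources, the block contributes $|V_{\ell',c'}|\cdot \mathbb{P}[(\ell',c'),X[i]]\cdot k/|V_{X[i]}|$ to the expected in-degree. In the balanced regime of this section, $|V_{\ell',c'}|=|V_{X[i]}|=n/4$, so this contribution simplifies to $k\cdot \mathbb{P}[(\ell',c'),X[i]]$.

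Finally, summing the four block contributions yields $\mathbb{E}[\deg_{+}(v_i)]=k\sum_{(\ell',c')}\mathbb{P}[(\ell',c'),X[i]]$, as claimed. There is no substantive obstacle here: the result is an immediate consequence of linearity once the target-block normalization is tracked, and it is recorded separately only because it is the in-degree counterpart of Lemma~\ref{lemma:out-degree} and supplies the expected-degree term used in the degree-centrality analysis of Theorem~\ref{thm: deg-limitaion}.
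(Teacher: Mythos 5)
Your proposal is correct and follows essentially the same route as the paper's proof: linearity of expectation over edge indicators $e_{j,i}$, grouping the sources by block, and using the target-block normalization $k/|V_{X[i]}|$ together with the balanced block sizes so that the $|V_{\ell',c'}|$ factors cancel. The only cosmetic difference is that you sum over $j \neq i$ while the paper sums over all $j \in [n]$, a negligible $O(k/n)$ distinction that does not affect the argument.
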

\begin{proof}
Let the vertices of $V$ be denoted as $v_1, \ldots, v_n$. Let $e_{i,j}$ denote the indicator random variable indicating which is 1 if there is an edge from $v_i$ to $v_j$, and $0$ otherwise. 

First, note that all $e_{i,j}$ are independent. Then, $\Pr(e_{i,j}=1)=\frac{\mathbb{P}[X[i],X[j]] \cdot k}{|V_{X[j]}|}$. Then, summing on the expectation we get, for any $v_i$
\begin{align*}
\mathbb{E}[deg_{+}(v_i)]= 
&\sum_{j \in [n]} \mathbb{E}[e_{j,i}]
=\sum_{(\ell',c') \in \{0,1\}^2}
|V_{\ell',c'}| \cdot \frac{\mathbb{P}[(\ell',c'),X[j]] \cdot k}{|V_{X[i]}|}
\\ = &
\sum_{(\ell',c')} \frac{n}{4} \cdot \frac{\mathbb{P}[(\ell',c'),X[j]] \cdot k}{n/4}
=   k \cdot \sum_{(\ell',c')} \mathbb{P}[(\ell',c'),X[i]] 
\end{align*}

\end{proof}

Then, we have the following bound on the indegree of the vertices.

\begin{lemma}   
\label{lem:deg-concentration}
Let $G$ be a graph generated from $\sf{BM}(\{V_{i,j}\}_{(i,j) \in \{0,1\}^2},\mathbb{P},k)$. Then, the in-degree of any vertex $v$ is bounded as 
\[
|\Pr(|\deg_{+}(v)-\mathbb{E}[\deg_{+}(v)]
| \ge 8\cdot \sqrt{k}\sqrt{\log n} ) \le n^{-16}
\]
\end{lemma}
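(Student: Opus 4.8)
The plan is to apply the Chernoff–Hoeffding bound (Theorem~\ref{thm: chernoff}) directly to the in-degree, exactly mirroring the computation already carried out for the out-degree in Lemma~\ref{lemma:out-degree}. Write $\deg_{+}(v_i)=\sum_{j\in[n]} e_{j,i}$, where $e_{j,i}$ is the indicator of a $v_j\to v_i$ edge. Unlike the out-degree case, the in-edges into a fixed $v_i$ come from source blocks with possibly different probabilities, so I would first decompose $\deg_{+}(v_i)$ as a sum $\sum_{(\ell',c')}\deg_{+}^{V_{\ell',c'}}(v_i)$ over the four source blocks, where $\deg_{+}^{V_{\ell',c'}}(v_i)=\sum_{j: v_j\in V_{\ell',c'}} e_{j,i}$. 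Within each source block the indicators are i.i.d.\ Bernoulli with probability $\mathbb{P}[(\ell',c'),X[i]]\cdot k/|V_{X[i]}|$, and by Lemma~\ref{lemma:expected-degree} the total expectation is $\mathbb{E}[\deg_{+}(v_i)]=k\cdot\sum_{(\ell',c')}\mathbb{P}[(\ell',c'),X[i]]$.

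Next I would bound each block's deviation. For a fixed source block $V_{\ell',c'}$ of size $|V_{\ell',c'}|=n/4$, apply Theorem~\ref{thm: chernoff} with deviation $8\sqrt{k}\sqrt{\log n}/|V_{\ell',c'}|$ around the mean $\mathbb{E}[\deg_{+}^{V_{\ell',c'}}(v_i)]/|V_{\ell',c'}|$. Using the standard estimate $D(x\|y)\ge (x-y)^2/(2x)$ for $x>y$ (and the symmetric version for the lower tail), the exponent becomes
\[
D\!\left(\tfrac{\mathbb{E}[\deg_{+}^{V_{\ell',c'}}(v_i)]+8\sqrt{k}\sqrt{\log n}}{n/4}\,\Big\|\,\tfrac{\mathbb{E}[\deg_{+}^{V_{\ell',c'}}(v_i)]}{n/4}\right)\cdot \tfrac{n}{4}\ge \frac{64\,k\log n}{2\,\mathbb{E}[\deg_{+}^{V_{\ell',c'}}(v_i)]}\ge 16\log n,
\]
where the last inequality uses $\mathbb{E}[\deg_{+}^{V_{\ell',c'}}(v_i)]=k\,\mathbb{P}[(\ell',c'),X[i]]\le k$. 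This gives each per-block deviation probability at most $n^{-16}$ (times a constant, which I would absorb), for both tails.

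Then I would combine the four blocks by a union bound and triangle inequality: if every per-block in-degree is within $8\sqrt{k}\sqrt{\log n}$ of its mean, the total in-degree is within $4\cdot 8\sqrt{k}\sqrt{\log n}=32\sqrt{k}\sqrt{\log n}$ of $\mathbb{E}[\deg_{+}(v_i)]$, with failure probability at most a constant times $n^{-16}$. The stated bound in the lemma quotes deviation $8\sqrt{k}\sqrt{\log n}$; I would either rescale the per-block deviation to $2\sqrt{k}\sqrt{\log n}$ so the four-block sum gives exactly $8\sqrt{k}\sqrt{\log n}$ (the KL exponent then drops to $4\log n$, still yielding an $n^{-4}$-type bound that is easily strong enough), or simply treat the constant $8$ versus $32$ as an unimportant prefactor as is done implicitly in Lemma~\ref{lemma:out-degree}. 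The main thing to be careful about is the bookkeeping of which quantity the deviation is measured against—the KL bound is naturally phrased in terms of the normalized fraction $\deg/|V_{\ell',c'}|$, so I must consistently multiply the exponent back by $|V_{\ell',c'}|=n/4$. There is no genuine obstacle here; the only subtlety is matching the claimed constant, and the argument is an essentially verbatim repetition of the out-degree proof with the source-block decomposition inserted.
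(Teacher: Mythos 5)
Your proposal is correct, and it is worth recording how it differs from the paper's own proof. The paper does \emph{not} decompose by source block: it invokes Theorem~\ref{thm: chernoff} a single time on the full sum $\deg_{+}(v)=\sum_{j\in[n]}e_{j,i}$ with normalized mean $\mathbb{E}[\deg_{+}(v)]/n$, bounds the exponent by $64k\log n\big/\bigl(2\mathbb{E}[\deg_{+}(v)]\bigr)$, and uses the column-sum bound $\sum_{(\ell,c)}\mathbb{P}[(\ell,c),X[v]]\le 4$; this yields the constant $8\sqrt{k}\sqrt{\log n}$ and $n^{-16}$ exactly as stated. Strictly speaking, that single invocation is not licensed by Theorem~\ref{thm: chernoff} as written, because the indicators $e_{j,i}$ are independent but \emph{not} identically distributed — their means depend on the block of the source vertex $v_j$. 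Your per-block decomposition, inside which the indicators genuinely are i.i.d.\ Bernoulli, is precisely what makes each application of the theorem legitimate, and it reproduces verbatim the structure the paper itself uses for the out-degree in Lemma~\ref{lemma:out-degree} (which, notably, also ends up with the constant $32$ there). The price is the factor $4$ from the union bound over source blocks. Of your two proposed fixes, take the first one: keep the per-block deviation at $8\sqrt{k}\sqrt{\log n}$ and accept a total deviation of $32\sqrt{k}\sqrt{\log n}$ with failure probability $O(n^{-16})$. All downstream uses (the pairwise union bound inside Lemma~\ref{lem:deg-concentration-relation}, the $1-n^{-7}$ claim of Fact~\ref{fact: deg-core}, the $1-n^{-4}$ claim of Theorem~\ref{thm: deg-limitaion}) need only a deviation of $o(k)$ but with per-vertex failure probability far below $n^{-2}$, so the constant in front of $\sqrt{k\log n}$ is immaterial while the strength of the tail is not. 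Your second fix is weaker than you state: with per-block deviation $2\sqrt{k}\sqrt{\log n}$ the exponent is $4k\log n\big/\bigl(2(\mathbb{E}[\deg_{+}^{V_{\ell',c'}}(v_i)]+2\sqrt{k\log n})\bigr)\approx 2\log n$, not $4\log n$, and an $n^{-2}$-type per-vertex bound would not survive those union bounds. Finally, a small bookkeeping correction (a sloppiness the paper shares): in the lower bound $D(x\|y)\ge (x-y)^2/(2x)$ the denominator carries the \emph{shifted} value $x$, so you should bound $\mathbb{E}[\deg_{+}^{V_{\ell',c'}}(v_i)]+8\sqrt{k}\sqrt{\log n}\le k+o(k)\le 2k$ rather than first dropping the deviation term and then using $\mathbb{E}\le k$; the conclusion $\ge 16\log n$ is unchanged.
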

\begin{proof}
We obtain this using Theorem~\ref{thm: chernoff}. 

For the upper tail, we have
\begin{align*}
\Pr(\deg_{+}(v)> \mathbb{E}[\deg_{+}(v)]+ 8\cdot \sqrt{k}\sqrt{\log n}) 
\le e^{-D(\mathbb{E}[\deg_{+}(v)]/n+ 8\cdot \sqrt{k}\sqrt{\log n}/n||\mathbb{E}[\deg_{+}(v)]/n)n}
\end{align*}

We note that the KL divergence between Bernoulli random variables $x,y$
$D(x||y)=x \ln(x/y) + (1-x)\ln((1-x)/(1-y))$, which is upper bounded by $(x-y)^2/2x$ when $x>y$. Then, 
$D(\mathbb{E}[\deg_{+}(v)]/n+4\cdot \sqrt{k}\sqrt{\log n}||\mathbb{E}[\deg_{+}(v)]/n)n$ can analyzed as
\begin{align*}
&
D(\mathbb{E}[\deg_{+}(v)]/n+ 8\cdot \sqrt{k}\sqrt{\log n}||\mathbb{E}[\deg_{+}(v)]/n)n
\le 
\frac{64k\log n}{2\mathbb{E}[\deg_{+}(v)]}=
\\
&
\le 
\frac{64k\log n}{k \cdot \sum_{\ell,c}\mathbb{P}[(\ell,c),X[v]]}
\le 16\log n & \text{$\left[\sum_{\ell,c}\mathbb{P}[(\ell,c),X[v] \le 4\right]$}
\end{align*}
Then, substituting, we get an upper bound of $n^{-16}$.
\end{proof}

This, along with the fact that the sum of entries of any column of $\mathbb{P}$ is $\Omega(1)$, gives the following fact. 
\begin{fact}
\label{fact: deg-core}
Let $G$ be a graph sampled from $\sf{BM}(\{V_{\ell,c}\}_{(\ell,c) \in \{0,1\}^2}$ where 
$k=\omega(\log n)$ and $\mathbb{P}[(\ell,c),(\ell,1)]=\Omega(1)$. Then for all vertices 
$v_i \in V_{\ell,1}$, we have $\deg_{+}(v)=\Omega(k)$ with probability $1-n^{-7}$.
\end{fact}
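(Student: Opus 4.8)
The plan is to combine the concentration bound on the in-degree (Lemma~\ref{lem:deg-concentration}) with a lower bound on the \emph{expected} in-degree of a core vertex. By Lemma~\ref{lemma:expected-degree}, for any $v_i \in V_{\ell,1}$ we have $\mathbb{E}[\deg_{+}(v_i)] = k \cdot \sum_{(\ell',c')} \mathbb{P}[(\ell',c'),(\ell,1)]$. The hypothesis of the fact assumes $\mathbb{P}[(\ell,c'),(\ell,1)] = \Omega(1)$ for all $c'$; in particular at least one term in the column sum corresponding to the core block $(\ell,1)$ is bounded below by a positive constant. Hence the entire column sum is $\Omega(1)$, which gives $\mathbb{E}[\deg_{+}(v_i)] = \Omega(k)$.

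Next I would control the deviation. Lemma~\ref{lem:deg-concentration} states that $|\deg_{+}(v) - \mathbb{E}[\deg_{+}(v)]| \le 8\sqrt{k}\sqrt{\log n}$ fails with probability at most $n^{-16}$. Since $k = \omega(\log n)$, the deviation term satisfies $\sqrt{k}\sqrt{\log n} = o(k)$, so it is negligible compared to the $\Omega(k)$ expectation. Therefore, conditioned on the concentration event, $\deg_{+}(v_i) \ge \mathbb{E}[\deg_{+}(v_i)] - o(k) = \Omega(k)$ for a single core vertex $v_i$.

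Finally I would upgrade the single-vertex statement to a simultaneous statement over all core vertices via a union bound. There are at most $n$ vertices in $V_{\ell,1}$ (indeed at most $n$ vertices total), so the probability that the concentration bound fails for \emph{some} core vertex is at most $n \cdot n^{-16} = n^{-15} \le n^{-7}$. On the complementary event, which has probability at least $1 - n^{-7}$, every core vertex $v_i \in V_{\ell,1}$ simultaneously satisfies $\deg_{+}(v_i) = \Omega(k)$, as claimed.

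I do not expect any genuine obstacle here, since the fact is essentially an immediate corollary of the two preceding lemmas; the only points requiring minor care are (i) verifying that the $\Omega(1)$ assumption on $\mathbb{P}[(\ell,c'),(\ell,1)]$ indeed forces the relevant column sum to be $\Omega(1)$, and (ii) tracking the $o(k)$ absorption of the deviation term into the $\Omega(k)$ lower bound, both of which follow directly from $k = \omega(\log n)$.
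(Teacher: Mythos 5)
Your proof is correct and follows essentially the same route as the paper's: combine Lemma~\ref{lemma:expected-degree} (giving $\mathbb{E}[\deg_{+}(v_i)]=\Omega(k)$ via the $\Omega(1)$ column-sum assumption) with the $8\sqrt{k}\sqrt{\log n}=o(k)$ deviation bound of Lemma~\ref{lem:deg-concentration}. The only difference is that you make the union bound over the $n$ vertices explicit, which the paper leaves implicit; this is a welcome bit of extra care rather than a departure.
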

\begin{proof}
    This is straight from the fact that $\mathbb{E}[\deg_{+}(v_i)]=\Omega(k)$ and the tail deviation is $\sqrt{k}{\sqrt{\log n}}=o(k)$ with high probability.
\end{proof}

Then we make a connection between the concentration of any of the core/periphery blocks and degree of each vertices in the core.

\begin{lemma}
\label{lem:deg-concentration-relation}
Let $G$ be a graph sampled from  $\sf{BM}(\{V_{\ell,c}\}_{(\ell,c) \in \{0,1\}^2},\mathbb{P},k)$ where $k=\omega(\log n)$. 
Then, for any vertex $v_i \in V_{\ell,c}$ we have with probability $1-n^{-4}$,
\[
\deg_{+}(v_i)= k(1 \pm o(1)) +
k\cdot \sf{CC}_G(V_{\ell,c}) (1 \pm o(1)).
\]
\end{lemma}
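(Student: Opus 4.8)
The plan is to reduce the statement to the two concentration facts already in hand, namely Lemma~\ref{lemma:out-degree} (near-regularity of out-degrees) and Lemma~\ref{lem:deg-concentration} (concentration of in-degrees around $\mathbb{E}[\deg_{+}]$), after rewriting $\CC_G(V_{\ell,c})$ in a form that only involves the total in- and out-degree of the block $S := V_{\ell,c}$. The key algebraic observation is that the numerator of $\CC_G(S)$ telescopes. Using $E(\bar S,S)=E(V,S)-E(S,S)$ and $E(S,\bar S)=E(S,V)-E(S,S)$, the two $E(S,S)$ terms cancel, and since $E(V,S)=\sum_{v\in S}\deg_{+}(v)$ and $E(S,V)=\sum_{v\in S}\deg_{-}(v)$ we obtain
\[
\CC_G(S)=\frac{E(\bar S,S)-E(S,\bar S)}{E(S,V)}=\frac{E(V,S)-E(S,V)}{E(S,V)}=\frac{\sum_{v\in S}\deg_{+}(v)}{\sum_{v\in S}\deg_{-}(v)}-1 .
\]
In other words, the core concentration of $S$ is just the ratio of the total in-degree to the total out-degree of $S$, minus one.

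Next I would control the two sums. By Lemma~\ref{lemma:out-degree} and a union bound over the $n/4$ vertices of $S$, the denominator satisfies $\sum_{v\in S}\deg_{-}(v)=\tfrac{n}{4}k(1\pm o(1))$. For the numerator, note that by Lemma~\ref{lemma:expected-degree} every vertex of $S$ shares the same expected in-degree $\mathbb{E}[\deg_{+}(v)]=k\cdot C$, where $C:=\sum_{(\ell',c')}\mathbb{P}[(\ell',c'),(\ell,c)]$ is the column sum of $\mathbb{P}$ indexed by the block $(\ell,c)$; applying Lemma~\ref{lem:deg-concentration} to each such vertex (deviation $O(\sqrt{k\log n})=o(k)$) and a union bound gives $\sum_{v\in S}\deg_{+}(v)=\tfrac{n}{4}kC(1\pm o(1))$. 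Dividing, the factors $\tfrac{n}{4}k$ cancel and, since $C\le 4=O(1)$ by row-stochasticity of $\mathbb{P}$, I obtain $\CC_G(S)=C-1\pm o(1)$, equivalently $C=\big(\CC_G(S)+1\big)\pm o(1)$.

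Finally I would return to the single vertex $v_i$: Lemma~\ref{lemma:expected-degree} together with Lemma~\ref{lem:deg-concentration} gives $\deg_{+}(v_i)=kC\pm o(k)$, and substituting the relation just derived yields
\[
\deg_{+}(v_i)=kC\pm o(k)=k\big(\CC_G(S)+1\big)\pm o(k)=k(1\pm o(1))+k\,\CC_G(S)(1\pm o(1)),
\]
which is the claim (the $\pm o(k)$ error is absorbed into the $k(1\pm o(1))$ term). The probability bound follows from a union bound over the $O(n)$ invocations of Lemmas~\ref{lemma:out-degree} and~\ref{lem:deg-concentration}, each failing with probability at most $n^{-16}$, giving overall failure well below $n^{-4}$. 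The only genuinely delicate point is propagating the $o(1)$ relative errors through the ratio defining $\CC_G(S)$; this is controlled because the denominator $\sum_{v\in S}\deg_{-}(v)=\Theta(nk)$ stays bounded away from zero by near-regularity, and because $C=O(1)$ keeps $\CC_G(S)=O(1)$, so the absolute and relative error terms both remain $o(1)$. Everything else is routine bookkeeping on top of the earlier lemmas.
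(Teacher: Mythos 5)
Your proposal is correct and takes essentially the same route as the paper's proof: both rest on the telescoping identity $\CC_G(S)+1=\big(\sum_{v\in S}\deg_{+}(v)\big)\big/\big(\sum_{v\in S}\deg_{-}(v)\big)$, near-regularity of out-degrees (Lemma~\ref{lemma:out-degree}), and in-degree concentration around the common block expectation (Lemmas~\ref{lemma:expected-degree} and~\ref{lem:deg-concentration}); the paper merely phrases the last step as all in-degrees within the block being within $O(\sqrt{k\log n})$ of one another, rather than introducing the column sum $C$ explicitly. The only nitpick is that your multiplicative statement $\sum_{v\in S}\deg_{+}(v)=\tfrac{n}{4}kC(1\pm o(1))$ tacitly assumes $C=\Omega(1)$, whereas keeping the error additive ($kC\pm o(k)$ per vertex, as the paper effectively does) requires no such assumption and feeds into the identical conclusion.
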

\begin{proof}

Let us first recall the definition of concentration. We have $\sf{CC}_G(S)=(E(\bar{S},S)-E(S,\bar{S}))/|E(S,V)|$.

First, we note that $E(S,V)=k(1 \pm o(1))\cdot |V_{\ell,1}|$. Also, let $\deg_{(+,S)}(u)$ denote the number of edges connected to $u$ from the set $S$. 

Then, we have 
\begin{align*}
&\sf{CC}_G(V_{\ell,c})=
\frac{\sum_{v_i \in V_{\ell,c}} deg_{+}(v_i)
- \sum_{v_i \in V_{\ell,c}} deg_{(+,V_{\ell,c})}(v_i)
-\sum_{v_i \in \overline{V_{\ell,c}}} deg_{(+,V_{\ell,c})}(v_i)}{k(1 \pm o(1))\cdot|V_{\ell,c}|}
\\
&= \frac{\sum_{v_i \in V_{\ell,c}} deg_{+}(v_i)
- \sum_{v_i \in V} deg_{(+,V_{\ell,c})}(v_i)}{k(1 \pm o(1))\cdot|V_{\ell,c}|}
\\
&=
\frac{\sum_{v_i \in V_{\ell,c}} deg_{+}(v_i)
- k(1 \pm o(1)) \cdot |V_{\ell,c}|}{k(1 \pm o(1))\cdot|V_{\ell,c}|}
\end{align*}

That is,
\begin{equation}
\label{eq:deg-concentration-1}
\sf{CC}_G(V_{\ell,c})=\frac{1}{|k(1 \pm o(1)) \cdot V_{\ell,c}|}\cdot\sum_{v_i \in V_{\ell,c}}\deg_{+}(v)-1
\end{equation}
    
Then, we can use the fact that the in-degree of each vertex in $V_{\ell,c}$ is bounded tightly from Lemma~\ref{lem:deg-concentration}. Applying an union bound we get that with probability $1-n^{-6}$, 
$|\deg_{+}(v_i)-\deg_{+}(v_{i'})| \le 8\sqrt{k}\sqrt{\log n}$ for any $v_i,v_{i'} \in V_{\ell,c}$. 

Then, with the same probability, for any $v_i \in V_{\ell,c}$, we have that 
$|V_{\ell,c}\cdot \deg_{+}(v_{i})-\sum_{v_{i'} \in V_{\ell,c}} deg_{+}(v_i')|\le 32\sqrt{k}\sqrt{\log n}|V_{\ell,c}|$.

Furthermore note that as $k=\omega(\log n)$, we have $\sqrt{k}\sqrt{\log n}/k=o(1)$.
This implies with probability $1-n^{-4}$, 
for any $v_i \in V_{\ell,c}$,
\begin{align*}
&
\left|\sf{CC}_G(V_{\ell,c})-\left(\frac{|V_{\ell,c}|}{|k(1 \pm o(1)) \cdot V_{\ell,c}|}\cdot \deg_{+}(v_i)-1 \right ) \right|  \le 32\sqrt{k}\sqrt{\log n}/k
\\
\implies &
\left|\sf{CC}_G(V_{\ell,c})-\left(\frac{1}{k(1 \pm o(1))}\cdot\deg_{+}(v)-1 \right ) \right|  =o(1)
\\
\implies &
\frac{1}{k(1 \pm o(1))} \cdot \deg_{+}(v_i) =1+CC_G(V_{\ell,c}) \pm o(1)
\\
\implies &
\deg_{+}(v_i)= k+k\cdot \sf{CC}_G(V_{\ell,c}) \pm  o(k)\sf{CC}_G(V_{\ell,c}) \pm o(k)
\\
\implies &
\deg_{+}(v_i)= k(1 \pm o(1)) +
k\cdot \sf{CC}_G(V_{\ell,c}) (1 \pm o(1)).
\end{align*}

This completes the proof.
\end{proof}

\subsection{Proof of Theorem~\ref{thm: deg-limitaion}}

Now, we are ready to complete our first proof.
First, we know that $\deg(v_i)=\deg_{+}(v)+\deg_{-}(v)$,
where $\deg_{-}(v)=k \pm o(k)$. Then Lemma~\ref{lem:deg-concentration-relation} directly implies that with probability 
$1-n^{-4}$, 
$\deg(v_i) =2k+k\sf{CC_G}(V_{\ell,1}) \pm o(k)$.

Here, Fact~\ref{fact: deg-core} dictates  $\deg_{+}(v)=\Omega(k)$, and simply applying to Equation~\ref{eq: CC}, we get 
$\sf{CC}_G(V_{\ell,1})=\Omega(1)$. Then, we can write that
$deg(v_i)= k \cdot (2+\sf{CC}_G(V_{\ell,1})
\pm o(\sf{CC_G}(V_{\ell,1})))$, which completes the proof.

\subsection{Analysis of 1-step N-Rank: Algorithm~\ref{alg:NgRank}}

Here, recall that first, we obtain the score $F(v_i)$ for all vertices $v_i \in V$. It is easy to see, that $F(v_i)=\deg_{+}(v_i)$ for all vertices. 

Then in the next step, for each vertex $v_i$, we select $S_{v_i}$ as the set of vertices with a higher $F$ score. Then, we obtain
$\hat{F}(v_i)= \frac{F(v_i)}{\sf{average}_{v_j \in S_{v_i}} F(v_j)}$.

Then, we are ready to prove that $\hat{F}(v_i)$ is between $1-o(1)$ and $1$ for any core vertex $v_i$.

Recall that the graph has $(\Omega(1),o(1/k))-\sf{MCPC}$ structure. 
That is, $\sf{CC}_G(V_{\ell,1}) \ge \alpha+V_{\ell',0}
$ for any $(\ell,\ell')$ pair where $\alpha>0$ is a constant.

Let $CC$ be the minimum concentration among the cores. 
Then, Lemma~\ref{lem:deg-concentration-relation} dictates, that with probability $1-n^{-3}$,
\[
\text{If $v_i \in V_{\ell,0}$ then }\deg_{+}(v_i) \le k+k\cdot (CC-\alpha)+ o(k)
\]

On the other hand, for core vertices we have

\[
\text{If $v_i \in V_{\ell,1}$ then }\deg_{+}(v_i) \ge k+k\cdot CC- o(k)
\]

This implies that for any $v_i \in V_{\ell,1}$ and $v_j \in V_{\ell',0}$, $F_G(v_i)>F_G(v_j)$ with high probability.

Thus, in Algorithm~\ref{alg:NgRank} when we select $S_{v_i}$ for any $v_i \in V_{\ell,1}$, it does not include any periphery vertices. Then, we can show that the final score of all core vertices will be pretty similar.

\subsection{Proof of Theorem~\ref{thm: main}}

Consider any $V_{\ell,1}$. 
We first note that $\beta=o(1/k)$. Then it is easy to see that  $\mathbb{P}(V_{\ell,1},V_{\ell',1})= q=o(1/k)$ for any $\ell,\ell'$.

First we count the number of vertices in $V_{\ell,1}$ that has an outgoing edge to $V_{\ell',1}$ for some $\ell' \ne \ell$ (that is inter-core edges originating in $V_{\ell,1}$. This can be upper bounded as 
$\sum_{v_i \in V_{\ell,1}} \deg_{(+,V_{\ell',1})}(v_i)$. This sum has an expected value of $|V_{\ell,1}| \cdot \frac{|V_{\ell',1}| k \cdot o(1/k)}{|V_{\ell',1}|}=o(|V_{\ell,1}|)$. With high probability this can also be upper bounded by $o(|V_{\ell,1}|)+\sqrt{|V_{\ell,1}|}\sqrt{\log n}=o(|V_{\ell,1}|)$.

Let $S_{\ell,1}$ denote the vertices in $V_{\ell,1}$ that \emph{do not} have any outgoing edge to $V_{\ell',1}$. For any such vertex, $\max \{\hat{F}(v_j)\}_{v_j \in N_G(v_i)} \le k \cdot \sum_{(\ell',c')} \mathbb{P}[(\ell',c'),X[i]] +o(k)$.

On the other hand, $F_G(v_i) \ge k \cdot \sum_{(\ell',c')} \mathbb{P}[(\ell',c'),X[i]] -o(k)$.

Then we have
\begin{align*}
&
\hat{F}(v_i) \ge \frac{k \cdot \sum_{(\ell',c')} \mathbb{P}[(\ell',c'),X[i]] -o(k)}{k \cdot \sum_{(\ell',c')} \mathbb{P}[(\ell',c'),X[i]] +o(k)}
\\
&=
1- \frac{2\cdot o(k)}{k \cdot \sum_{(\ell',c')}\mathbb{P}[(\ell',c'),X[i]] +o(k)}
\\
&
\ge 1-o(1) &\text{ [As we know $k \cdot \sum_{(\ell',c')}\mathbb{P}[(\ell',c'),X[i]]=\Omega(k)$]}
\end{align*}

This completes our proof. 

Finally, we also show that the periphery vertices have a lower score than almost all core vertices.
\begin{lemma}[Separation between $\hat{F}$ score of cores and peripheries]
\label{lem: core-periphery-separation}
Let $G$ be a graph sampled from  $\sf{BM}(\{V_{\ell,c}\}_{(\ell,c) \in \{0,1\}^2},\mathbb{P},k)$ where $k=\omega(\log n)$. Then, for any $v_i \in V_{\ell,0}$ (a periphery vertex), $\hat{F}(v_i) < \min_{v_j \in V_{\ell,1}} \hat{F}(v_j)$.
\end{lemma}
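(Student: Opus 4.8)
The plan is to place the two families of scores on opposite sides of a constant gap. I will show that every periphery vertex satisfies $\hat F(v_i)\le 1-\Omega(1)$, while Theorem~\ref{thm: main} already gives $\hat F(v_j)\ge 1-o(1)$ for the relevant core vertices; since $1-o(1)>1-\Omega(1)$ once $k$ is large, the separation follows. Throughout I use two facts established earlier: the initial score is the in-degree, $F(v)=\deg_{+}(v)$, and by Lemma~\ref{lem:deg-concentration-relation} the in-degree of a vertex in block $V_{\ell,c}$ concentrates at $k\bigl(1+\CC_G(V_{\ell,c})\bigr)(1\pm o(1))$ with high probability.

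First I would analyze the reference set $S_{v_i}$ of a periphery vertex $v_i\in V_{\ell,0}$. Because $\mathbb{P}[(\ell,0),(\ell,1)]=\Omega(1)$, a Chernoff bound as in Lemma~\ref{lemma:out-degree} shows that with high probability at least $c_0 k(1-o(1))$ of the out-neighbors of $v_i$ land in its own core $V_{\ell,1}$, where $c_0=\mathbb{P}[(\ell,0),(\ell,1)]$. By Lemma~\ref{lem:deg-concentration-relation} each such neighbor $w$ has $F(w)=k\bigl(1+\CC_G(V_{\ell,1})\bigr)(1\pm o(1))\ge k\bigl(1+\CC_G(V_{\ell,0})+\alpha\bigr)(1-o(1))>F(v_i)$, using the $\alpha$-gap of the MCPC structure; hence all of them belong to $S_{v_i}$. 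Splitting $S_{v_i}=\{v_i\}\cup C\cup R$ with $C$ these guaranteed core neighbors ($|C|\ge c_0 k(1-o(1))$) and $R$ the remaining members (each with $F\ge F(v_i)$ by construction), and bounding $|S_{v_i}|\le\deg_{-}(v_i)+1=k(1+o(1))$, the average is a weighted mean of $F(v_i)\approx k(1+\CC_G(V_{\ell,0}))$ and the core level $k(1+\CC_G(V_{\ell,1}))$ in which $C$ carries weight at least $c_0(1-o(1))$, so that
\[
\underset{v_j\in S_{v_i}}{\avg}\, F(v_j)\;\ge\; k\bigl(1+\CC_G(V_{\ell,0})\bigr)+c_0\,\alpha\,k\,(1-o(1)).
\]
Dividing $F(v_i)$ by this yields $\hat F(v_i)\le \dfrac{1+\CC_G(V_{\ell,0})}{1+\CC_G(V_{\ell,0})+c_0\alpha}+o(1)=1-\Omega(1)$, where the last step uses $c_0,\alpha=\Omega(1)$ and $\CC_G(V_{\ell,0})=O(1)$ (which holds since any concentration of a size-$\Theta(n)$ block lies in $[-1,O(1)]$). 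Combined with Theorem~\ref{thm: main}, this already gives $\hat F(v_i)<\hat F(v_j)$ for the $1-o(1)$ fraction of core vertices covered there.

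The main obstacle is the ``$\min$ over all of $V_{\ell,1}$'' in the statement, i.e.\ controlling the $o(1)$ fraction of core vertices that Theorem~\ref{thm: main} excludes, namely those incident to inter-core edges. The delicate point is that such a vertex $v_j$ may have a neighbor in a more concentrated core whose larger $F$ enters $S_{v_j}$ and depresses $\hat F(v_j)$. I would argue this perturbation is negligible for essentially all of them: since $\beta=o(1/k)$ forces $\mathbb{P}[(\ell,1),(\ell',1)]=o(1/k)$, the expected inter-core out-degree of any vertex is $k\cdot\mathbb{P}[(\ell,1),(\ell',1)]=o(1)$, so with high probability every vertex has at most $o(k)$ inter-core neighbors while still retaining $\Omega(k)$ own-core neighbors at its own concentration level. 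Since the own-core values in $S_{v_j}$ all lie within $o(k)$ of $F(v_j)$ while the $o(k)$ inter-core values contribute only an $o(1)$-relative term to a reference average of size $\Theta(k^2)$, the average remains $F(v_j)(1\pm o(1))$ and hence $\hat F(v_j)=1-o(1)$ for all such vertices outside a lower-order event. This keeps even the minimum core score above the periphery ceiling $1-\Omega(1)$, and the stated strict inequality follows; I expect the only genuinely careful bookkeeping to be in certifying that the rare upper-tail core vertices (few own-core neighbors with larger $F$) do not escape this domination argument.
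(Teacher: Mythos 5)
Your periphery-side bound is the same argument as the paper's own proof: a periphery vertex $v_i \in V_{\ell,0}$ has $k_1=\Omega(k)$ out-neighbors in $V_{\ell,1}$ (from $\mathbb{P}[(\ell,0),(\ell,1)]=\Omega(1)$), each of whose $F$-value exceeds $F(v_i)$ by at least $\alpha k - o(k)$ by Lemma~\ref{lem:deg-concentration-relation} and the $\alpha$-gap, so all of them enter $S_{v_i}$; padding the remaining members of $S_{v_i}$ at value $F(v_i)$ and using $|S_{v_i}|\le \deg_{-}(v_i)+1=k(1+o(1))$ lifts the reference average to $F(v_i)\bigl(1+\Omega(1)\bigr)$, hence $\hat F(v_i)\le 1-\Omega(1)$. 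This is precisely the paper's weighted-mean computation, which ends with $\hat F(v_i)\le 1/(1+C_4)$.

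The divergence is on the core side, where you are in fact more scrupulous than the paper, but your patch does not close the hole you correctly identify. The paper simply asserts that core vertices have $\hat F\ge 1-o(1)$ and declares separation, even though Theorem~\ref{thm: main} proves that bound only for the $1-o(1)$ fraction of core vertices with no inter-core out-edges; so the paper never controls $\min_{v_j\in V_{\ell,1}}\hat F(v_j)$ over the exceptional vertices either. Your fix claims the reference average of an exceptional core vertex $v_j$ stays $F(v_j)(1\pm o(1))$ because its few inter-core members are drowned out by a reference sum of size $\Theta(k^2)$; this presupposes that $S_{v_j}$ contains $\Theta(k)$ \emph{own-core} members, which conflates ``own-core neighbors'' with ``own-core neighbors of strictly larger $F$.'' The presupposition fails exactly for vertices near the top of their own core's $F$-ranking: if $v_j$ is, say, the maximum-$F$ vertex of the less concentrated core and happens to have one out-edge into the more concentrated core, then $S_{v_j}=\{v_j,u\}$ with $F(u)-F(v_j)=\Theta(k)$, giving $\hat F(v_j)\le 1-\Omega(1)$, indistinguishable from a periphery score. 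Since having an inter-core out-edge (probability about $k\,\mathbb{P}[(\ell,1),(\ell',1)]$) is independent of $v_j$'s in-degree rank (top-$O(1/k)$ quantile), the expected number of such problematic vertices is of order $n\,\mathbb{P}[(\ell,1),(\ell',1)]$, which is $o(n/k)$ but need not be $o(1)$ for a slowly decaying $\beta=o(1/k)$; neither your domination argument nor a union bound rules them out. So what your proof genuinely establishes is the same as what the paper's establishes: every periphery vertex sits a constant below $1$, while the core vertices covered by Theorem~\ref{thm: main} sit at $1-o(1)$ --- the strict $\min$ over \emph{all} of $V_{\ell,1}$ remains unproven in both.
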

\begin{proof}

Let $v_i \in V_{\ell,0}$ be a periphery vertex s.t. ${\sf CC}(V_{\ell,0})=CC$.   Then, its neighbors consist of some core vertices and other periphery vertices. 

Let $k_1$ be the number of neighbors of $v_i$ that is a core vertex. As $\mathbb{P}[(\ell,0),(\ell,1)]=\Omega(1)$, we know $k_1=\Omega(k)$. 

Furthermore, if $v_j \in V_{\ell,1}$, $\deg_{+}(v_j)$ (which is its $F(v_j)$ score) is lower bounded by $k+k\cdot (CC +\alpha) - o(k)$. On the other hand, 
$\deg_{+}(v_j) \le k+ k\cdot CC +o(k)$.

Then, 
$\sf{average}_{v_j \in S_{v_i}} F(v_j)$ is lower bounded by 
\begin{align*}
\sf{average}_{v_j \in S_{v_i}} F(v_j)
 \ge &
\frac{k_1\cdot \underset{v_j \in S_{v_j} \cap V_{\ell,1} }{\min} \deg_{+}(v_j) +(\deg_{-}(v_i)-k_1) \cdot \deg_{+}(v_i)}{\deg_{-}(v_i)}
\\
\ge & \deg_{+}(v_i) + \frac{k_1}{\deg_{v_i}} \cdot \left(\underset{v_j \in S_{v_j} \cap V_{\ell,1} }{\min} \deg_{+}(v_j)-\deg_{+}(v_i)  \right)
\\
\ge & 
\deg_{+}(v_i) + \frac{k_1}{\deg_{v_i}} \cdot
(k\cdot \alpha -o(k)) && \text{[Using the  lower bound on $\deg_{+}(v_j): v_j \in V_{\ell,1}$} \\ & && \text{ and upper bound on $\deg_{+}(v_i)$]}
\\
\ge & 
\deg_{+}(v_i) + 
C_3 \cdot k && \text{[For some constant $C_3$]}
\\
\ge & 
\deg_{+}(v_i) + 
C_4 \cdot \deg_{+}(v_i) && \text{[For some constant $C_4$ as $\deg_{+}(v_i)=\OO(k)$]}
\\ 
\ge & \deg_{+}(v_i)(1+C_4)
\end{align*}

Then, we have $\hat{F}(v_i) \le \frac{\deg_{+}(v_i)}{\deg_{+}(v_i)(1+C_4)} \le \frac{1}{1+C_4}$. That is, $\hat{F}(v_i)$ is upper bounded by a constant less than $1$. 
On the other hand, $\hat{F}(v_i)$ value of core vertices is upper bounded by $1-o(1)$. Then there exists a large enough $n_0$ such that for all $n \ge n_0$ there is a separation.
\end{proof}

\section{Meta Algorithm}
\label{app:meta-algorithm}

\begin{algorithm}[ht]
   \caption{A meta generalization:\\ 
   Meta-Relative-Rank $(t,p,q)$}
   \label{alg:metaRank}
\begin{algorithmic}
   \STATE {\bfseries Input:} Graph $G(V,E)$ and meta-parameters $t,p,q$.

    \FOR{i in 1:n}
    \STATE $F^{(t)}(v_i) \gets \sum_{j}A^t_{j,i}$    
    \ENDFOR

    \STATE

    \STATE  \COMMENT{1. \textcolor{teal}{Obtain a $p$-hop NeighborRank}}

    \STATE

    \FOR{$v_i \in V$}

    \STATE $S^{(p)}_{v_i} \gets
    \{ v_j:v_j \in N_{G,y}(v_i), 
    F^{(t)}(v_j) > F^{(t)}(v_i) \} \cup \{v_i\}$

    \STATE 
    
    \STATE $\hat{F}^{(t)}(v_i) \gets 
    {\underset{v_j \in S^{(p)}_{v_i}}{\avg}[F^{(t)}(v_j)}] \big/ {F^{(t)}(v_i)}$

    \ENDFOR

    \STATE

    \STATE \COMMENT{2. \textcolor{teal}
    {Recurse the process $q$ times}}

    \STATE
    
    \STATE $H \gets \hat{F}^{(t)}$, $counter \gets 0$
    
    \WHILE{ $counter<q$} 

    \FOR{$v_i \in V$}
    
    \STATE $S^{(p)}_{v_i} \gets
    \{ v_j:v_j \in N_{G,y}(v_i), 
    H(v_j) > H(v_i) \} \cup \{v_i\}$

    \STATE 
    
    \STATE $\hat{H}(v_i) \gets 
    {\underset{v_j \in S^{(p)}_{v_i}}{\avg}[H(v_j)}] \big/ {H(v_i)}$

    \ENDFOR

    \STATE $H \gets \hat{H}$, $counter \gets counter+1$.

    \ENDWHILE

     \RETURN $\hat{H}$

\end{algorithmic}
\end{algorithm}

We design the algorithm meta-algorithm by extending the idea of N-Rank (Algorithm~\ref{alg:NgRank}) in two ways (as we briefly discussed in Section~\ref{sec:graph-setting}.

1) There may be periphery vertices in the graph that have a high $F_G$ value compared to its $1$-hop neighborhood. To mitigate this issue, we can look at look at some $p$-hop neighborhood $N_{G,p}(v)$ of $v$ when selecting the reference set. 

As we look at a larger set of vertices for comparison, this method is less likely to report local maxima as core nodes and thus cause \emph{increased core prioritization}. On the other hand, if we look at a vertex from a sparse core, then a large-hop neighborhood may contain more vertices from other cores, and using them in the reference set reduces their final core. This leads to potentially a  \emph{lower balancedness.}

2)    
We have observed that our N-Rank approach
increases the balancedness in the initial centrality measure $F$. In this direction, we can recursively apply this process by first calculating the $p$-hop N-Rank value and then feeding it back to the algorithm as the initial centrality measure to further increase balancedness. We can apply this recursive process any $q \ge 1$ many times. 

The idea is that if the $p$-hop N-Rank has higher balancedness than the initial centrality measure, recursively applying the process should result in \emph{increased balancedness} up to a point. On the other hand, such a method can also amplify any loss of core prioritization due to $p$-hop N-Rank, and thus lead to potentially a   \emph{lower core prioritization}. 

\subsection{The choice of t for the initial centrality measure}
In the first step of our algorithm, we obtain an initial centrality measure $F^{(t)}$ by obtaining the $t$-th power of the graph's adjacency matrix and taking the sum of the columns. When $t=1$, this converges to the in-degree of a vertex (up to some multiplicative factor). When $t$ is larger, this can be thought of as a truncated variant of PageRank without any damping for $k$-regular graphs.

In our experiments, we set $t=1$ for the graphs generated by the $\sf{MCPC}$ block model and $t=\log |V|$ for both concentric GMM as well as real-world experiments. This is based on the intuition that if $t$ is large, then it can help discard local minima. In real-world data, there may be some periphery vertices that do not have any outgoing edges to a core and, as such, can have the highest $F^{(t)}$ value among its outgoing neighbors, thus obtaining a score of $1$. A larger $t$ can solve this issue.

On the other hand, if $t$ is too large, it can further increase the $F^{(t)}$ value of a core with higher concentration (as more random walks will be trapped there) and can reduce the balancing effect of the subsequent steps. While we do not need to do a hyperparameter tuning for our real-world experiments (fixing $t=\log |V|$ generates encouraging results), it is important to see whether a data-dependent value of $t$ can be obtained. 

In general, observing the impact of using different initial centrality measures on the overall performance is an important direction.


\section{More exploration of concentric GMM}
\label{app: conc-GMM}

In Section~\ref{sec: C-GMM}, we observed that in the concentric GMM setting, if the variances of one core-periphery pair are larger than the other pair, traditional centrality measures generate a biased ranking on the $K$-NN embedding of such a graph with two examples. Here, we do a larger-scale simulation to concretize this observation and further look at the ICEF-improvement vs. balancedness tradeoff of our methods. We use the same setup as in Section~\ref{app: conc-GMM}, but run our methods on graphs generated with several different parameters. 

We set $d=20$, size of each core or periphery $V_{\ell,c}=2000$, and the two centers being ${\bf c_1}=\{0\}^d$ and ${\bf c_2}=0.3\cdot \{1\}^d$. Then, we set the following variances of the distributions, parameterized by $\gamma$. 

$\sigma_{0,1}=0.1, \sigma_{1,1}=0.3$ and $\sigma_{1,c}=\gamma \cdot \sigma_{0,c}$ for some $\gamma \ge 1$.
Then, we generate $V_{\ell,c}$ many points with a $d$-dimensional isotropic Gaussian with $\bf c_{\ell}$ as the center and $\sigma_{\ell,c}$ as variance, with the cores being generated with lower variance distribution than their corresponding periphery.

Then, when $\gamma=1$, both the cores  (and the peripheries) are generated from 
distributions of the same variance $0.1$ (0.3 for the peripheries). As a result, the cores have similar concentrations, and the graph satisfies a $(0.2,0.75)$-$\sf{MCPC}$ structure. Then, the average ICEF of the induced subgraph, as well as the total balancedness of the ranking of all methods, are very similar. 

Next, we increase $\gamma$ slowly, and it results in the variance of the second community being higher both for the core and the periphery. The results are captured in Figure~\ref{fig: conc-GMM-total}. While the average ICEF of all the methods is pretty similar (within $4\%$), the traditional centrality measures have significantly higher balancedness, becoming as high as $20\%$ for $\gamma=2$. 

Furthermore, among our methods, RN-Rank seems to have the highest balancedness, whereas N-Rank seems to have the highest ICEF. This tradeoff indicates the possibility of more interesting algorithms via the relative centrality framework going forward.

        \begin{figure}[t]
          \subfigure[\label{fig-GMM-ICEF} Average ICEF]{\includegraphics[scale=0.45]{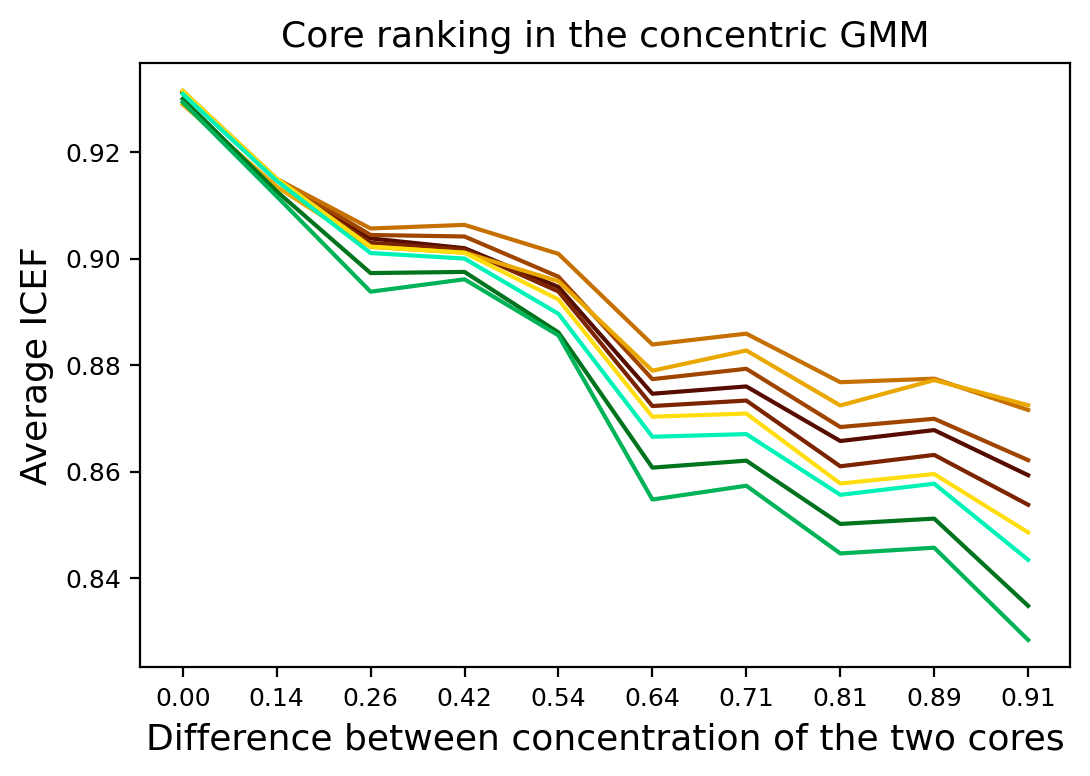}}
          \subfigure[\label{fig-GMM-balanced} Balancedness]{\includegraphics[scale=0.45]{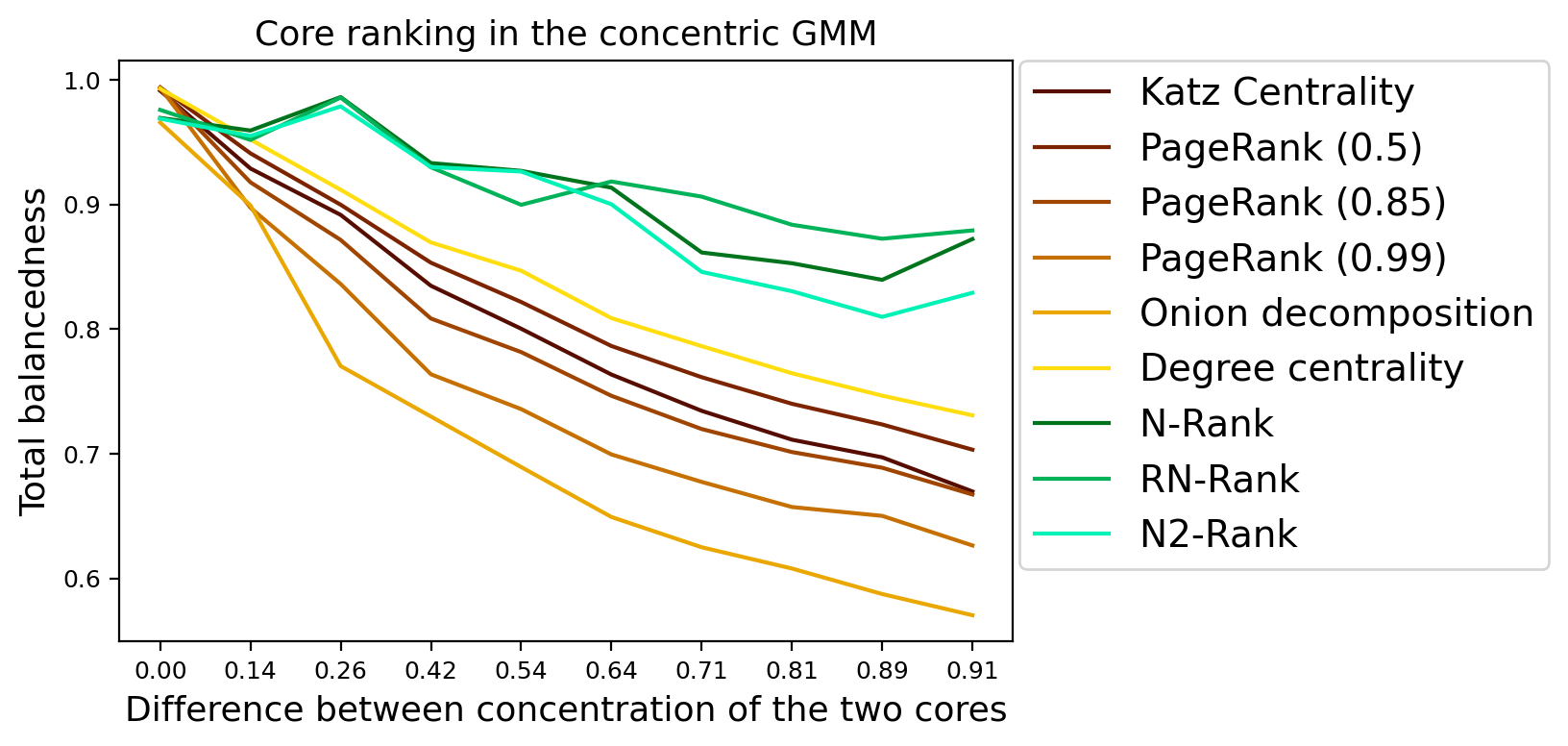}}
          \caption{Effect of core ranking on the 20-NN embedding of data generated with the concentric GMM for different instantiations of the model}
          \label{fig: conc-GMM-total}
        \end{figure}

\section{Single-cell data}
\label{app:single-cell}

\paragraph{Datasets}
First, we provide a detailed description of the datasets we use in Table~\ref{tab:dataset-details}.
Note that each dataset has annotated labels that we use to verify the performance of our algorithm.  Next, we describe the experimental setup. For each dataset, we first log-normalize it and then apply PCA dimensionality reduction of dimension 50, which is a standard pipeline in the single-cell analysis literature~\cite{single-cell-duo}. Then, we obtain $20$-NN embedding of each of the datasets and obtain a ranking of the vertices via both the baseline and traditional centrality measures as well as our relative centrality approach. Then, we select some $c$-fraction of the vertices from the top of the ranking and calculate the intra-community edge fraction of the induced subgraph by these vertices and also the preservation ratio, using the annotated labels.

\begin{table}
    \centering
    \begin{tabular}{ccccc}
    \toprule
       Name & Abbreviation & \# of points & \# of  communities & Source\\ \midrule
       Baron\_Human & BM & 8569 & 14 & \cite{single-cell-7data}\\
       Baron\_Mouse & BH  & 1886 & 13 & \cite{single-cell-7data} \\
       Muraro & Mu  & 2122 & 9 & \cite{single-cell-7data} \\
       Segerstolpe & Se  & 2133 &  13 & \cite{single-cell-7data}\\
       Xin & Xi & 1449 & 4 & \cite{single-cell-7data}\\
       Zhengmix8eq & Zh  & 3994 & 8 & \cite{single-cell-duo} \\
       T-cell dataset & Tcell  & 5759 & 10 & \cite{nature-medicine-t-cell} \\
       ALM & ALM  & 10068 & 136 &  \cite{single-cell-ALM-VISP}\\
       AMB & AMB  & 12382 & 110 &  \cite{single-cell-7data}\\
       TM & TM  & 54865 & 55 & \cite{single-cell-7data}\\
       VISP & VISP  & 15413 & 135 & \cite{single-cell-ALM-VISP} \\
        \bottomrule
    \end{tabular}
    \caption{Details of the scRNA datasets we use}
    \label{tab:dataset-details}
\end{table}

\paragraph{ICEF and preservation ratio of all datasets}
We put the change in the intra-community edge fraction as well as the preservation ratio by changing the fraction of points we select in the ranking for all of the $11$ single-cell datasets that we consider in this paper in Appendix~\ref{app:scRNA-image}. As can be observed from the figures, our methods have comparable improvement in intra-community edge fraction to the baselines. However, our methods generally have a superior preservation ratio. We make the following three observations. 

i) Our RN-Rank method provides the best preservation ratio among all the methods, and it has slightly lower ICEF improvement. Here, note that for different values of $c$, the baseline methods, in fact, completely fail to include many more underlying communities than our relative centrality based methods.

ii) Among our methods, N2-Rank provides the highest improvement in ICEF and has a lower preservation ratio than our other methods. In this direction, a better understanding of the preservation ratio-ICEF improvement tradeoffs of our framework is a very interesting future direction.

iii) Finally, for the Zhengmix8eq dataset, the traditional centrality measures do not provide any improvement in ICEF via subset selection, as can be observed in Figure~\ref{fig:scRNA-6}. This further points to weaknesses in the traditional centrality measures and the power of our relative centrality framework.

\paragraph{Balancedness of ranking}
We note that the balancedness values for the datasets Xin, Zheng, Tcell, and ALM are moderate, and we observe the same patterns as with the preservation ratio, with RN-Rank obtaining the best results. The other datasets show that the balancedness AUC values are very small, less than $0.1$. This indicates that at least one cluster is lost for these datasets when we filter out points. We attribute this to these datasets having several very small clusters. Improving our algorithms to have non-negligible (worst case) balancedness in such datasets is an important future direction. 

\paragraph{Purity improvement upon core-ranking based point selection}

Here, we put the ICEF, purity, and preservation ratio of the top $20\%$ points for different CR algorithms for all the datasets.

\begin{table}[ht]
    \centering
    \hspace*{-.7in}
    \resizebox{1.2\linewidth}{!}{
    \begin{tabular}{cccccccccccc}
      Datasets  & BM &  BH & MU & Se & Xi & Zh & Tcell  & ALM  & AMB &TM &VISP \\
      \# of points & 8569 & 1886 & 2122 & 2133 & 1449 & 3994 & 5759  & 10068  & 12832 &54865 &15413 \\ 
    Metrics & PR ~NMI& PR ~NMI& PR ~NMI& PR ~NMI& PR ~NMI& PR ~NMI&PR ~NMI&PR ~NMI&PR ~NMI&PR ~NMI&PR ~NMI\\
    \toprule
     \makecell{Original \\ values}   &  1.00  0.75 & 1.00 0.70  & 1.00 0.74 & 1.00 0.67  &  1.00  0.60 & 1.00 0.72  &  1.00 0.46  &  1.0 0.74   &  1.00 0.74 & 1.00 0.82 & 1.00 0.69\\
    \midrule

    \textcolor{darkolivegreen}{RN-Rank}  &     0.83 0.70  &  0.68 0.75  &  0.84 0.78  &  0.56 0.72  &  0.72 0.57  &  0.85 0.76  &  0.89 0.60  &  0.61 0.84  &  0.69 0.83  &  0.87 0.80  &  0.61 0.81  \\  

    \textcolor{darkgreen}{N2-Rank} &   0.85 0.74  &  0.50 0.78  &  0.74 0.80  &  0.52 0.74  &  0.72 0.60  &  0.82 0.77  &  0.87 0.64  &  0.51 0.85  &  0.63 0.86  &  0.85 0.82  &  0.56 0.83  \\
    \midrule
    Katz &     0.80 0.77  &  0.48 0.75  &  0.73 0.78  &  0.50 0.72  &  0.61 0.58  &  0.90 0.80  &  0.60 0.61  &  0.43 0.86  &  0.51 0.87  &  0.74 0.82  &  0.49 0.85  \\  

PageRank (0.5) &   0.80 0.76  &  0.51 0.77  &  0.75 0.78  &  0.50 0.70  &  0.64 0.57  &  0.93 0.77  &  0.69 0.63  &  0.47 0.85  &  0.57 0.86  &  0.76 0.82  &  0.53 0.84  \\  

PageRank (0.85) &  0.79 0.77  &  0.49 0.75  &  0.74 0.79  &  0.51 0.72  &  0.55 0.57  &  0.88 0.77  &  0.60 0.62  &  0.42 0.86  &  0.49 0.88  &  0.73 0.83  &  0.47 0.85  \\  

PageRank (0.99) &    0.79 0.78  &  0.45 0.76  &  0.73 0.79  &  0.50 0.72  &  0.50 0.52  &  0.83 0.79  &  0.57 0.63  &  0.40 0.87  &  0.46 0.88  &  0.72 0.83  &  0.45 0.85  \\  

Onion &   0.71 0.78  &  0.34 0.75  &  0.51 0.69  &  0.22 0.53  &  0.40 0.45  &  0.92 0.74  &  0.50 0.44  &  0.24 0.82  &  0.35 0.88  &  0.45 0.79  &  0.35 0.83 \\  

Degree &   0.76 0.75  &  0.52 0.77  &  0.76 0.75  &  0.50 0.70  &  0.70 0.60  &  0.96 0.75  &  0.73 0.60  &  0.52 0.84  &  0.61 0.85  &  0.78 0.82  &  0.57 0.82  \\
    \bottomrule
    \end{tabular}
    }
    \caption{Preservation ratio and NMI of Louvain of top $20\%$ ranked points}
    \label{tab:single-cell-preserve-NMI-full}
\end{table}

\paragraph{NMI improvement upon core-ranking based point selection}

Then, in Table~\ref{tab:single-cell-preserve-NMI}, we observe the improvement in the NMI outcome of Louvain when applied on the top $20\%$ of the ranked points by the different methods, along with the preservation ratio of the selected subset. As with the purity, all core ranking methods give subsets that have similar improvements in the NMI. 

\begin{table}[H]
    \centering
    \hspace*{-.7in}
    \resizebox{1.2\linewidth}{!}{
    \begin{tabular}{cccccccccccc}
      Datasets  & BM &  BH & MU & Se & Xi & Zh & Tcell  & ALM  & AMB &TM &VISP \\
      \# of points & 8569 & 1886 & 2122 & 2133 & 1449 & 3994 & 5759  & 10068  & 12832 &54865 &15413 \\ 
    Metrics & PR ~NMI& PR ~NMI& PR ~NMI& PR ~NMI& PR ~NMI& PR ~NMI&PR ~NMI&PR ~NMI&PR ~NMI&PR ~NMI&PR ~NMI\\
    \toprule
     \makecell{Original \\ values}   &  1.00  0.75 & 1.00 0.70  & 1.00 0.74 & 1.00 0.67  &  1.00  0.60 & 1.00 0.72  &  1.00 0.46  &  1.0 0.74   &  1.00 0.74 & 1.00 0.82 & 1.00 0.69\\
    \midrule

    \textcolor{darkolivegreen}{RN-Rank}  &     0.83 0.70  &  0.68 0.75  &  0.84 0.78  &  0.56 0.72  &  0.72 0.57  &  0.85 0.76  &  0.89 0.60  &  0.61 0.84  &  0.69 0.83  &  0.87 0.80  &  0.61 0.81  \\  

    \textcolor{darkgreen}{N2-Rank} &   0.85 0.74  &  0.50 0.78  &  0.74 0.80  &  0.52 0.74  &  0.72 0.60  &  0.82 0.77  &  0.87 0.64  &  0.51 0.85  &  0.63 0.86  &  0.85 0.82  &  0.56 0.83  \\
    \midrule
    Katz &     0.80 0.77  &  0.48 0.75  &  0.73 0.78  &  0.50 0.72  &  0.61 0.58  &  0.90 0.80  &  0.60 0.61  &  0.43 0.86  &  0.51 0.87  &  0.74 0.82  &  0.49 0.85  \\  

PageRank (0.5) &   0.80 0.76  &  0.51 0.77  &  0.75 0.78  &  0.50 0.70  &  0.64 0.57  &  0.93 0.77  &  0.69 0.63  &  0.47 0.85  &  0.57 0.86  &  0.76 0.82  &  0.53 0.84  \\  

PageRank (0.85) &  0.79 0.77  &  0.49 0.75  &  0.74 0.79  &  0.51 0.72  &  0.55 0.57  &  0.88 0.77  &  0.60 0.62  &  0.42 0.86  &  0.49 0.88  &  0.73 0.83  &  0.47 0.85  \\  

PageRank (0.99) &    0.79 0.78  &  0.45 0.76  &  0.73 0.79  &  0.50 0.72  &  0.50 0.52  &  0.83 0.79  &  0.57 0.63  &  0.40 0.87  &  0.46 0.88  &  0.72 0.83  &  0.45 0.85  \\  

Onion &   0.71 0.78  &  0.34 0.75  &  0.51 0.69  &  0.22 0.53  &  0.40 0.45  &  0.92 0.74  &  0.50 0.44  &  0.24 0.82  &  0.35 0.88  &  0.45 0.79  &  0.35 0.83 \\  

Degree &   0.76 0.75  &  0.52 0.77  &  0.76 0.75  &  0.50 0.70  &  0.70 0.60  &  0.96 0.75  &  0.73 0.60  &  0.52 0.84  &  0.61 0.85  &  0.78 0.82  &  0.57 0.82  \\
    \bottomrule
    \end{tabular}
    }
    \caption{Preservation ratio and NMI of Louvain of top $20\%$ ranked points}
    \label{tab:single-cell-preserve-NMI}
\end{table}

\section{More comparisons with existing work}
\label{app:comp}

\subsection{Core detection algorithms in single-core periphery structure}

As we discussed, there exists a large literature of core-detection algorithms particularly focused on single core-periphery structure~\cite{core-periphery-survey,core-periphery-survey-revisited,CP-survey-new}. In this direction, the recent and comprehensive survey~\cite{CP-survey-new} cited that most of the core-periphery detection algorithms need $|E|^2\log |V|$ or more time and highlighted centrality measures as being efficient. This, along with the ease of applying these methods to a multi-core periphery structure, motivated our choice of baseline.

\subsection{Research on existence of multiple cores}
As we discussed, \cite{MCPC-directed} seems to be the only notable work in the literature that considered directed graphs with multiple cores. 
In this direction, we apply their core-detection algorithm to our $\sf{MCPC}$-block model. Their method requires the knowledge of the number of cores and the number of peripheries. When the graph consists of $2$ cores and $2$ peripheries (as in our block model), their method involves first obtaining a $4$-dimensional score using the popular HITS algorithm. Then, they apply $K$-Means with $K=4$ to the four-dimensional dataset to separate into $4$ blocks.

In this direction, we apply their method to a graph generated with the block probabilities in Table~\ref{tab:beta} with $\gamma=0$ and $n=4000$. Note that this is the simplest setting, where both cores have identical behavior in terms of core concentration, inter-core edges as well as overall community structure. We show the outcome in Figure~\ref{fig:HITS-Kmeans}. As we can observe, the output is not close to the ground truth. This can, in part, be attributed to the fact that their structure, although a multi-core directed one, is quite different from our $\sf{MCPC}$ structure.

\begin{figure}[t]
    \centering
\includegraphics[scale=0.4]{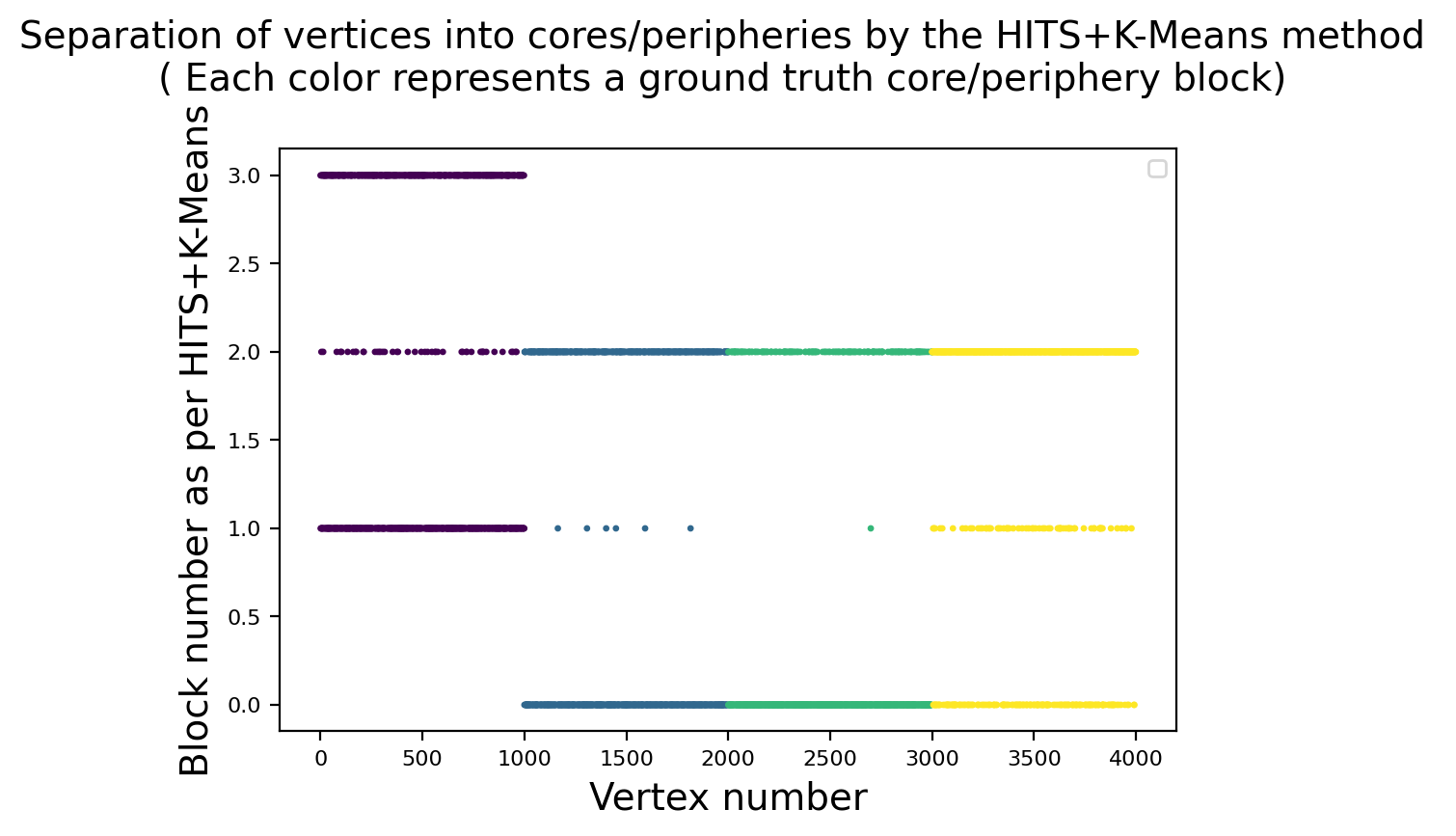}
    \caption{Performance of the HITS+K-Means algorithm in \cite{MCPC-directed} when applied to balanced $\sf{MCPC}$-block model}
    \label{fig:HITS-Kmeans}
\end{figure}

Finally, \cite{MCPC-hybrid-1} seems to be the closest structure to our $\sf{MCPC}$ structure, even though it is in an undirected graph setting. Indeed, this seems to be one of the initiating works in aiming to understand the coexistence of community and core-periphery structure in a systematic manner, and is thus an important contribution. Their approach is as follows. 

They also assume that the graph is partitioned into some ground truth communities, and each community has some core and some periphery vertices. They assume a hypothesis model in which the probability of an edge between $v_i$ and $v_j$ is 
$a\delta_{i,j}(C_i+C_j)+b$. We now decompress this definition. 
Here $\delta_{i,j}=1$ iff $i$ and $j$ belong to the same community, and $0$ otherwise. Next, $C_i=1$ if $v_i$ is a core vertex. That is, 
i) All inter-community edge probability is a fixed $b$,
ii) Intra-community core-periphery edge probability is $a+b$, and 
iii) Intra-core edge probability is $2a+b$.

Despite its expressibility, this structure has a few significant shortcomings.

First, it does not consider that the edge density can differ for different cores. We quantify this phenomenon with the concentration of a set of vertices and show that when cores have different concentrations, it can lead to many core ranking algorithms performing in an unbalanced manner, which we mitigate with our novel relative centrality framework.

Next, the model also does not consider that the inter-community edge probability between core vertices is less than between peripheral vertices. We capture this in our $(\alpha,\beta)-\sf{MCPC}$ structure definition, and this observation allows us to obtain subsets of real-world datasets with better community structure by using our core ranking algorithms.

Finally, they do not present any core detection algorithm beyond a maximum likelihood approach w.r.t the inference model we discussed. It is well known that such methods may have very slow convergence. Indeed, the experiments in \cite{MCPC-hybrid-1} consider graphs with less than $200$ vertices. In comparison, we apply our methods to datasets with $>50,000$ points and terminate generally in less than $10$ seconds, owing to our fast $\OO(\log |V||E|+k|V|)$ run time for N-Rank and RN-Rank and 
$\OO(\log |V||E|+k^2|V|)$ for N2-Rank.


\newpage
\section{Plots of ICEF of induced subgraph and preservation ratio of all single-cell datasets}
\label{app:scRNA-image}

\begin{figure}[H]
    \centering
\subfigure{\includegraphics[scale=0.45]{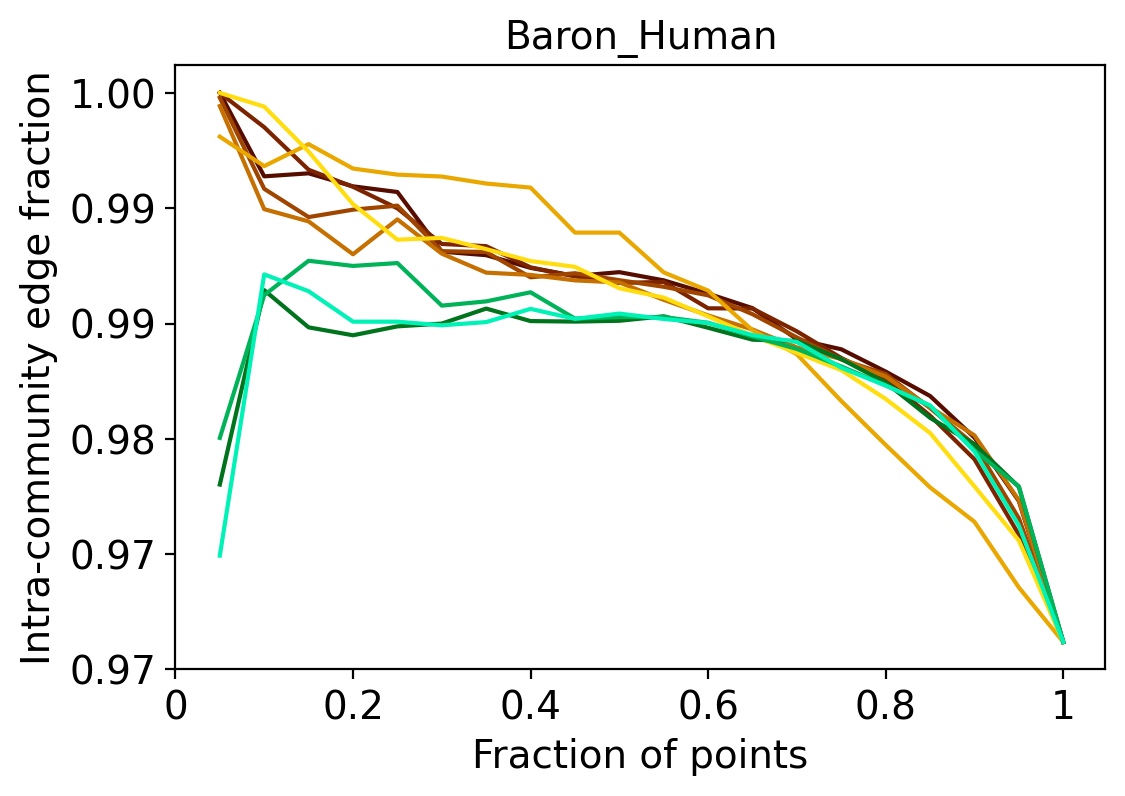}}
\subfigure{\includegraphics[scale=0.45]{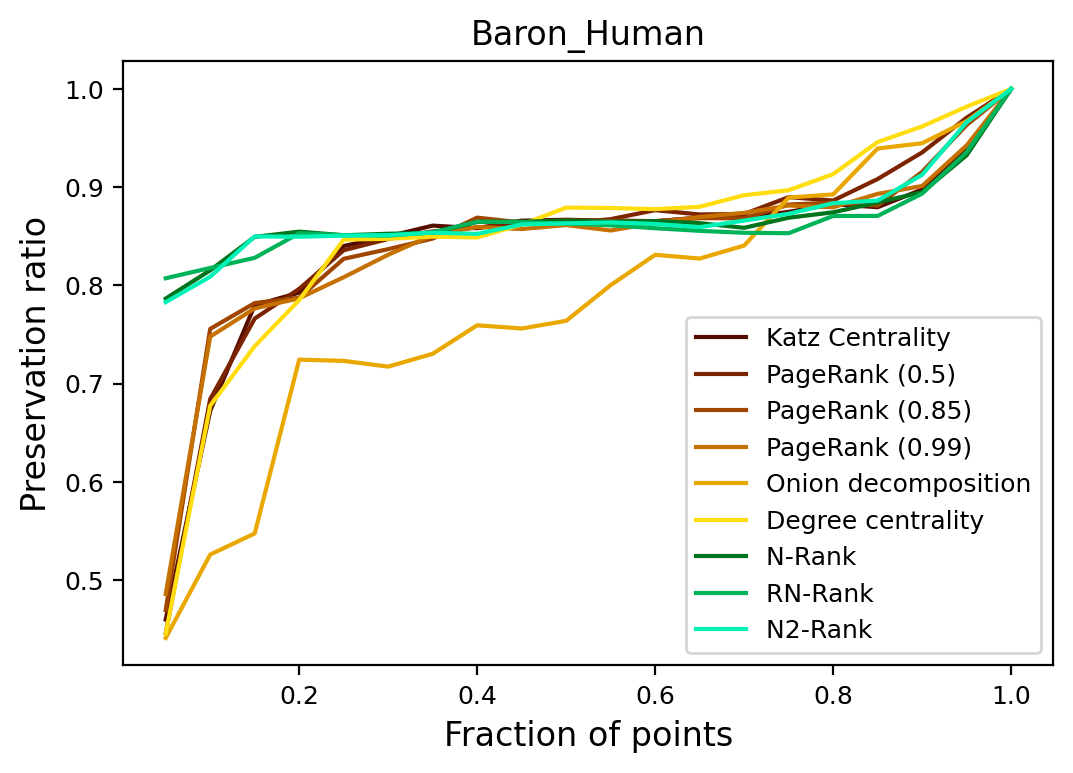}}
    \caption{Baron Human dataset}
    \label{fig:scRNA-1}
\end{figure}

\begin{figure}[H]
    \centering
\subfigure{\includegraphics[scale=0.45]{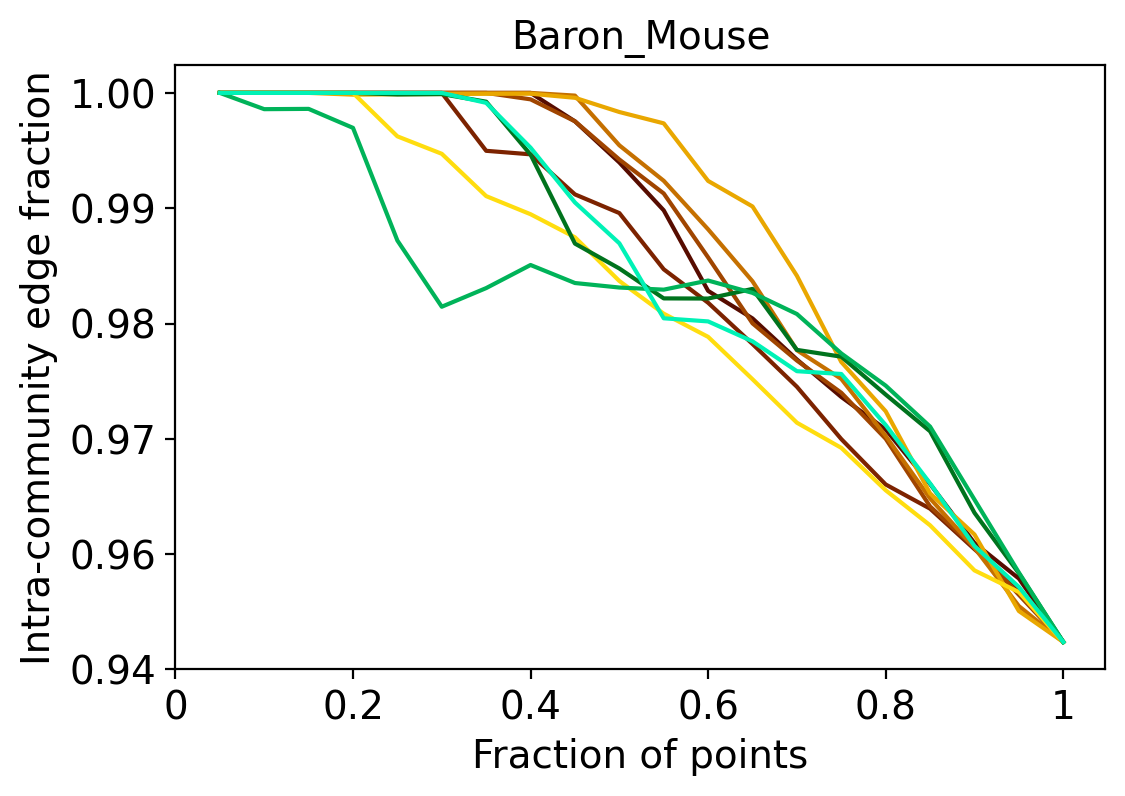}}
\subfigure{\includegraphics[scale=0.45]{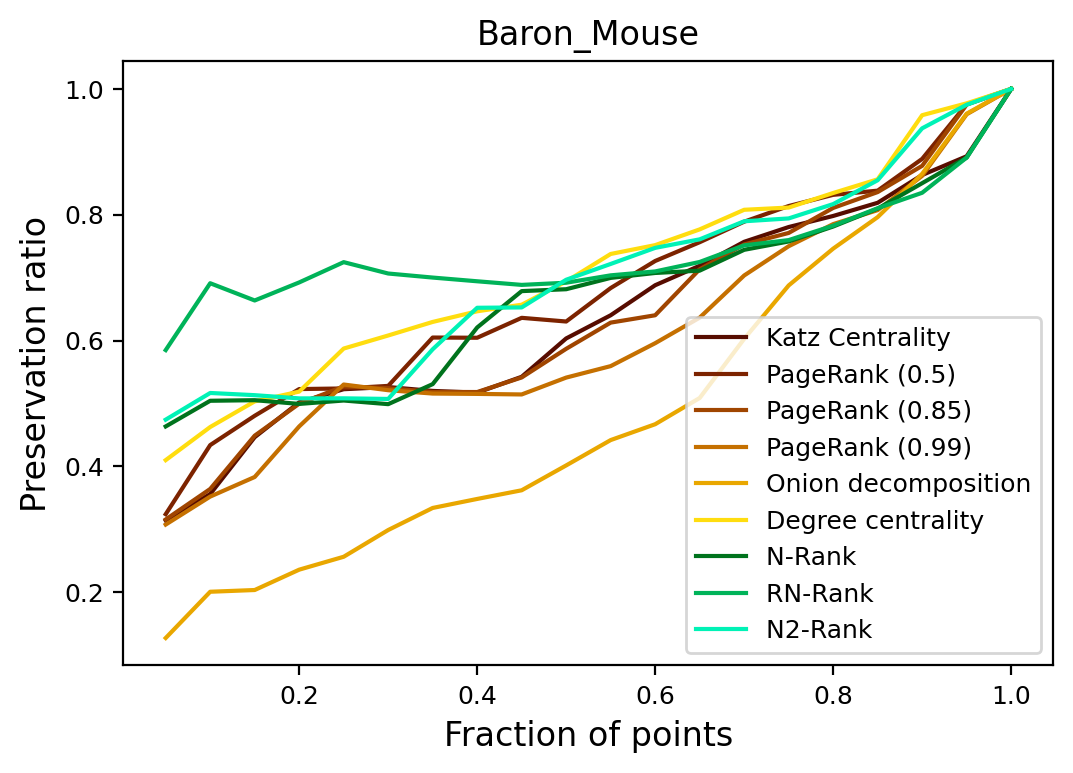}}
    \caption{Baron Mouse dataset}
    \label{fig:scRNA-2}
\end{figure}

\begin{figure}[H]
    \centering
\subfigure{\includegraphics[scale=0.45]{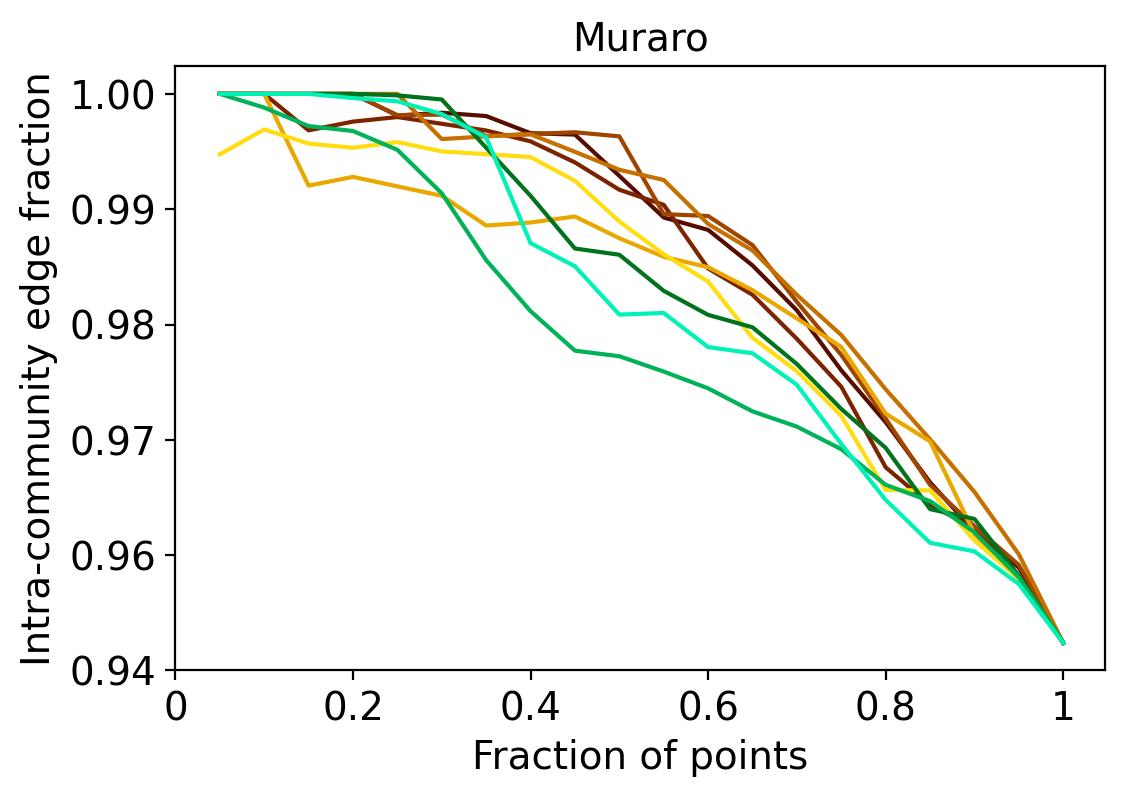}}
\subfigure{\includegraphics[scale=0.45]{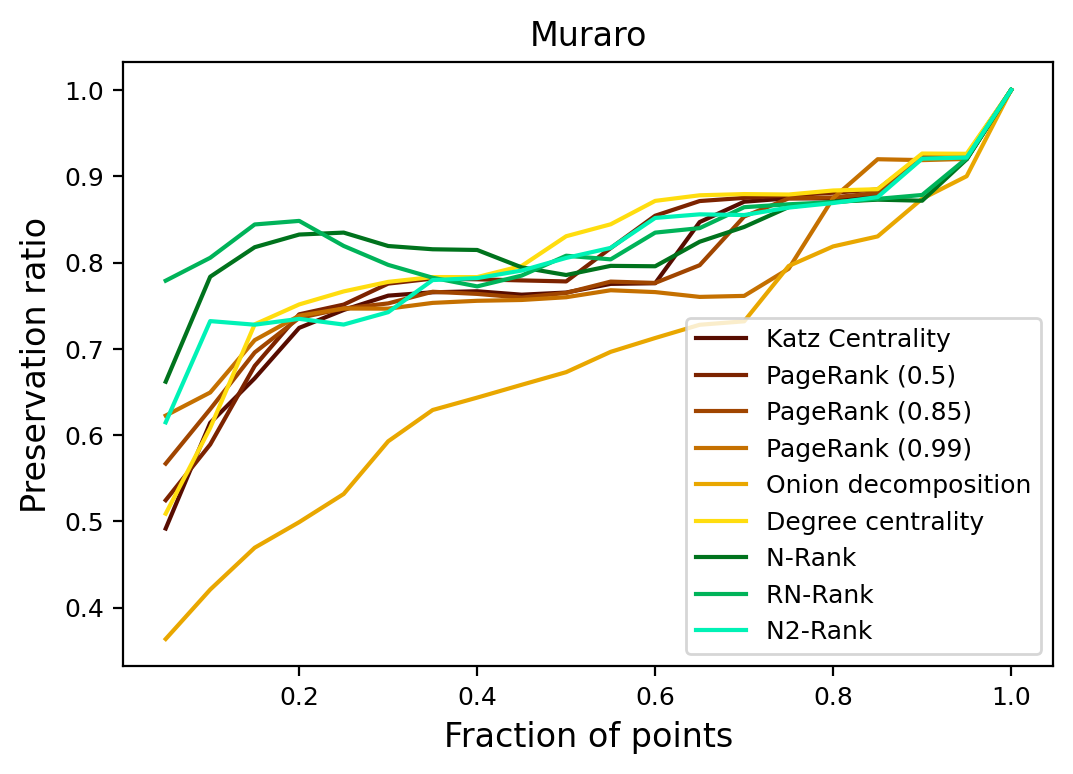}}
    \caption{Muraro dataset}
    \label{fig:scRNA-3}
\end{figure}

\begin{figure}[H]
    \centering
\subfigure{\includegraphics[scale=0.45]{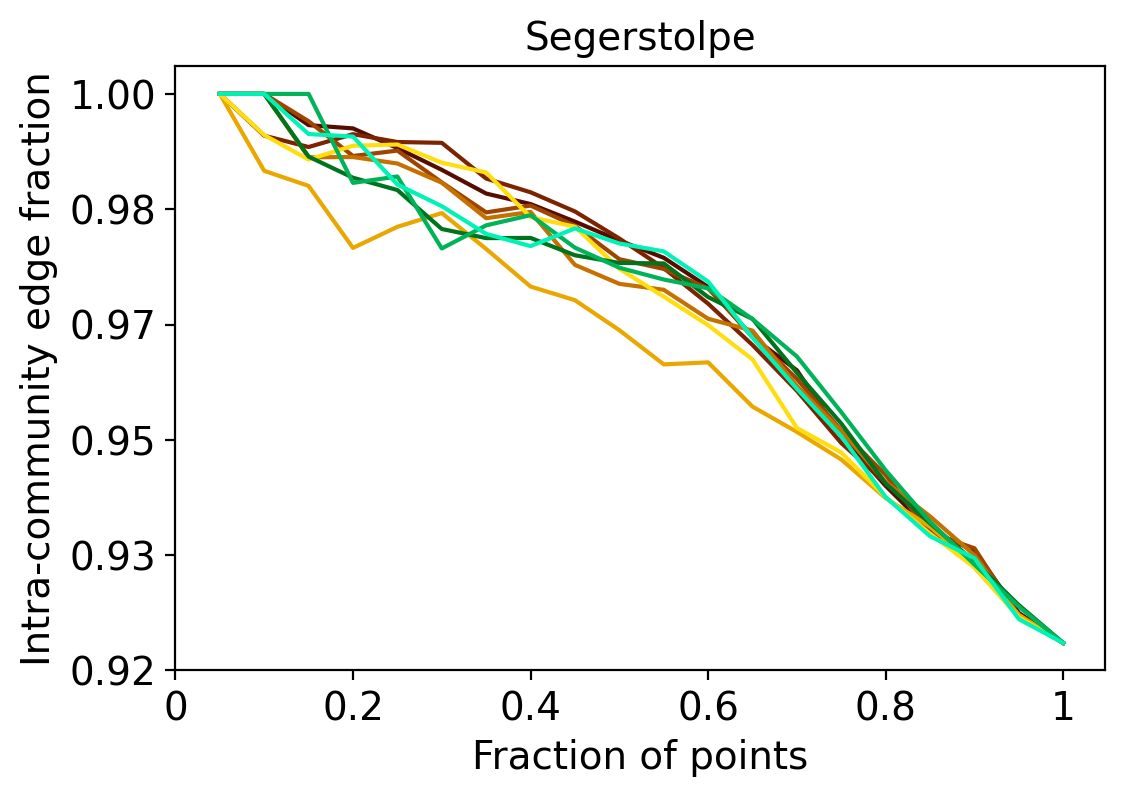}}
\subfigure{\includegraphics[scale=0.45]{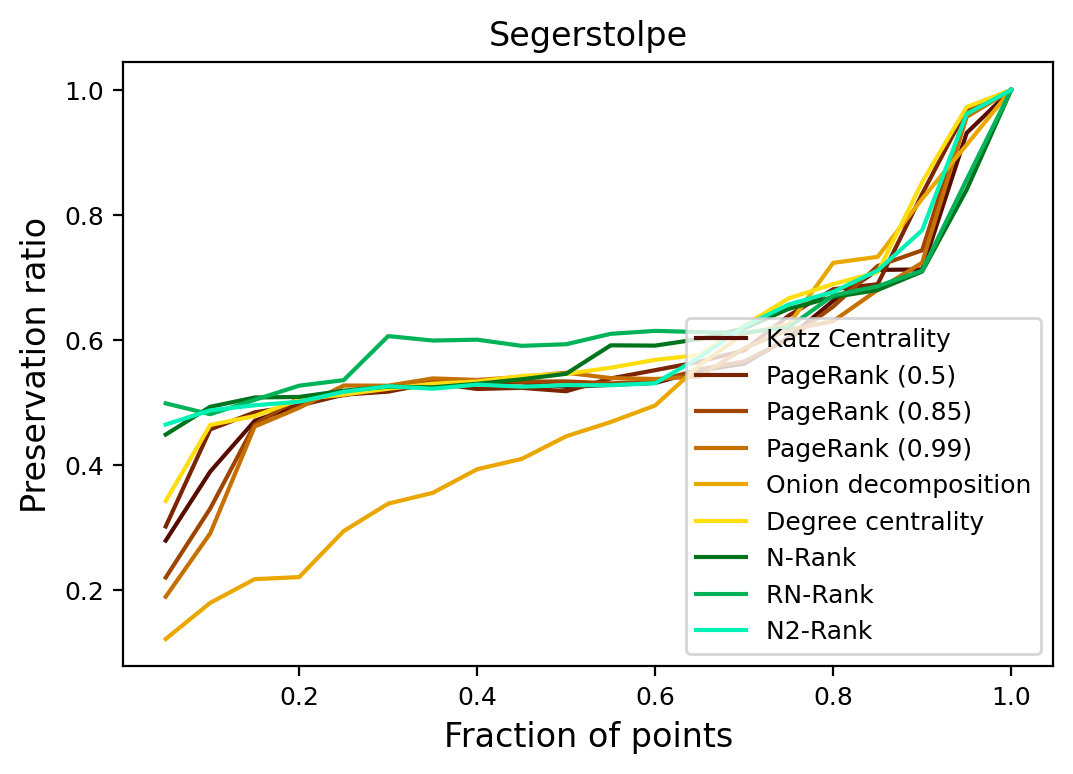}}
    \caption{Segerstolpe dataset}
    \label{fig:scRNA-4}
\end{figure}

\begin{figure}[H]
    \centering
\subfigure{\includegraphics[scale=0.45]{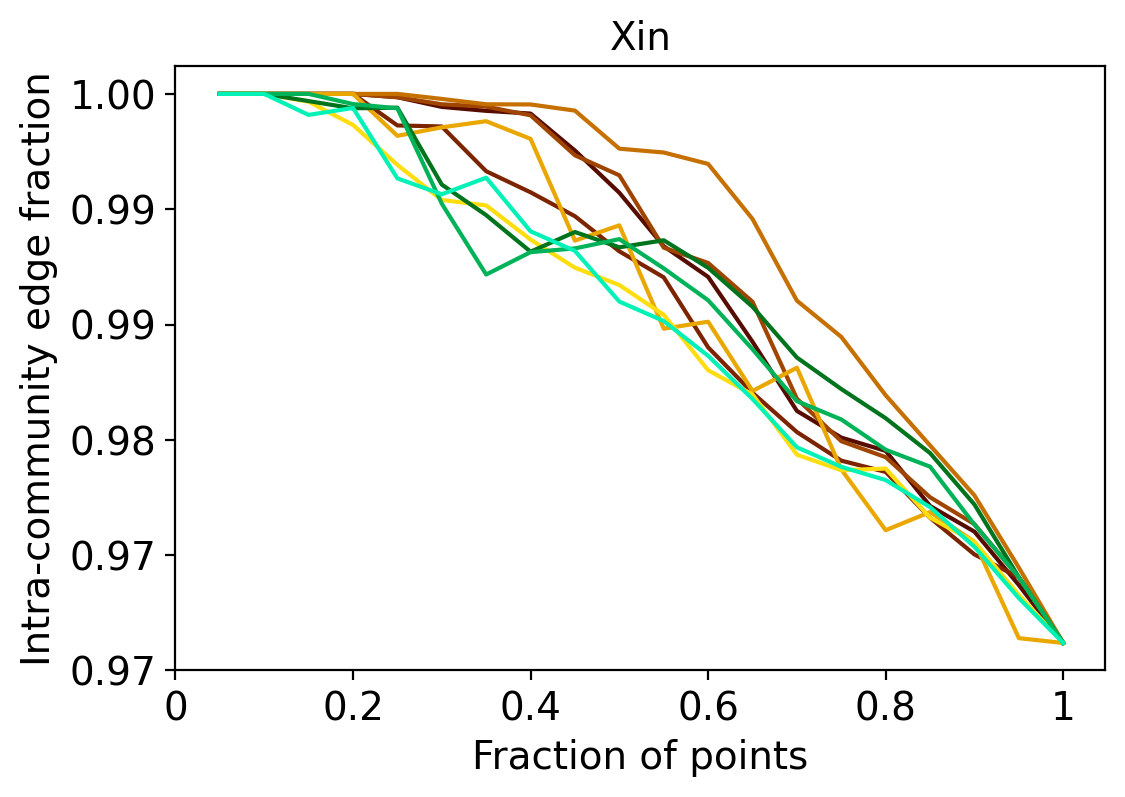}}
\subfigure{\includegraphics[scale=0.45]{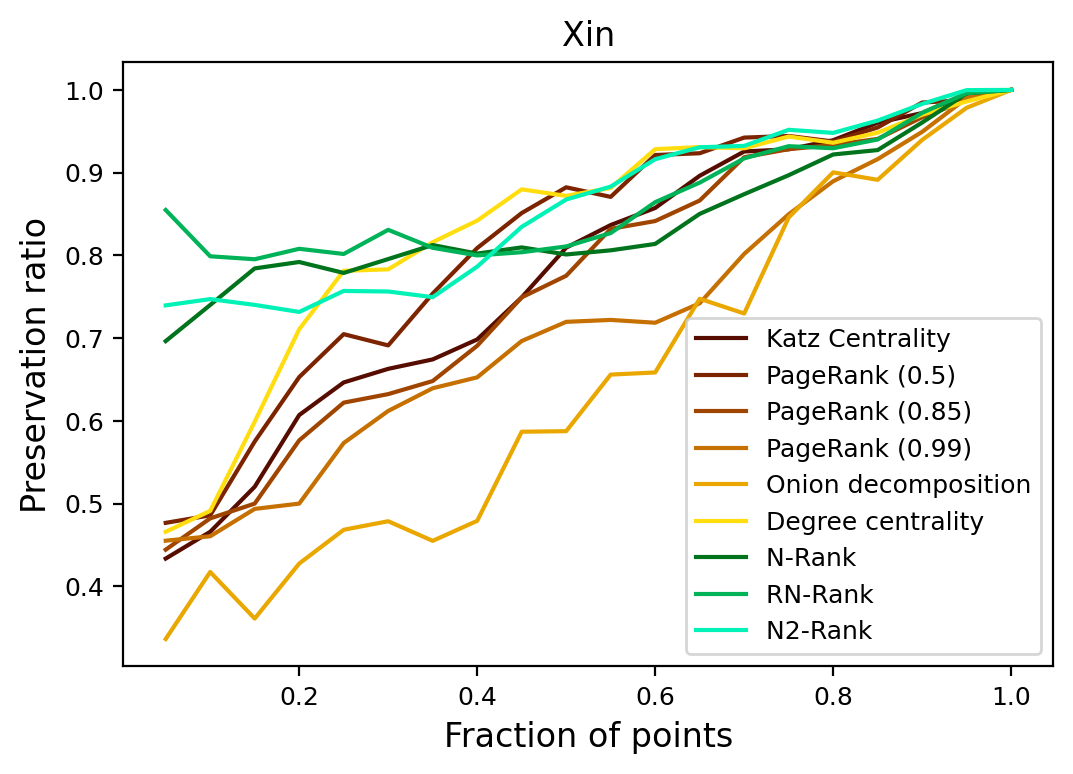}}
    \caption{Xin dataset}
    \label{fig:scRNA-5}
\end{figure}

\begin{figure}[H]
    \centering
\subfigure{\includegraphics[scale=0.45]{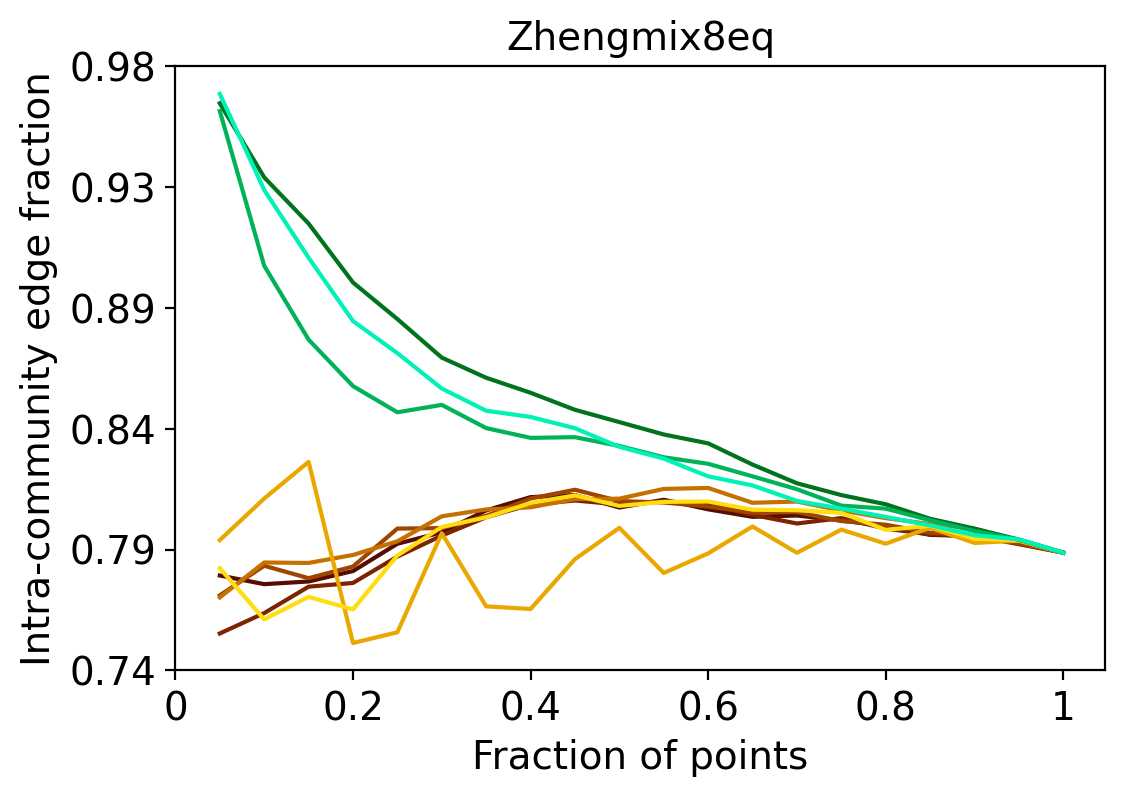}}
\subfigure{\includegraphics[scale=0.45]{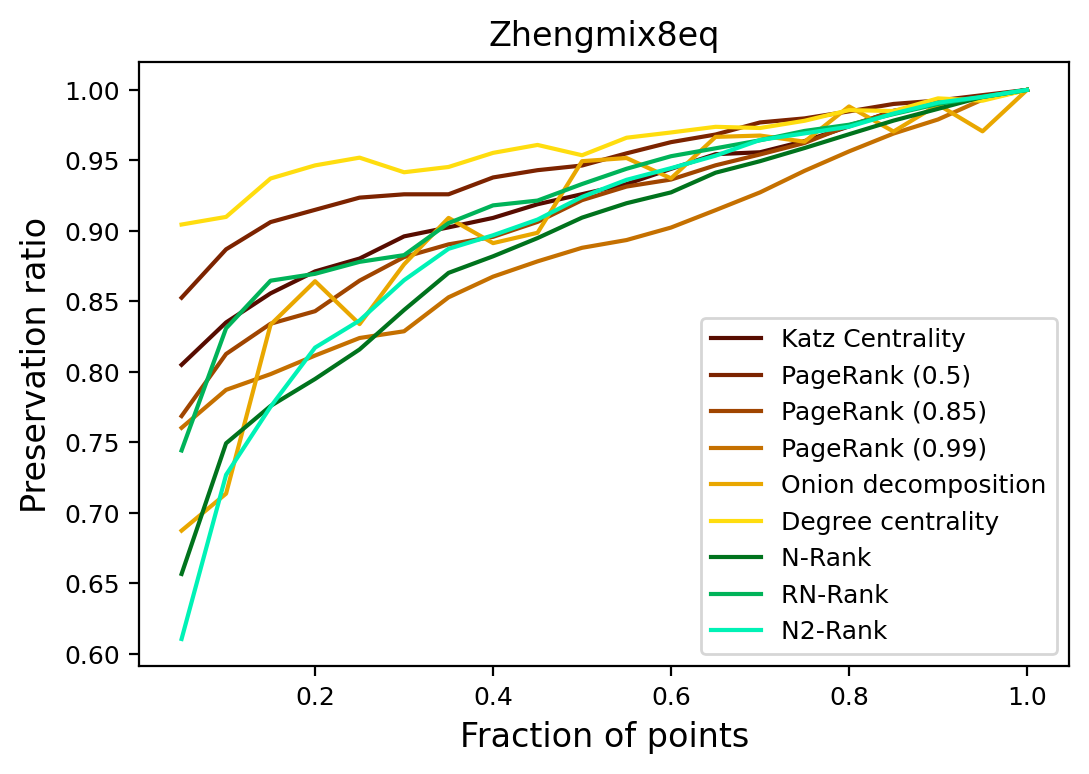}}
    \caption{Zhengmix8eq dataset}
    \label{fig:scRNA-6}
\end{figure}

\begin{figure}[H]
    \centering
\subfigure{\includegraphics[scale=0.45]{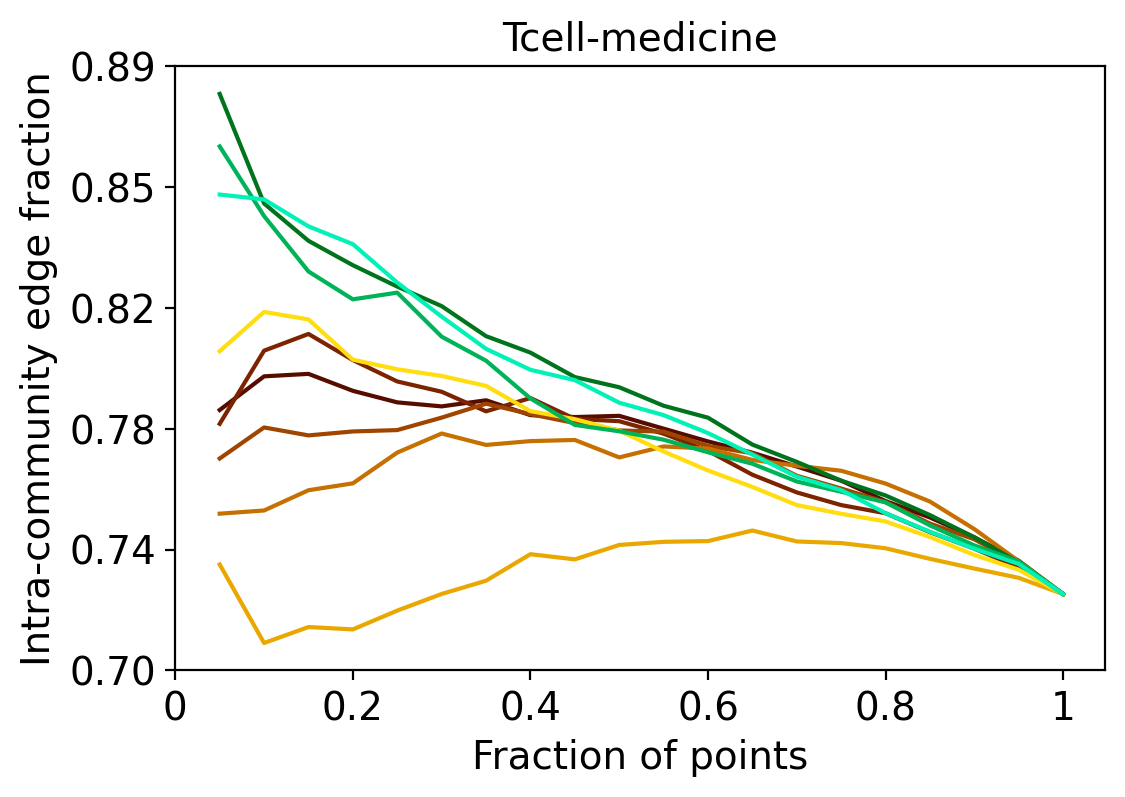}}
\subfigure{\includegraphics[scale=0.45]{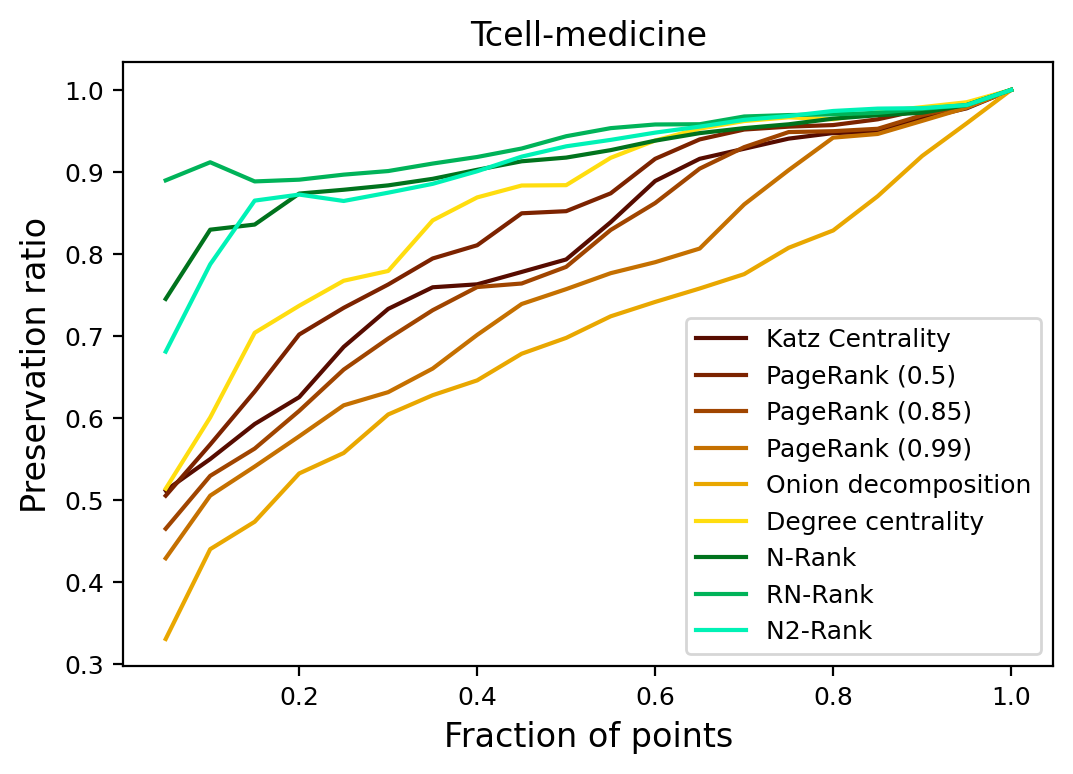}}
    \caption{Tcell dataset}
    \label{fig:scRNA-7}
\end{figure}

\begin{figure}[H]
    \centering
\subfigure{\includegraphics[scale=0.45]{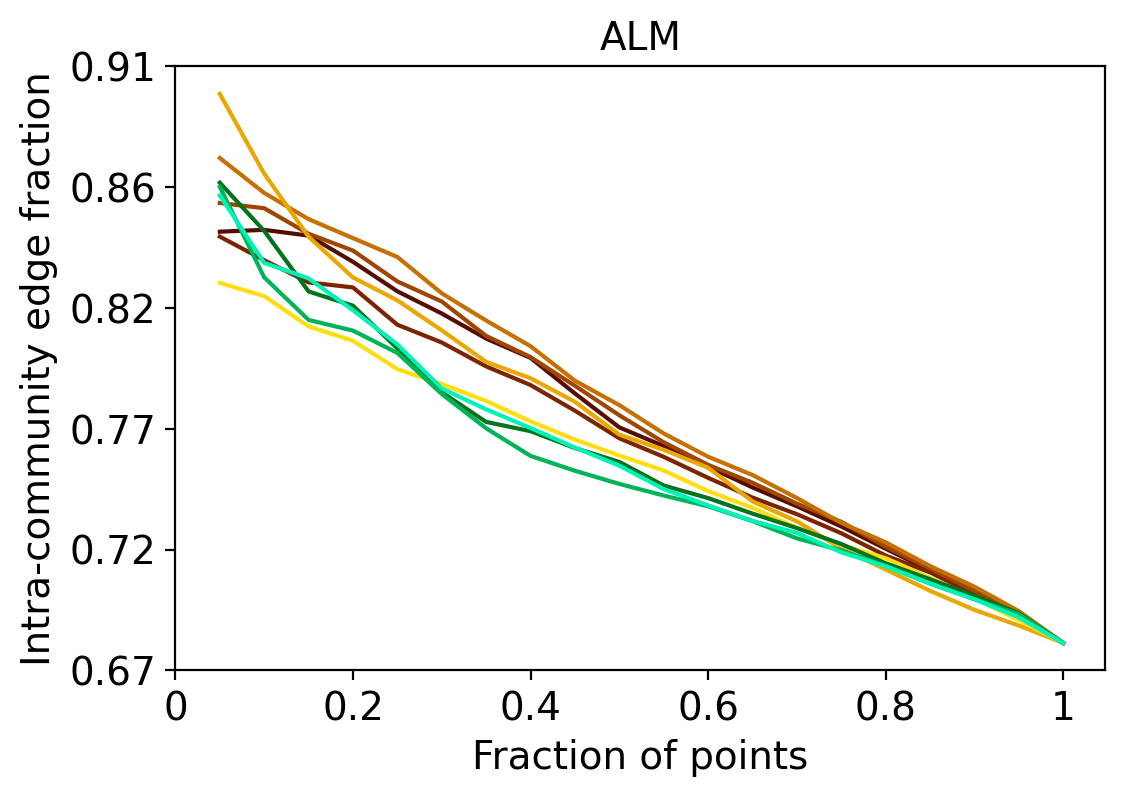}}
\subfigure{\includegraphics[scale=0.45]{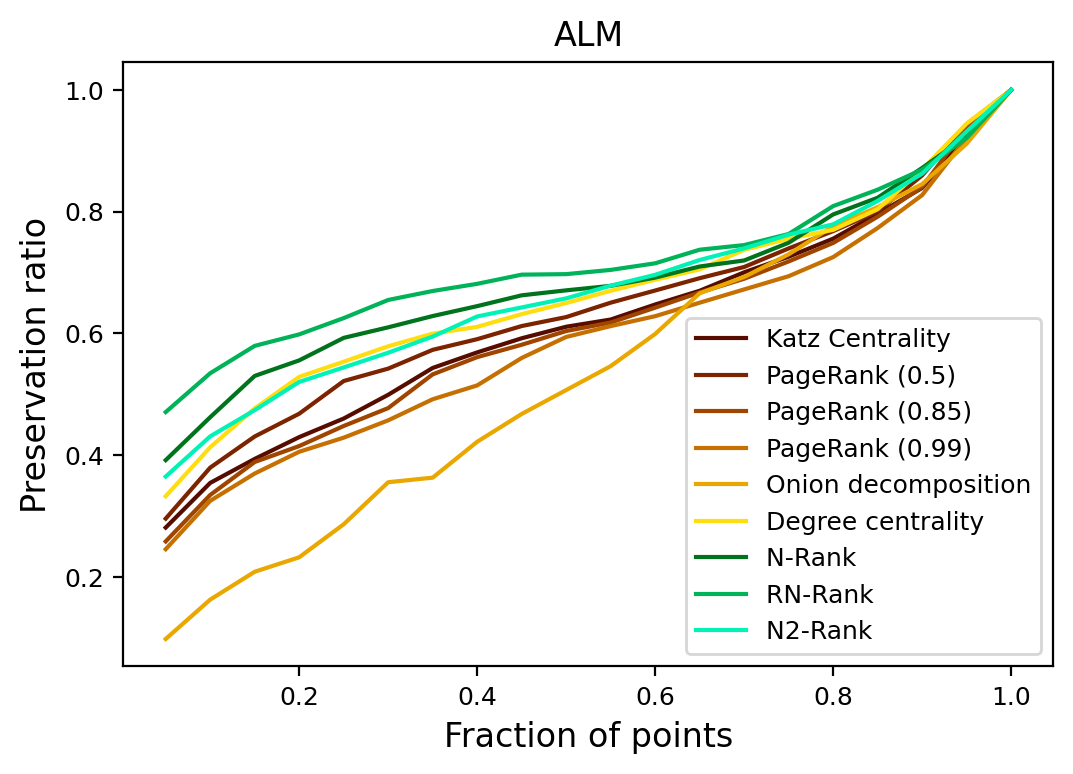}}
    \caption{ALM dataset}
    \label{fig:scRNA-8}
\end{figure}

\begin{figure}[H]
    \centering
\subfigure{\includegraphics[scale=0.45]{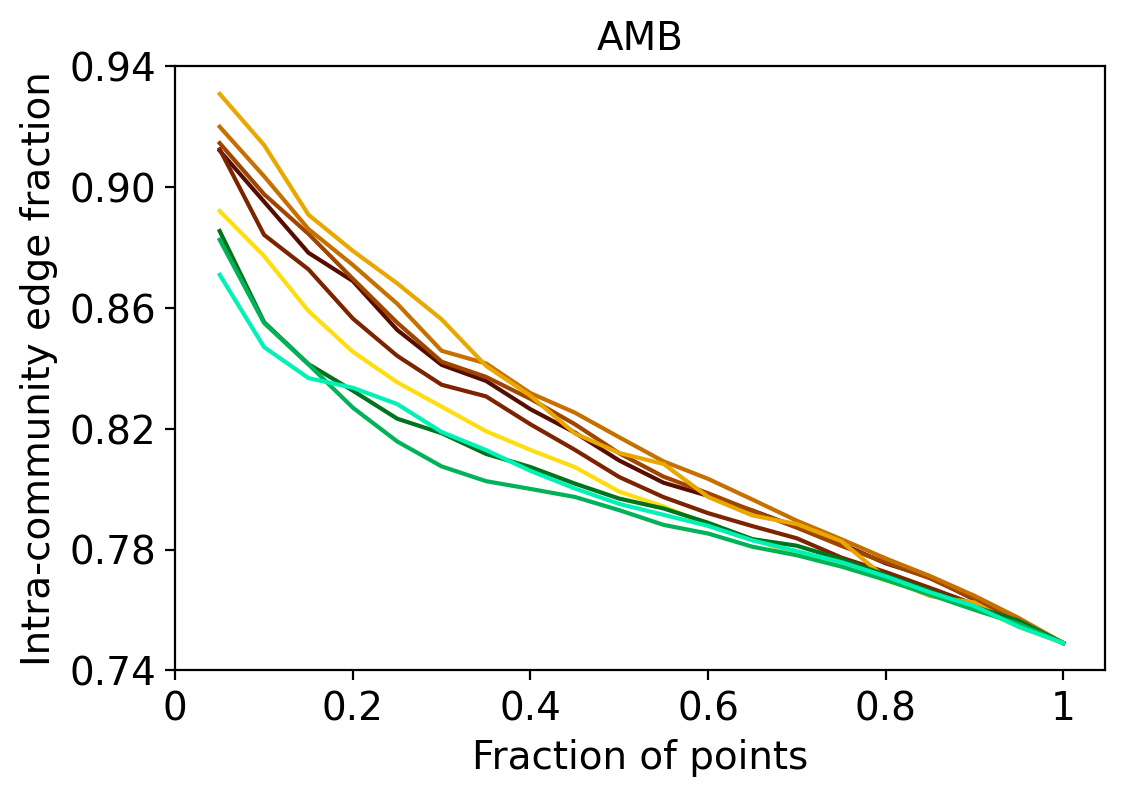}}
\subfigure{\includegraphics[scale=0.45]{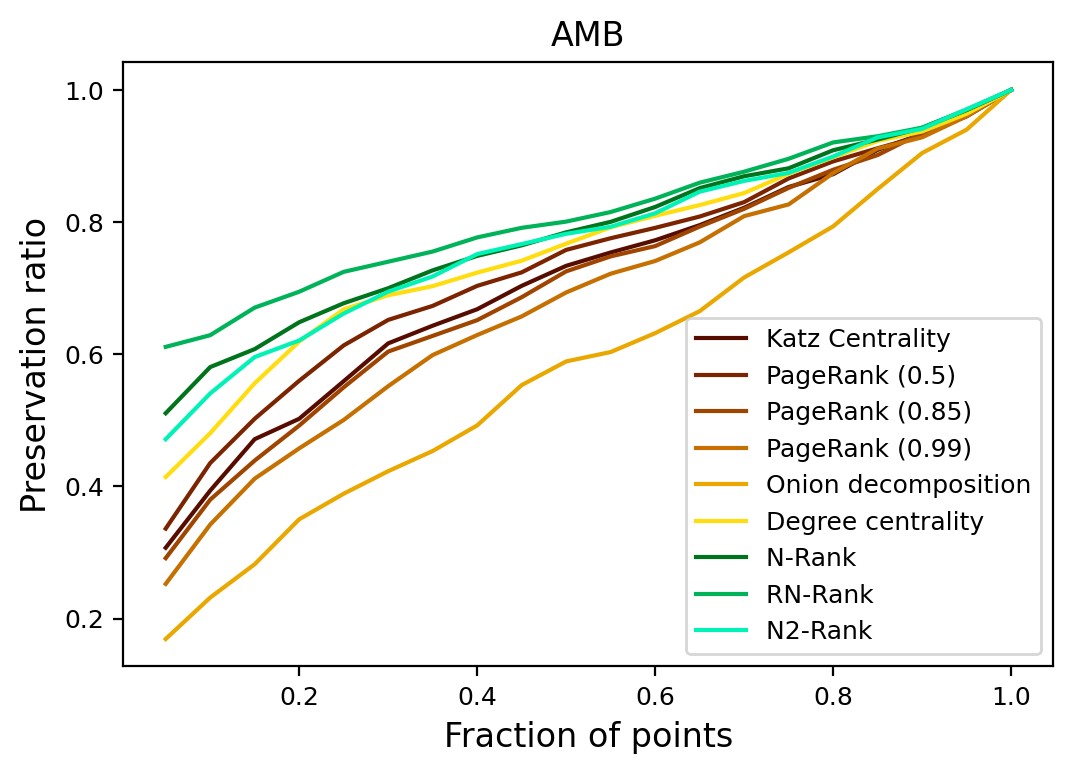}}
    \caption{AMB dataset}
    \label{fig:scRNA-9}
\end{figure}

\begin{figure}[H]
    \centering
\subfigure{\includegraphics[scale=0.45]{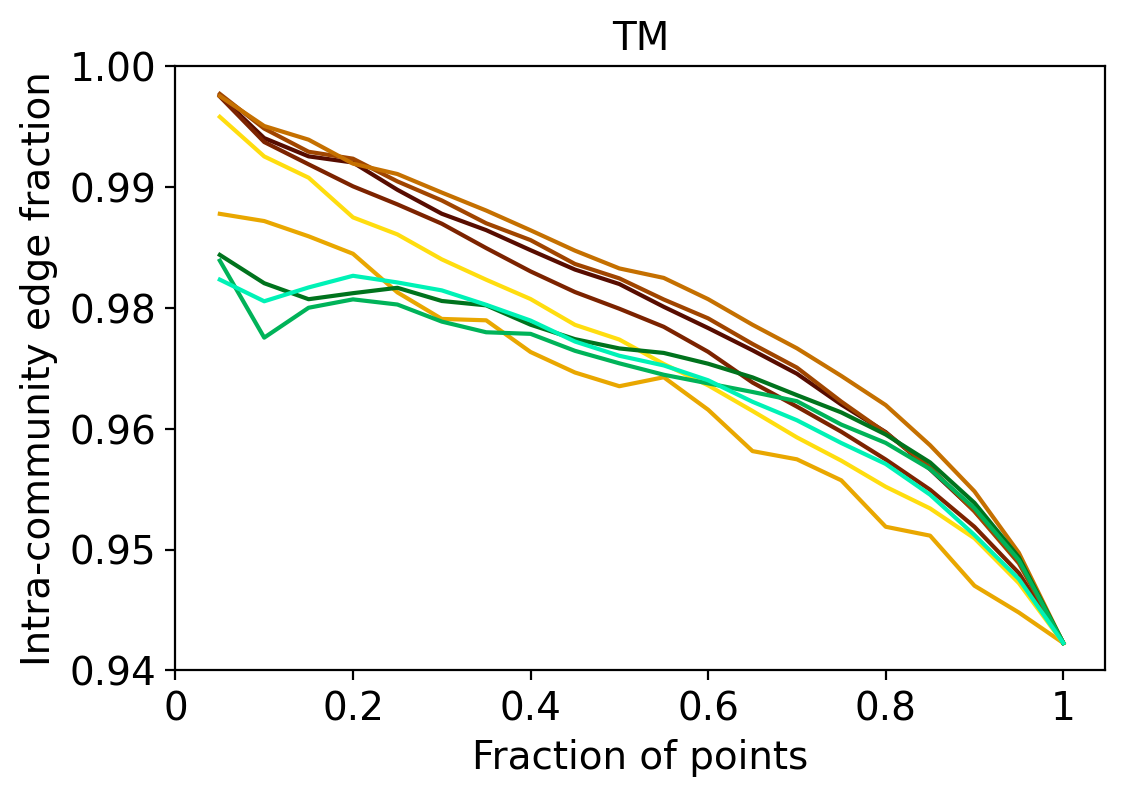}}
\subfigure{\includegraphics[scale=0.45]{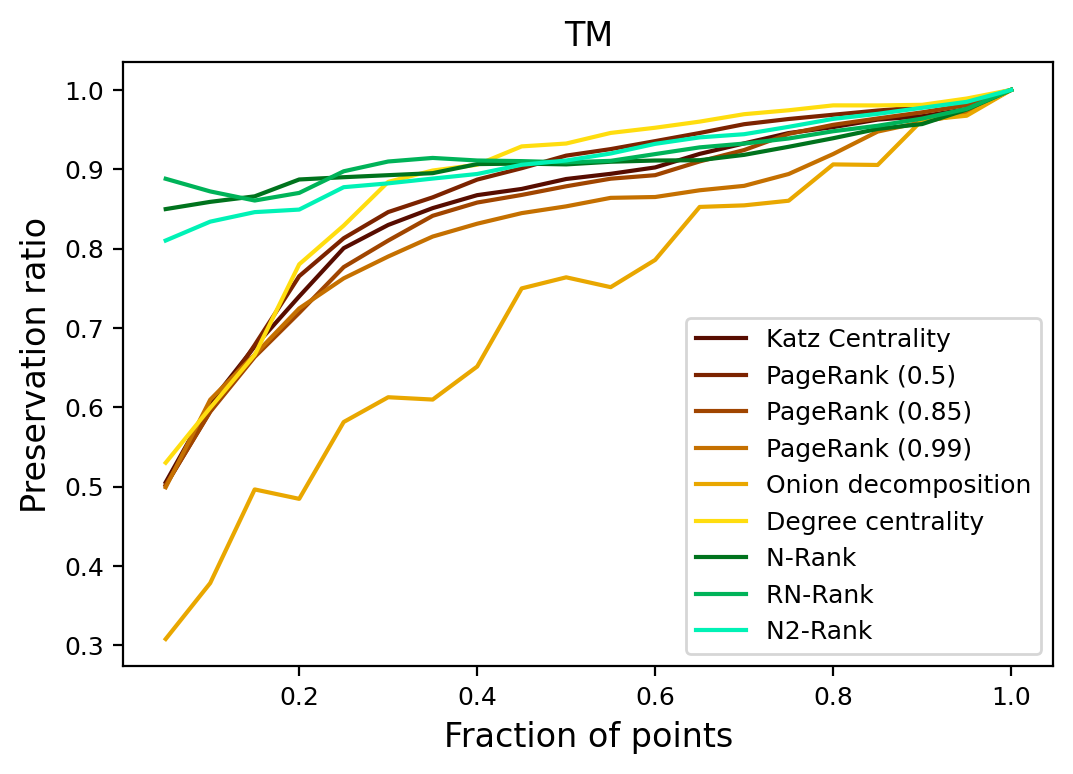}}
    \caption{TM dataset}
    \label{fig:scRNA-10}
\end{figure}

\begin{figure}[H]
    \centering
\subfigure{\includegraphics[scale=0.45]{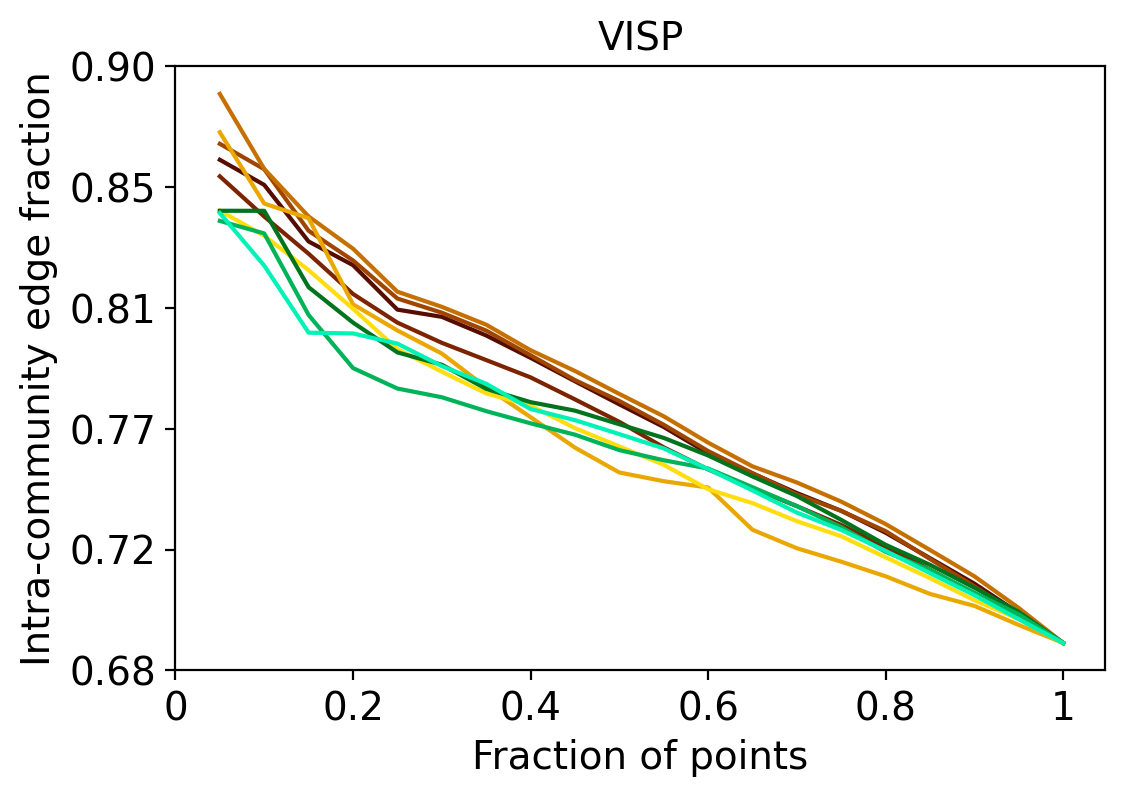}}
\subfigure{\includegraphics[scale=0.45]{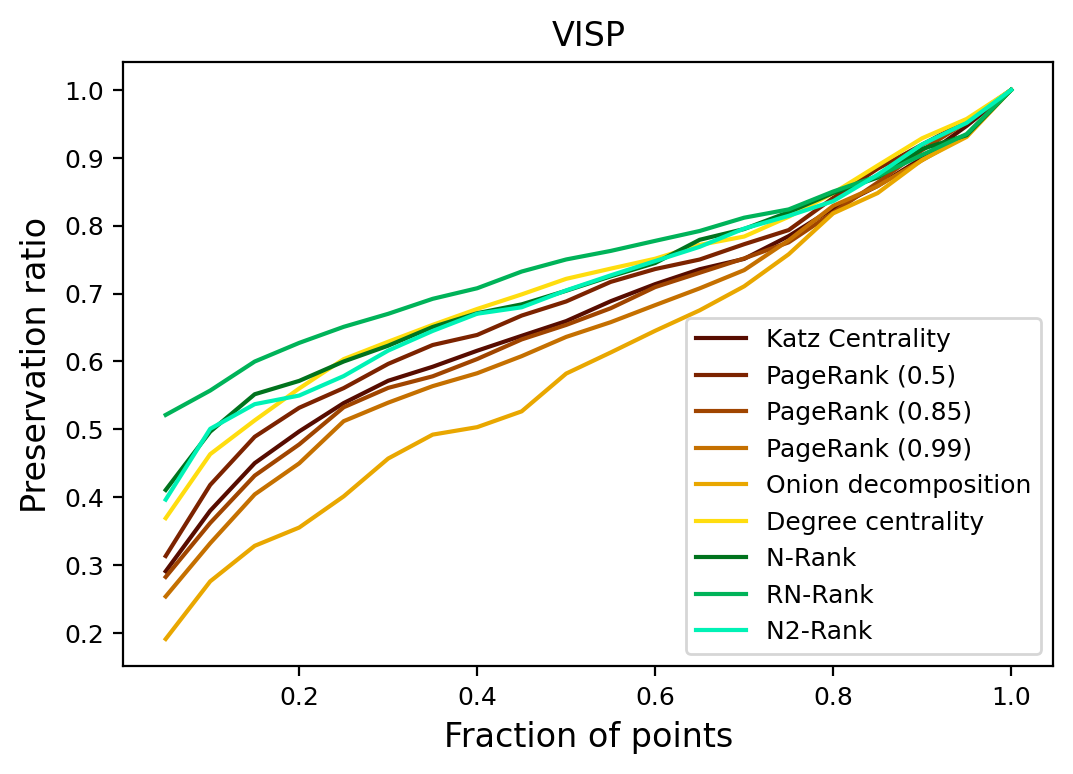}}
    \caption{VISP dataset}
    \label{fig:scRNA-11}
\end{figure}

\end{document}